\title{Beyond Tikhonov: Faster Learning with Self-Concordant Losses via Iterative Regularization}
\author{%
        Gaspard Beugnot \\
        Inria\thanks{Inria, \'Ecole normale sup\'erieure, CNRS, PSL Research University, 75005 Paris, France} \\
        \hspace*{-0.2cm}\texttt{gaspard.beugnot@inria.fr} 
        \And
        Julien Mairal \\
        Inria\thanks{Inria, Univ. Grenoble Alpes, CNRS, Grenoble INP, LJK, 38000 Grenoble, France} \\
        \texttt{julien.mairal@inria.fr} 
        \And
        Alessandro Rudi \\
        Inria\footnotemark[1] \\
        \hspace*{-0.1cm}\texttt{alessandro.rudi@inria.fr}
        }
\begin{document}

{\hypersetup{hidelinks} 
\maketitle
}
\begin{abstract}
The theory of spectral filtering is a remarkable tool to understand the statistical properties of learning with kernels. For least squares, it allows to derive various regularization schemes that yield faster convergence rates of the excess risk than with Tikhonov regularization. This is typically achieved by leveraging classical assumptions called source and capacity conditions, which characterize the difficulty of the learning task.
In order to understand estimators derived from other loss functions, Marteau-Ferey et al. \cite{regularized-erm-gsc} have extended the theory of Tikhonov regularization to generalized self concordant loss functions (GSC), which contain, {\it e.g.}, the logistic loss.
In this paper, we go a step further and show that fast and optimal rates can be achieved for GSC by using the iterated Tikhonov regularization scheme, which is intrinsically related to the proximal point method in optimization, and overcomes the limitation of the classical Tikhonov regularization.
\end{abstract}

\section{Introduction}

We consider the problem of supervised learning where we want to find a prediction function~$\theta$ mapping an input point $x$ living in a set~$\XX$ to a  label $y$ in $\YY$.  In this paper, we assume that $\theta$ lives in a separable Hilbert space~$\HH$ and is learned from a set of observations $(x_i,y_i)_{i=1,\ldots,n}$ that are i.i.d. samples drawn from an unknown probability distribution $\rho$ on $\XY$. The goal is to find $\theta$ that minimizes the expected risk~$L$, which is defined below along with the empirical risk $\hat{L}$:
\begin{equation}\label{eq:risk_and_empirical_risk}
    \lossL{}{}{\theta} = \int_{\XX \times \YY} \ell(y, \theta(x)) \dd \rho(x, y), \qquad \lossLEmp{}{}{\theta} = \frac{1}{n} \sum_{i=1}^n \ell(y_i, \theta(x_i)),
\end{equation}
where $\ell$ is a suitable loss function comparing true labels with predictions.
This paper aims for upper bounds on the excess risk for a specific estimator $\widehat{\theta}$. That is, we assume that the minimum of the expected risk is attained for some $\optimum$ in $\HH$, and we want to derive \textit{probabilistic upper bounds on the excess risk}: 
\begin{equation}\label{eq:upper_bound_we_seek}
    \PP\csb{\lossL{}{}{\estEmp{}{}} - \lossL{}{}{\optimum} > C_1 n^{-\gamma} \, \log\tfrac{2}{\delta}} \leq \delta,
\end{equation}
given some value $\delta$ in $(0, 1)$, where $C_1$ is a positive constant, and $\estEmp{}{}$ is an estimator built from the $n$ observations. The quantity $O(n^{-\gamma})$ denotes the rate of convergence of the estimator $\hat{\theta}$. 
A classical ``slow'' rate with $\gamma = 1/2$ is typically achieved by many estimators and is in fact optimal if only mild assumptions are
made about the data distribution~$\rho$. Even though optimal, this rate is nevertheless a worst case and faster rates with $\gamma > 1/2$ can
be achieved both in theory and in practice, by making additional assumptions about the difficulty of the learning task. Originally introduced in the literature of inverse problems, the so-called \emph{source} and \emph{capacity} conditions have been shown to be appropriate 
for this purpose, leading to statistical analysis with fast rates of convergence~\cite{regularized-erm-gsc,pub.1045202542,optimal_rate_rls}.
The optimality of results of the form~\eqref{eq:upper_bound_we_seek} is characterized
by comparing them with lower bounds that are available for various sets of data distributions~$\rho$~\cite{optimal_rate_rls}. Matching upper bounds with lower bounds ensures that the estimator $\fontsize{9}{0}\widehat{\theta}$ is \textit{optimal}, in the sense that no information is lost in the process of exploiting the data samples to compute $\widehat{\theta}$, for the given set of distributions.

In this search for optimal estimators, most of the attention has been devoted to minimizers of some function of the empirical risk $\lossLEmp{}{}{}$, which is defined in (\cref{eq:risk_and_empirical_risk}). Then, the key challenge is to \textit{regularize} $\lossLEmp{}{}{}$ in order to achieve better generalization properties. The most widely used scheme is probably Tikhonov regularization; other examples when $\HH$ is a RKHS include truncated regression \cite{kernel_algs}, or early stopping in gradient descent algorithms \cite{Yao2007,averyanov2020early}. When the loss $\ell$ is set to least squares, it can be shown that minimizing the excess risk amounts to solving an ill-posed inverse problem \cite{JMLR:v6:devito05a}, which led to the remarkable theory of \textit{spectral filtering}. A large class of regularization schemes can indeed be seen as a filtering process applied to the training labels $y_i$ after regularizing the spectrum of the kernel matrix \cite{pub.1045202542,bauer2007regularization}. Interestingly, this theory has highlighted the fact that not all regularization schemes are equal: some of them obtain fast learning rates in \eqref{eq:upper_bound_we_seek} on ``easy'' problem (a thorough definition is given in \cref{sec:settings}) while others cannot leverage this additional regularity to improve the learning rate.

Such a general analysis for least squares is made possible by the fact that a closed-form expression of the estimator is available. When considering different loss function $\ell$, the estimator $\widehat{\theta}$ is unfortunately only implicitly available as the solution of an optimization problem involving $\lossLEmp{}{}{}$. 
A step to extend least squares results to more general loss functions has been achieved by Marteau-Ferey et al. \cite{regularized-erm-gsc}, who provide bounds on the form \eqref{eq:upper_bound_we_seek} for Tikhonov estimator on generalized self concordant (GSC) functions. GSC functions are three-times-differentiable functions whose third derivative is bounded by the second-derivative. In practice, they were introduced to conduct a general analysis of the Newton method in optimization \cite{boyd_vandenberghe_2004,nesterov1994interior}, and adapted in \cite{10.1214/09-EJS521} to encompass a larger class of loss function. It includes notably the logistic regression loss, which is widely used for classification. 

While Tikhonov yields fast rates of convergence in several data regimes, it is known to be unable to adapt to the whole range of learning task difficulties. More precisely, it suffers from a ``saturation'' effect \cite{pub.1045202542}, meaning that when the learning task becomes simpler, the learning rate stops improving and is suboptimal. Our paper addresses this limitation for GSC functions by considering instead the iterated Tikhonov regularization (IT) scheme. In the context of least squares, this approach consists of successively fitting the residuals. For more general loss functions, it is equivalent to performing a few steps of the proximal point method in optimization~\cite{rockafellarppa}. 
Our main result is a probabilistic upper bound on the excess risk, which is optimal given usual source and capacity conditions assumptions on the learning task, thus addressing the limitations of the classical Tikhonov regularization. 

\section{Background and Preliminaries}
\label{sec:settings}

\subsection{Definitions: Estimator and Loss Function}
Let $\XX$ be a Borel input space, $\YY$ be a vector-valued output spaces, and
$\rho$ a probability distribution on $\XX \times \YY$. We consider $\HH$ to be a separable Hilbert space of functions from $\XX$ to $\YY$.
Given a loss function $\ell: \YY \times \YY \to \RR$, we aim at minimizing the expected loss, while we only have access to the empirical loss -- both are defined in \cref{eq:risk_and_empirical_risk}.
Our work provides an upper bound on the excess risk of the iterated Tikhonov estimator. For the basic case of least squares with $\YY=\RR$, it is usually defined as a procedure that refits the residuals, see, e.g., \S 5.4 in \cite{pub.1045202542}. Starting with {$ \estEmp{\lambda}{0}=0$}, it consists of the sequence
\begin{equation}
    \estEmp{\lambda}{t} = \estEmp{\lambda}{t-1} + \argmin_{\theta \in \HH} \left\{ \frac{1}{n} \sum_{i=1}^n \frac{1}{2} \left(y_i - \estEmp{\lambda}{t-1}(x_i) - \theta(x_i)\right)^2 + \frac{\lambda}{2} \norm{\theta}^2 \right\}.
\end{equation}
To extend this regularization to other loss function, we make the change of variable $\theta' = \estEmp{\lambda}{t-1} + \theta$ in the equation above, which yields the proximal point algorithm~\cite{rockafellarppa}.
\begin{definition}[Iterated Tikhonov estimator a.k.a. proximal point algorithm]\label{def:it_estimator_main}
    We define the iterated Tikhonov estimator with the following sequence. Given $\lambda > 0$ and $\estEmp{\lambda}{0}=0$,
    \begin{equation}\label{eq:it_estimator_definition}
            \estEmp{\lambda}{t+1} = \prox{\lossLEmp{}{}{}/\lambda} (\estEmp{\lambda}{t})  \eqdef \argmin_{\theta \in \HH} \left\{\lossLEmp{}{}{\theta} + \frac{\lambda}{2} \left\|\theta - \estEmp{\lambda}{t}\right\|^{2}\right\},
    \end{equation}
    where $\prox{\lossLEmp{}{}{}/\lambda}$ denotes the proximal operator of the empirical risk $\lossLEmp{}{}{}$ rescaled by $1/\lambda$.
\end{definition}
\begin{remark}
    In practice, the proximal operator is only computed approximately by using an optimization algorithm. Nevertheless, the benefits in terms of statistical accuracy of the iterated Tikhonov scheme are robust to inexact solutions, as long as the accuracy for solving the sub-problems~(\cref{eq:it_estimator_definition}) is high enough. We discuss this point in \cref{subsec:optim}.
\end{remark}
\begin{remark}
    It is easy to show that the sequence of the proximal point algorithm always converges to a minimizer of the unregularized empirical risk, which is of course not what we are interested in. Instead, we consider and analyze the procedure with a fixed small number of steps $t$ and show later that optimal learning rates can be obtained by choosing an appropriate parameter $\lambda$.
\end{remark}
\begin{remark}
When the loss is a function of a residual $y-\theta(x)$---assuming $\YY$ to be a vector space---as in the least square case, we recover the classical definition consisting of refitting the residual, and with $t=1$, we recover Tikhonov. 
\end{remark}

Interestingly, our definition makes the estimator compatible with other loss functions, such as the logistic loss.  
More precisely, the main assumption we make on the loss is to be \textit{generalized self concordant}. We follow the definition of \cite{regularized-erm-gsc}, which is a special case of 2-self concordance introduced in \cite{sun2018generalized}:

\begin{definition}[Generalized self-concordance]\label{def:gsc}
    For any $z = x, y \in \XX \times \YY$, the function $\ell_z: \HH \to \RR$ defined as $\ell_z(\theta)=\ell(y,\theta(x))$ is convex and three times differentiable. Besides, there exists a set $\phi(z) \subseteq \HH$ s.t:
    \begin{equation}\label{eq:def_gsc}
     \forall \theta, h,k \in \HH, \quad \absv{\nabla^3\ell_z(\theta) \csb{h, k, k}} \leq \sup_{g \in \phi(z)} \absv{k \cdot g} \nabla^2 \ell_z (\theta) \csb{k, k}.
    \end{equation}
\end{definition}
The brackets indicate that the vectors $h, k$ and $k$ are applied to the 3-dimensional tensor $\nabla^3\ell_z(\theta)$.
The definition seems technical at first sight, but
intuitively, this assumption allows to upper bound the deviation between the objective function and its local quadratic approximation. This enables a simple analysis of the Newton method for optimization, making it easy to quantify the basin of quadratic convergence \cite{NEURIPS2019_60495b4e}. On top of this, it has the benefit of encompassing a large class of loss functions, such as the logistic loss: see Example 1 in \cite{regularized-erm-gsc} for values of $\phi(z)$ with usual losses. We provide some intuition on GSC loss functions in \cref{rem:intuition_gsc} in \cref{sec:definitions_sec_inexact_solvers}.

In order to ensure the existence of the loss and its derivatives everywhere, we also need the following technical assumptions also introduced in~\cite{regularized-erm-gsc}, which are reasonable in practice. This ensures that both $\lossL{}{}{}$ and $\lossLEmp{}{}{}$ are generalized self concordant too.
\begin{assumption}[Technical assumptions]\label{hyp:technical_assumptions_main}
    There exists $R$ s.t $\sup_{g \in \phi(z)} \norm{g} \leq R$ almost surely for $z$ drawn from the distribution $\rho$ and $\absv{\ell_z(0)}, \norm{\nabla \ell_z(0)}, \Tr \nabla^2 \ell_z(0)$ are almost surely bounded.
\end{assumption}
The following assumption is usual in excess risk analysis \cite{regularized-erm-gsc,NIPS2015_03e0704b}. In our proof strategy, all the quantities are vectors and operators in $\HH$, which makes the analysis simpler. Weakening this assumption (\textit{e.g.} assuming that $\optimum \in \mathcal{L}_2(\XX)$) would require finding an equivalent of the covariance operator for GSC loss function, which constitute an interesting future direction.
\begin{assumption}[Existence of a minimizer]\label{hyp:existence_minimizer}
    There exists $\optimum$ in $\HH$ s.t $\lossL{}{}{\optimum} = \inf_{\theta \in \HH} \lossL{}{}{\theta}$.
\end{assumption}
Finally, following \cite{regularized-erm-gsc} we also
define the \textit{expected Hessian} and the \textit{regularized expected Hessian} as
\begin{equation*}
    \forall \theta \in \HH, \, \lambda > 0, \quad \Hess{}{}{\theta} = \EE_{z \sim \rho} \csb{\nabla^2 \ell_z(\theta)}, \quad \Hess{\lambda}{}{\theta} = \Hess{}{}{\theta} + \lambda\II,
\end{equation*}

and we introduce the degrees of freedom, also known as the effective dimension of the problem: 
\begin{definition}[Degrees of freedom]\label{def:degree_freedom}
    The degrees of freedom is defined as:
    \begin{equation*}
        \forall \lambda, \quad \df_\lambda = \EE_{z \sim \rho} \csb{\norm{\nabla \ell_z(\optimum)}_{\Hess{\lambda}{-1}{\optimum}}^2}.
    \end{equation*}
    where we denote by $\norm{\theta}_A = \norm{A^{1/2}\theta}$, with $\theta \in \HH$, the norm induced by a positive definite operator $A$ on $\HH$.
\end{definition}

\begin{remark}\label{remark:ls}
The intuition about this definition is not straightforward. To better understand why this quantity is a key to characterize the amount of regularization in a learning problem, it is useful to consider the specific case of the square loss with kernels. In such a case, $\HH$ is a reproducing kernel Hilbert space (RKHS) and $\theta(x)= \theta^\top \Phi(x)$, where $\Phi: \XX \to \HH$ is the kernel mapping. Then, the Hessian is constant everywhere and equal to the \textit{covariance operator} $T = \EE_{x \sim \rho_x}\csb{\Phi(x) \otimes \Phi(x)}$ where $\rho_x$ is the marginal of $\rho$. Consequently, the degrees of freedom (also known as {\em effective dimension}) is a spectral function of $T$ which may be written as $\df_\lambda = \Tr T T_\lambda^{-1}$. This is the classical quantity which appears on the bias/variance decomposition of the excess risk, with a variance part decaying in $\Tr T T_\lambda^{-1}/n$, see~\cite{optimal_rate_rls}. 
\end{remark}

\subsection{Source and Capacity Conditions}
We now introduce the hypotheses we make on the learning task, which will allow us to derive fast rates of convergence. They measure the difficulty of the problem and are classical in the context of learning with kernels, see \textit{e.g.} \cite{bauer2007regularization,NIPS2017_61b1fb3f,blanchard}. 
It is indeed established that given an algorithm which outputs an estimator $\estEmp{}{}$, one can find a probability measure $\rho$ s.t the learning rate of the estimator is arbitrarily low, a result known as the ``no-free lunch theorem'' \cite{gyorfi2006distribution}. Inspired by the literature of inverse problems, two assumptions were introduced to restrict the space of considered distributions.
\begin{assumption}[Source condition]\label{hyp:source}
    There exists $r > 0$ and $v$ in $\HH$ s. t: $\optimum = \Hess{}{r}{\optimum} v$.
\end{assumption}
$A \mapsto A^r$ is the usual power for positive definite operators. The source condition should be seen as a smoothness assumption on $\optimum$, and for least square, 
we recover the usual definition of the source condition, that is $\optimum = T^r v$, with $T$ the covariance operator we previously defined. Bigger $r$ implies that the optimum can be well approximated by a few eigenvectors. Assuming $r=0$ simplifies to $\optimum \in \HH$.

The second assumption characterizes the ill-posedness of the problem:
\begin{assumption}[Capacity condition]\label{hyp:capacity}
    There exists $\alpha >1$, $\cstSsmall, \cstS > 0$ s.t ~
    $
    \cstSsmall \lambda^{-1/\alpha} \leq \df_\lambda \leq \cstS \lambda^{-1/\alpha}. 
    $
\end{assumption}
Again, for the square loss, it turns to a bound on the eigenvalue decay of the covariance operator. If $\sigma_j, e_j$ is an eigenbasis of $T$, then $\sigma_j = O(j^{-\alpha})$. Said differently, the bigger $\alpha$, the fewer directions are needed to approximate well a sample $x \sim \rho_x$ in expectation, and the easier is the learning task. This is an assumption on the input space $\XX$ and does not imply anything on the labels $\YY$.

\subsection{Previous Results}

Our main result considers iterated Tikhonov \textit{with} GSC loss functions. While iterated Tikhonov has been previously analyzed for squared loss by leveraging the theory of spectral filtering (see below), extensions to other loss functions raise several difficulties, which will be detailed in \cref{sec:results}.

\paragraph{Spectral filters and least squares.}
As we mentioned earlier, the key insight on regularization with the square loss is that a closed-form expression of the estimator is available. By using the same notation as in \cref{remark:ls}, the kernel ridge regression estimator can be for instance written
\begin{equation}\label{eq:representer_theorem_and_spectral_filter}
    \estEmp{\lambda}{} = \sum_{i=1}^n \beta_i \Phi(x_i) ~~~~\text{with}~~~~ \beta = \frac{1}{n}g_\lambda\p{\frac{K}{n}} y, 
\end{equation}
where $K$ is the $n \times n$ kernel matrix, $y=(y_i)_{1\leq i \leq n}$ is the vector of training labels and $g_\lambda(K/n) = (K/n + \lambda I)^{-1}$. 
Note that $g_\lambda$ is a function acting on the spectrum of $K$, which makes it a special case of regularization by \emph{spectral filtering}, which may be analyzed for more general functions~$g_\lambda$. %
In particular, a key quantity for understanding the regularization effect of  a filter $g_\lambda$ is the so-called \emph{qualification}. Following \cite{pub.1045202542,bauer2007regularization}, this quantity is defined below.
\begin{definition}[Qualification of a spectral filter]
    For any $\lambda>0$, define $g_\lambda: \csb{0, 1} \to \RR$ a filter function. Its qualification is the highest $q$ such that
    \begin{equation}
        \forall \nu \leq q, \quad \sup_{\sigma} \absv{1 - \sigma g_\lambda(\sigma)}\sigma^\nu \leq \omega_\nu \lambda^\nu,
    \end{equation}
    with $\omega_\nu$ a constant independant of $\lambda$. 
\end{definition}
Under the source and capacity conditions, it is possible to show that the resulting estimator would enjoy an optimal rate in $n^{-\frac{\alpha(1+2r)}{1 + \alpha(1+2r)}}$ if $r + 1/2\leq q$ (where $r$ comes from the source condition). When $r +1/2 > q$, the rate is instead of order  $n^{-\frac{\alpha(1+2q)}{1 + \alpha(1+2q)}}$, which is suboptimal, see \textit{e.g.} Thm. 3.4 \cite{blanchard} (set the parameter $s$ to $1/2$). This illustrates the \emph{saturation effect} of some regularization schemes. For example, Tikhonov regularization amounts to filtering with $g_\lambda: \sigma \mapsto (\sigma + \lambda)^{-1}$ and has \emph{qualification} $1$, so the parameter $r$ \textit{saturates} at $r=1/2$. Thus, even if $r \gg 1/2$, the excess risk of $\estEmp{\lambda}{}$ will decay in $n^{-\frac{\alpha}{1 + \alpha}}$, which is suboptimal. Designing estimators with high qualification is key to obtaining fast rates that can adapt to both hard and easy learning tasks. 

\paragraph{Iterated Tikhonov with the Square Loss.}
We can compute the spectral filter function $g_\lambda^t$ corresponding to $t$ iterations of IT, which yields 
\begin{equation}\label{eq:it_spectral_function}
    g_\lambda^t : \sigma \mapsto (\sigma + \lambda)^{-1} \sum_{i=0}^{t-1} \p{\frac{\lambda}{\sigma+\lambda}}^{i} = \sigma^{-1}\p{1 - \p{\frac{\lambda}{\sigma + \lambda}}^{t}}.
\end{equation}
Choosing a fixed $t$ and computing the supremum of $\sigma \mapsto \absv{1 - \sigma g_\lambda(\sigma)}\sigma^\nu$, we find that IT estimator has qualification $t$, which is thus better than Tikhonov. IT has been thoroughly studied in the community of inverse problems, dating back to the work of \cite{doi:10.1080/01630567908816031}. It was naturally transferred to learning with kernels thanks to the aforementioned connection with inverse problems. 

The link we make with the proximal point algorithm has never been studied from a statistical perspective, to the best of our knowledge, even though it has attracted a lot of attention in the optimization literature, notably with accelerated algorithms \cite{lin2018catalyst,JMLR:v21:19-073}, or variants of the proximal operator on a class of self-concordant loss functions \cite{DBLP:conf/nips/CarmonJJJLST20}.
More attention was devoted to \textit{boosting}, where the penalty $\lambda$ is fixed but the number of iterations $t$ may go to infinity, necessitating an appropriate stopping rule \cite{lin2019boosted}. Nevertheless, such a work focuses on the least square loss, where the theory of spectral filter can be applied.
Finally, the proximal sequence in \cref{eq:it_estimator_definition} can be cast as a constrained optimization problem related to sequential greedy approximation \cite{1184144}.

\paragraph{Tikhonov and Generalized Self Concordant losses.}
Extending the results obtained with the square loss to more general losses is challenging since there is no closed form available for the resulting estimator, and the 
theory of spectral filtering does not apply.
Nevertheless, the case of Tikhonov regularization for GSC loss functions was treated in \cite{regularized-erm-gsc}. It is shown that the resulting estimator enjoys optimal rate as long as $r \leq 1/2$, meaning that the saturation of Tikhonov regularization is recovered in those settings. We will extend these results to the IT regularization, showing that an improved qualification can be achieved, leading to fast rates for a larger class of learning tasks.

\section{Main Result}\label{sec:results}
    
Our main result establishes an optimal non-asymptotic bias variance decomposition of the excess risk. It is optimal in the sense that choosing an appropriate regularization parameter $\lambda$ enables to achieve the optimal lower rates of convergence established for least squares.

\begin{theorem}[Optimal rates of IT estimator]\label{th:optimal_rates_main_body}
    Let $\delta \in (0, 1]$, and set $\lambda \in (0, \cstLambd{0})$, $n \geq \cstN{}$. The following bound on the excess risk holds with probability greater than $1-2\delta$: 
    \begin{equation}\label{eq:risk_bias_and_var}
        \lossL{}{}{\estEmp{\lambda}{t}} - \lossL{}{}{\optimum} \leq \cstCBias \lambda^{2s}
        + \cstCVar \frac{\df_\lambda}{n}, \text{ with } s = \min \cb{r+1/2, t}.
    \end{equation}
    If we further assume that the capacity condition holds and that the estimator does not saturate, that is $t \geq r+1/2$, then setting
    \begin{equation}
        \lambda = \cstCRisk \; n^{-\frac{\alpha}{1+\alpha(2r+1)}},
    \end{equation}
    makes the following holds with probability greater than $1-2\delta$: 
    \begin{equation}\label{eq:optimal_rate_main_thm}
        \lossL{}{}{\estEmp{\lambda}{t}} - \lossL{}{}{\optimum} \leq 2 \cstCRisk \; n^{-\frac{\alpha(2r +1)}{1+\alpha(2r+1)}}.
    \end{equation}
    
    The constants $\cstLambd{0}, \cstN{}, \cstCBias, \cstCVar, \cstCRisk$ are detailed in  \cref{th:optimal_rates} in the appendix; they are explicit and depend only on $r,\alpha,\cstS,R,t,\delta$ and the distribution $\rho$.
\end{theorem}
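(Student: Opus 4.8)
The plan is to compare the empirical iterated Tikhonov sequence $\estEmp{\lambda}{t}$ with its deterministic population analogue, defined by $\estTrue{\lambda}{0}=0$ and $\estTrue{\lambda}{u+1}=\prox{\lossL{}{}{}/\lambda}(\estTrue{\lambda}{u})$, and to split the excess risk into a deterministic \emph{bias} and a stochastic \emph{variance} contribution. Because $\ell$ is generalized self-concordant, for every $\theta$ in a suitable neighborhood of $\optimum$ the excess risk $\lossL{}{}{\theta}-\lossL{}{}{\optimum}$ is comparable, up to multiplicative constants controlled by \cref{def:gsc} and \cref{hyp:technical_assumptions_main}, to $\tfrac12\norm{\theta-\optimum}^2_{\Hess{}{}{\optimum}}$; it is therefore enough to control $\norm{\estEmp{\lambda}{t}-\optimum}_{\Hess{}{}{\optimum}}$, which I bound by the triangle inequality as $\norm{\estEmp{\lambda}{t}-\estTrue{\lambda}{t}}_{\Hess{}{}{\optimum}}+\norm{\estTrue{\lambda}{t}-\optimum}_{\Hess{}{}{\optimum}}$, the first term producing the variance and the second the bias. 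Throughout, the hypotheses $\lambda<\cstLambd{0}$ and $n\geq\cstN{}$ are exactly what is needed so that all the iterates lie in the region where these self-concordant comparisons are quantitatively valid.

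\textbf{Bias.} I first analyze the population sequence. Writing the optimality condition $\nabla\lossL{}{}{\estTrue{\lambda}{u+1}}+\lambda(\estTrue{\lambda}{u+1}-\estTrue{\lambda}{u})=0$ and using the integral form $\nabla\lossL{}{}{\theta}=\big(\int_0^1\Hess{}{}{\optimum+w(\theta-\optimum)}\dd w\big)(\theta-\optimum)$ gives a recursion $\estTrue{\lambda}{u+1}-\optimum=\lambda\,\widetilde{\mathbf H}_{\lambda,u}^{-1}(\estTrue{\lambda}{u}-\optimum)$ with $\widetilde{\mathbf H}_{\lambda,u}$ a regularized averaged Hessian spectrally equivalent to $\Hess{\lambda}{}{\optimum}$ on a Dikin-type neighborhood. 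An induction over $u\leq t$ first shows that all the $\estTrue{\lambda}{u}$ stay in that neighborhood, which makes the recursion rigorous; unrolling it then yields $\estTrue{\lambda}{t}-\optimum\approx-\big(\lambda\Hess{\lambda}{-1}{\optimum}\big)^t\optimum$. Substituting the source condition $\optimum=\Hess{}{r}{\optimum}v$ and using that the residual spectral function $\sigma\mapsto\sigma^{r+1/2}\big(\lambda/(\sigma+\lambda)\big)^t$ is bounded, on the (bounded, by \cref{hyp:technical_assumptions_main}) spectrum of $\Hess{}{}{\optimum}$, by $\lambda^{\min\{r+1/2,t\}}$ --- which is exactly the statement that iterated Tikhonov has qualification $t$, as opposed to Tikhonov's qualification $1$ --- gives $\norm{\estTrue{\lambda}{t}-\optimum}_{\Hess{}{}{\optimum}}\lesssim\lambda^{s}$, hence a bias contribution $\cstCBias\lambda^{2s}$ to the excess risk with $s=\min\{r+1/2,t\}$.

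\textbf{Variance.} Next I bound $\norm{\estEmp{\lambda}{t}-\estTrue{\lambda}{t}}_{\Hess{}{}{\optimum}}$ by propagating, over the $t$ proximal steps, the discrepancy between empirical and expected quantities. Subtracting the empirical and population optimality conditions at step $u\mapsto u+1$ and linearizing via the fundamental theorem of calculus yields $\estEmp{\lambda}{u+1}-\estTrue{\lambda}{u+1}=\lambda\,\widehat{\mathbf M}_{\lambda,u}^{-1}(\estEmp{\lambda}{u}-\estTrue{\lambda}{u})-\widehat{\mathbf M}_{\lambda,u}^{-1}\,\xi_{u+1}$, where $\widehat{\mathbf M}_{\lambda,u}$ is a regularized averaged empirical Hessian and $\xi_{u+1}=\nabla\lossLEmp{}{}{\estTrue{\lambda}{u+1}}-\nabla\lossL{}{}{\estTrue{\lambda}{u+1}}$ is a noise term evaluated at a deterministic point. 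Unrolling from $\estEmp{\lambda}{0}-\estTrue{\lambda}{0}=0$ writes $\estEmp{\lambda}{t}-\estTrue{\lambda}{t}$ as a sum of $t$ terms, each a product of contraction factors $\lambda\widehat{\mathbf M}_\lambda^{-1}$ (operator norm $<1$) times $\widehat{\mathbf M}_{\lambda,u}^{-1}\xi_u$. Using $\norm{\Hess{}{1/2}{\optimum}\Hess{\lambda}{-1/2}{\optimum}}\leq 1$ one gets $\norm{\widehat{\mathbf M}_{\lambda,u}^{-1}\xi_u}_{\Hess{}{}{\optimum}}\lesssim\norm{\xi_u}_{\Hess{\lambda}{-1}{\optimum}}$ once $\widehat{\mathbf M}_{\lambda,u}$ is compared to $\Hess{\lambda}{}{\optimum}$; that comparison requires (i) the empirical iterates to stay near $\optimum$, so that by empirical self-concordance $\nabla^2\lossLEmp{}{}{\cdot}$ at those points is equivalent to $\HessEmp{}{}{\optimum}$, and (ii) $\HessEmp{\lambda}{}{\optimum}\preceq 2\Hess{\lambda}{}{\optimum}$, which holds with probability $\geq 1-\delta$ by an operator Bernstein inequality once $n\geq\cstN{}$. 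Finally, reducing $\xi_u$ to the point $\optimum$ (the remainder being lower order, controlled by the bias bound on $\norm{\estTrue{\lambda}{u}-\optimum}$) leaves $\xi_0=\nabla\lossLEmp{}{}{\optimum}-\nabla\lossL{}{}{\optimum}=\tfrac1n\sum_i\nabla\ell_{z_i}(\optimum)$, a sum of independent mean-zero terms (since $\nabla\lossL{}{}{\optimum}=0$) whose variance proxy in the $\Hess{\lambda}{-1}{\optimum}$-norm is exactly $\df_\lambda$ by \cref{def:degree_freedom}; a vector Bernstein bound then gives $\norm{\xi_0}_{\Hess{\lambda}{-1}{\optimum}}^2\lesssim\df_\lambda/n$ with probability $\geq 1-\delta$. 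Summing the (geometric, convergent) series of contraction factors over the $t$ steps yields $\norm{\estEmp{\lambda}{t}-\estTrue{\lambda}{t}}_{\Hess{}{}{\optimum}}^2\lesssim\df_\lambda/n$ on an event of probability $\geq 1-2\delta$.

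\textbf{Conclusion and main obstacle.} Combining the two bounds and converting back from the $\Hess{}{}{\optimum}$-norm to the excess risk via self-concordance establishes \eqref{eq:risk_bias_and_var}. Plugging the capacity condition $\df_\lambda\leq\cstS\lambda^{-1/\alpha}$ into $\cstCBias\lambda^{2s}+\cstCVar\df_\lambda/n$ and, under the no-saturation hypothesis $t\geq r+1/2$ (so $s=r+1/2$), balancing the two terms gives $\lambda\propto n^{-\alpha/(1+\alpha(2r+1))}$ and the rate $n^{-\alpha(2r+1)/(1+\alpha(2r+1))}$. I expect the main difficulty to be the variance propagation across iterations: unlike the square loss, each proximal step is a nonlinear implicit map with a data-dependent Hessian, so one must show that the empirical--population discrepancy does not amplify over the $t$ steps while, simultaneously, keeping every iterate inside the neighborhood where the generalized self-concordant Hessian comparisons are valid --- which forces a single intertwined induction propagating the deterministic bias control and the stochastic error control together.
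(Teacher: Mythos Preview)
Your strategy is sound but takes a genuinely different route from the paper. You split through the \emph{population} iterated proximal sequence $\estTrue{\lambda}{t}$ (prox of $L$, started at $0$), so that your bias term $\norm{\estTrue{\lambda}{t}-\optimum}_{\Hess{}{}{\optimum}}$ is fully deterministic and all randomness is pushed into the variance $\norm{\estEmp{\lambda}{t}-\estTrue{\lambda}{t}}_{\Hess{}{}{\optimum}}$. The paper instead defines the intermediate sequence as the \emph{empirical} prox sequence started at $\optimum$, namely $\estInt{\lambda}{0}=\optimum$ and $\estInt{\lambda}{k+1}=\prox{\lossLEmp{}{}{}/\lambda}(\estInt{\lambda}{k})$. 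With this choice, each of the two terms $\norm{\estEmp{\lambda}{t}-\estInt{\lambda}{t}}$ and $\norm{\estInt{\lambda}{t}-\optimum}$ compares two sequences driven by the \emph{same} prox operator and differing only in their initialization ($0$ versus $\optimum$). Subtracting the optimality conditions then gives, at every step, exactly $\lambda(\text{prev}^{\mathrm a}-\text{prev}^{\mathrm b})$ with no cross gradient-fluctuation term, and the gradient lower bound (Lemma~\ref{lem:stacking_operator_gradient_main_body}) with $A=\lambda^k\HessEmp{\lambda}{-k}{\optimum}$ telescopes cleanly: the ``bias'' collapses to $\lambda^t\HessEmp{\lambda}{-t+1/2}{\optimum}\optimum$, and the ``variance'' involves a single stochastic quantity $\nabla\lossLEmp{}{}{\optimum}$ evaluated once at a deterministic point, whose Bernstein variance is precisely $\df_\lambda$. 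Your decomposition is the more classical one (essentially the one used for Tikhonov in \cite{regularized-erm-gsc}); it trades a purely deterministic bias for a heavier variance analysis, with $t$ noise terms $\xi_u$ at $t$ distinct points and averaged Hessians $\widehat{\mathbf M}_{\lambda,u}$ that mix population and empirical iterates, leading exactly to the intertwined induction you flag at the end. The paper explicitly remarks that it considered a decomposition of this flavor and found the present one to ``greatly simplify the proof''; both routes are workable, but the paper's choice sidesteps most of the complication you anticipate.
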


\paragraph{Optimal rates.}
First, we note that the decay rate of the excess risk is optimal provided $t \geq r + 1/2$. It means that, up to constant factors, no estimators trained on $n$ observations can benefit from a better learning rate (in the worse case sense) with the prior considered on $\rho$, that is source and capacity conditions of parameters $r, \alpha$. This leads to the second point: we see that IT has qualification $q=t$. When $t=1$, this is Tikhonov estimator and we recover the result of \cite{regularized-erm-gsc}. This qualification shows in the bound on the bias: if $r \leq t - 1/2$, the bias is optimal in $\lambda^{2r+1}$; otherwise, it is suboptimal and decays only in $\lambda^{2t}$, which  leads to higher excess risk, hence generalization error. %

\paragraph{Influence of $t$.}
The leading multiplicative constant of the rate $\cstCVar$ in \cref{eq:risk_bias_and_var} depends linearly on the number of steps $t$, as shown in \cref{eq:explicit_constants} in \cref{sec:optimal_rates_it}. Thus, the rate in \cref{eq:optimal_rate_main_thm} is optimal in $n$ when $t=O(r)$. Letting $t$ go to infinity amounts to minimizing the empirical risk, which yield the unregularized estimator: this agrees with our bound on the excess risk, as the constant $\cstCVar$ would go to infinity in that case.

\paragraph{Source and capacity condition.}
The source and capacity conditions enable precise bounds on the bias and the variance, respectively. If they do not hold, the bias can only be bounded by $O(\lambda)$, while we can upper bound the degrees of freedom with $O(1/\lambda)$, leading to slow learning rates. If the source condition holds but the capacity condition does not, we then obtain learning rates in $n^{-2s/(2s+1)}, s = \min \cb{r+1/2, t}$, which are also optimal in these settings.

\textit{Example: a very easy learning task.} Suppose the source condition satisfies $r = 10$ and that the capacity condition does not hold. Then, using Tikhonov estimator \cite{regularized-erm-gsc} amounts to setting $t=1$. The generalization error would then decay as $n^{-2/3}$. On the other hand, using Iterated Tikhonov estimator with $t = 10$ would make the generalization error decay in $n^{-20/21}$, which is much better.

\subsection{Sketch of the proof}

The proof, which is fully detailed in the appendix, has the following outline:
\begin{itemize}
    \item First, we give technical results on generalized self concordant functions;
    \item Then, we define the intermediate quantity in our bias-variance decomposition;
    \item Finally, we proceed to bounding the bias and the variance separately, which plugged together give our bound on the excess risk.
\end{itemize}

To prove the theorem above we build upon the tools from \cite{regularized-erm-gsc} on generalized self concordant functions. The resulting proof covers and simplifies the case of Tikhonov regularization (one step of iterated Tikhonov) and generalizes the rates to $r > 1/2$. We provide also a fine control of the constants, that takes into account the sequential nature of the IT estimator.

\paragraph{Properties of generalized self concordant loss functions}
Here, we report key properties of GSC loss functions, which are covered in depth in \cref{sec:appendix_settings}.
GSC loss functions are convenient to study as they come with a set of bounds on the Hessian, the gradients and the function values. Intuitively, by integrating multiple times the relation between the third and second derivative in the definition from \cref{eq:def_gsc}, one can obtain bounds on function values. To introduce them, we first define the following function:
\begin{equation}
    \forall \theta \in \HH, \quad \tOp(\theta) = \sup_{z \in \Supp \rho} \sup_{g \in \phi(z)} \absv{g \cdot \theta}.
\end{equation}
By integrating three times the bound of the definition, one can show that: 
\begin{equation}\label{eq:prop_gsc_function_values}
    \lossL{}{}{\estEmp{\lambda}{t}} - \lossL{}{}{\optimum} \leq \Psi\p{\tOp(\estEmp{\lambda}{t} - \optimum)} \norm{\estEmp{\lambda}{t} - \optimum}_{\Hess{}{}{\optimum}}^2, \quad \Psi: t \mapsto (e^t - t - 1)/t^2.
\end{equation}
This type of bound first appeared in \cite{10.1214/09-EJS521} and was given in this form in \cite{regularized-erm-gsc}. We report it in \cref{prop:properties_gsc} in the appendix. For instance, when $\ell$ is the square loss, $\tOp = 0$ everywhere and the r.h.s turns to $1/2 \normtxt{\estEmp{\lambda}{t} - \optimum}_{T}^2$, see \cite{blanchard,JMLR:v18:16-335}. 
On top of this, we generalize a lower bound on the gradient: 
\begin{lemma}[Stacking operator on gradient bounds]\label{lem:stacking_operator_gradient_main_body}
    Let $\theta, \nu, \xi \in \HH$, $\lambda > 0$. 
    If $A: \HH \to \HH$ commutes with $\Hess{}{}{\xi}$, the following holds:
    \begin{equation}\label{eq:prop_gsc_gradient}
        e^{-\tOp\p{\theta - \xi}} \phibot\p{\tOp(\nu - \theta)} \norm{A(\nu - \theta)}_{\Hess{\lambda}{}{\xi}} \leq \norm{A (\nabla \lossL{\lambda}{}{\nu} - \nabla \lossL{\lambda}{}{\theta})}_{\Hess{\lambda}{-1}{\xi}},
    \end{equation}
    where $\phibot: t \mapsto (1 - e^{-t})/t$.
\end{lemma}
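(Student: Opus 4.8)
The plan is to represent the gradient difference as an integral of the Hessian along the segment $[\theta,\nu]$, bound the resulting averaged operator from below in the Loewner order using generalized self-concordance, and then conclude using that $A$ commutes with $\Hess{}{}{\xi}$ together with a Cauchy--Schwarz argument in the $\Hess{\lambda}{}{\xi}$-geometry. Concretely, write $u=\nu-\theta$ and $\gamma(s)=\theta+su$. Since $\nabla^2\lossL{\lambda}{}{\cdot}=\Hess{\lambda}{}{\cdot}$ everywhere, the fundamental theorem of calculus gives
\[
\nabla\lossL{\lambda}{}{\nu}-\nabla\lossL{\lambda}{}{\theta}=\left(\int_0^1\Hess{\lambda}{}{\gamma(s)}\,\dd s\right)u=:\bar{\mathbf H}\,u,
\]
with $\bar{\mathbf H}$ positive and self-adjoint. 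The Hessian-comparison inequalities for GSC losses, obtained by integrating \eqref{eq:def_gsc} (see \cref{prop:properties_gsc}), state that $e^{-\tOp(\theta_1-\theta_2)}\Hess{\lambda}{}{\theta_2}\preceq\Hess{\lambda}{}{\theta_1}\preceq e^{\tOp(\theta_1-\theta_2)}\Hess{\lambda}{}{\theta_2}$ for all $\theta_1,\theta_2$ (the additive $\lambda\II$ only helping since $e^{\pm\tOp}\gtrless 1$). Applying the lower bound with $\theta_1=\gamma(s)$, $\theta_2=\theta$ and using $\tOp(su)=s\,\tOp(u)$ yields $\Hess{\lambda}{}{\gamma(s)}\succeq e^{-s\tOp(u)}\Hess{\lambda}{}{\theta}$, hence $\bar{\mathbf H}\succeq\phibot(\tOp(u))\Hess{\lambda}{}{\theta}$; a second application with $\theta_1=\theta$, $\theta_2=\xi$ gives $\Hess{\lambda}{}{\theta}\succeq e^{-\tOp(\theta-\xi)}\Hess{\lambda}{}{\xi}$, so altogether $\bar{\mathbf H}\succeq\cc\,\Hess{\lambda}{}{\xi}$ with $\cc:=e^{-\tOp(\theta-\xi)}\phibot(\tOp(\nu-\theta))$, exactly the prefactor in \eqref{eq:prop_gsc_gradient}.

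It remains to deduce $\norm{A\bar{\mathbf H}u}_{\Hess{\lambda}{-1}{\xi}}\ge\cc\,\norm{Au}_{\Hess{\lambda}{}{\xi}}$. Set $P:=\Hess{\lambda}{}{\xi}$; since $A$ commutes with $\Hess{}{}{\xi}$ it commutes with $P$ and with $P^{\pm1/2}$, so $\norm{Ax}_{P^{-1}}=\norm{AP^{-1/2}x}$ and $\norm{Ax}_P=\norm{AP^{1/2}x}$. Combining this with Cauchy--Schwarz $\dotprod{x}{y}\le\norm{x}_{P^{-1}}\norm{y}_P$ for $x=A\bar{\mathbf H}u$, $y=Au$, it suffices to prove the bilinear estimate $\dotprod{A\bar{\mathbf H}u}{Au}\ge\cc\,\norm{Au}_P^2$. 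For $A=\II$ this is immediate:
\[
\dotprod{\bar{\mathbf H}u}{u}=\int_0^1\norm{u}_{\Hess{\lambda}{}{\gamma(s)}}^2\,\dd s\ \ge\ \int_0^1 e^{-\tOp(\theta-\xi)-s\tOp(u)}\norm{u}_{P}^2\,\dd s\ =\ \cc\,\norm{u}_P^2 .
\]
For general $A$ one would try to run the same computation in the semi-inner product $\dotprod{\cdot}{A^\ast A\,\cdot}$, using that $A^\ast A$ commutes with $P$ and that the pointwise bound $\nabla^2\ell_z(\gamma(s))\succeq e^{-\tOp(\theta-\xi)-s\tOp(u)}\nabla^2\ell_z(\xi)$ holds for every $z\in\Supp\rho$, so that one can keep the feature directions building $\Hess{}{}{\cdot}$ explicit rather than relying only on the aggregated Loewner bound.

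I expect this last step to be the main obstacle. The inequality $\bar{\mathbf H}\succeq\cc\,P$ alone is \emph{not} enough to conclude $\norm{A\bar{\mathbf H}u}_{P^{-1}}\ge\cc\,\norm{Au}_P$ for an arbitrary operator commuting with $P$, since $A$ need not commute with $\bar{\mathbf H}$; so the proof must exploit either the special form of the operators $A$ that actually arise (spectral functions of $\Hess{\lambda}{}{\xi}$) or the fact that each per-sample Hessian is a non-negative reweighting of the \emph{same} rank-one family $\phi_z\otimes\phi_z$, and carry the resulting bookkeeping uniformly over $z\in\Supp\rho$. Once that operator manipulation is pinned down, the two preceding steps are routine, and the stated inequality follows by the Cauchy--Schwarz reduction above.
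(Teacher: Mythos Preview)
Your approach coincides with the paper's: integral representation $\nabla\lossL{\lambda}{}{\nu}-\nabla\lossL{\lambda}{}{\theta}=\bar H\,(\nu-\theta)$ with $\bar H=\int_0^1\Hess{\lambda}{}{\theta+s(\nu-\theta)}\,\dd s$, the GSC Hessian comparison plus the triangle inequality for $\tOp$ to obtain $\bar H\succeq c\,\Hess{\lambda}{}{\xi}$ with $c=e^{-\tOp(\theta-\xi)}\phibot(\tOp(\nu-\theta))$, and finally Cauchy--Schwarz in the $\Hess{\lambda}{}{\xi}$--geometry. The paper's two displayed inequalities are exactly your bilinear lower bound
\[
\dotprod{A^2(\nabla\lossL{\lambda}{}{\nu}-\nabla\lossL{\lambda}{}{\theta})}{\nu-\theta}\ \ge\ c\,\norm{A(\nu-\theta)}_{\Hess{\lambda}{}{\xi}}^2
\]
followed by $\dotprod{A^2 w}{u}\le\norm{Aw}_{\Hess{\lambda}{-1}{\xi}}\norm{Au}_{\Hess{\lambda}{}{\xi}}$, and one divides through.

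The step you single out as the obstacle is precisely where the paper is terse. It passes directly from $\Hess{\lambda}{}{v_s}\succeq c_s\,\Hess{\lambda}{}{\xi}$ to $\dotprod{A^2\Hess{\lambda}{}{v_s}u}{u}\ge c_s\dotprod{\Hess{\lambda}{}{\xi}Au}{Au}$, with the sole justification ``plugging this in and using the fact that $\Hess{}{}{\xi}$ and $A$ commute''. Your caution is warranted: that implication does \emph{not} follow from the Loewner bound together with commutativity of $A$ and $\Hess{\lambda}{}{\xi}$ alone, since $A$ need not commute with $\Hess{\lambda}{}{v_s}$; already in $\RR^2$ with $\Hess{\lambda}{}{\xi}=I$, $A=\mathrm{diag}(2,1)$ and a rotated Hessian one can make the inequality fail. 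The paper does not invoke either of the extra structures you propose (spectral form of $A$, or per-sample rank-one Hessians); it simply asserts the step. So your plan and your diagnosis line up with the paper exactly, and the concern you raise is one the paper's proof does not resolve.
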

Together with \cref{eq:prop_gsc_function_values}, this result is the workhorse of our proof for the upper bound on the excess risk. It is detailed and proven in \cref{sec:technical_lemmas_appendix}. 
\paragraph{Bias-variance decomposition.}
Thanks to \cref{eq:prop_gsc_function_values}, we can relate the excess risk with the distance between estimates. This is why bounding the excess risk amounts to finding a good bias-variance decomposition. Most of the proof we find for the square loss rely on the quantity
\begin{equation}\label{eq:optimal_bv_decomposition_ls}
    \estInt{\lambda}{t} = g_\lambda^t (\hat{T}) \hat{T} \optimum,
\end{equation}
with $\hat{T} = 1/n \sum_i \Phi(x_i) \otimes \Phi(x_i)$ the empirical covariance operator, obtained by replacing $\rho$ with the empirical distribution in \cref{remark:ls}. This is basically the estimator trained on \textit{noiseless empirical data} (\textit{i.e.} using $\optimum(x_i)$ instead of $y_i$) \cite{blanchard, JMLR:v21:20-097,lin2019boosted}. Unfortunately, working with GSC function makes the spectral filtering point of view inapplicable. We need to translate a closed-form expression of the intermediate quantity with filters into the solution of an optimization problem. In our case, we can achieve the optimal bias-variance decomposition with the following quantity:
\begin{equation}\label{eq:optimal_bv_decomposition}
    \begin{aligned}
        \estInt{\lambda}{0} &= \optimum, \\
        \estInt{\lambda}{k+1} &= \prox{\lossLEmp{}{}{}/\lambda} (\estInt{\lambda}{k}), \quad k \geq 0.
    \end{aligned}
\end{equation}
Consequently, we write
\begin{equation}\label{eq:sketch_proof_decomp}
    \normtxt{\estEmp{\lambda}{t} - \optimum}_{\Hess{}{}{\optimum}} \leq \normtxt{\estEmp{\lambda}{t} - \estInt{\lambda}{t}}_{\Hess{}{}{\optimum}} + \normtxt{\estInt{\lambda}{t} - \optimum}_{\Hess{}{}{\optimum}}.
\end{equation}
We recover \cref{eq:optimal_bv_decomposition_ls} with the square loss. In \cite{regularized-erm-gsc}, a different decomposition is used; we found \cref{eq:optimal_bv_decomposition} to greatly simplify the proof. 

\paragraph{Bounding the bias and the variance.}
The first term in \cref{eq:sketch_proof_decomp} is the \textit{bias} of the estimator, as it goes to $0$ when the regularization $\lambda$ goes to $0$. By applying the lower bound on gradient values -- \cref{eq:prop_gsc_gradient} -- with the definition of the proximal operator, one can express $\normtxt{\estEmp{\lambda}{t} - \estInt{\lambda}{t}}$ function of $\normtxt{\estEmp{\lambda}{t-1} - \estInt{\lambda}{t-1}}$. Unfolding the recursion, we obtain \cref{th:bound_on_bias} in the appendix. It shows that the bias decreases in $O\p{\lambda^{r+1/2}}$ if the qualification is sufficient, \textit{i.e.} $t \geq r+1/2$. Otherwise, we recover the saturation experienced with least squares: the bias only decreases in $O\p{\lambda^{t}}$. Specific attention is devoted to bounding the prefactor, which is otherwise difficult to manage. 

The second term in \cref{eq:sketch_proof_decomp} is the \textit{variance}, as it goes to $0$ when the number of samples $n$ increases. \cref{th:bound_on_variance} shows that it decays in $O(\sqrt{\nicefrac{\df_\lambda}{n}})$. It follows closely the work of \cite{blanchard}. However, we cannot use the convenient fact that $\df_\lambda = \Tr \Hess{}{}{\optimum} \HessEmp{\lambda}{-1}{\optimum}$, which is valid for least squares but not in general. Thus, we took specific care in adapting our bounds to the different regimes so as not to impact the learning rate. 

Plugging these results together, we obtain the upper bound on the excess risk. %

\subsection{Optimization}\label{subsec:optim}

The aim of this section is to extend the result of \cref{th:optimal_rates_main_body} to a practical case, where we only have access to an inexact solver for computing the proximal operator. Specifically, let $\epsilon > 0$ be the error (to be defined precisely in \cref{prop:error_propagation_proximal_sequence_main}) made when approximating {$\estEmp{\lambda}{t}$} with {$\estApprox{\lambda}{t}$}, the quantity we compute numerically. We aim for a bound of the type:
\begin{equation*}
    \lossL{}{}{\estApprox{\lambda}{t}} - \lossL{}{}{\optimum} \leq \cstCRisk \; n^{-\frac{\alpha(2r +1)}{1+\alpha(2r+1)}} + \epsilon.
\end{equation*}
The first term in the right hand side is the \emph{statistical error}, and is optimal following the discussion of \cref{th:optimal_rates_main_body}. The second term is the \emph{optimization error}, which is the price to pay for approximating {$\estEmp{\lambda}{k}$} by {$\estApprox{\lambda}{k}$} with tolerance $\epsilon$. The goal is to give a simple optimization rule on the sub-problems to ensure that $\varepsilon$ is of the same order as the upper-bound for the noiseless case.

Assuming that we cannot compute the proximal operator in \cref{eq:it_estimator_definition} exactly, we need to evaluate how the error in approximating {$\estEmp{\lambda}{1}$} propagates to the evaluation of {$\estEmp{\lambda}{2}$}, and so on. As generalized self-concordant functions are well suited to (approximate) second-order optimization scheme, we assume we use a solver with guarantees on a quantity called {\em Newton decrement}, such as the one developed in \cite{NEURIPS2019_60495b4e}. Starting from {$\estEmp{\lambda}{0} = \estApprox{\lambda}{0} = 0$}, define the following for $k>0$:
\begin{align}
    \label{eq:nwt_decrement_theory}
    \estEmp{\lambda}{k} &= \arg\min_{\theta \in \HH} \lossLEmp{\lambda}{k-1}{\theta} \eqdef \lossLEmp{}{}{\theta} + \frac{\lambda}{2} \norm{\theta - \estEmp{\lambda}{k-1}}^2, && \NwtdecTrue{\lambda}{k}{\theta} \eqdef \norm{\nabla \lossLEmp{\lambda}{k-1}{\theta}}_{\HessEmp{\lambda}{-1}{\theta}}, \\
    \label{eq:nwt_decrement_computed}
    \estApprox{\lambda}{k} &\approx \arg\min_{\theta \in \HH} \lossLApprox{\lambda}{k-1}{\theta} \eqdef \lossLEmp{}{}{\theta} + \frac{\lambda}{2} \norm{\theta - \estApprox{\lambda}{k-1}}^2, && \NwtdecApprox{\lambda}{k}{\theta} \eqdef \norm{\nabla \lossLApprox{\lambda}{k-1}{\theta}}_{\HessEmp{\lambda}{-1}{\theta}}.
\end{align}
$\estApprox{\lambda}{k}$ approximates the proximal operator evaluated on $\estApprox{\lambda}{k-1}$, and
$\lossLApprox{\lambda}{t-1}{}$ is the function we manipulate at step $t$. If the optimization was carried without error in \cref{eq:nwt_decrement_computed}, we would have $\lossLApprox{}{t-1}{} = \lossLEmp{}{t-1}{}$.
The quality of the approximation is measured with the Newton decrement of \cref{eq:nwt_decrement_computed}, see \textit{e.g}, Lemma 6 of \cite{NEURIPS2019_60495b4e}.
We need to enforce a bound on the true Newton decrement in \cref{eq:nwt_decrement_theory} when we only have access to $\lossLApprox{\lambda}{t-1}{}$. The next proposition gives a simple rule to achieve this. 

\begin{proposition}[Error propagation with proximal sequence]\label{prop:error_propagation_proximal_sequence_main}
    Let $\epsilon > 0$ the target precision. Assume that we can solve each sub-problem with precision $\bar{\epsilon}_k$:
    \begin{equation*}
        \forall k \in \cb{1, \dots, t}, \quad 
            \NwtdecApprox{\lambda}{k-1}{\estApprox{\lambda}{k}} \leq \bar{\epsilon}_k = \epsilon \frac{1.4^{k-t}}{t},
    \end{equation*}
    and that $\epsilon \leq \sqrt{\lambda}/(2R)$. This suffice to achieve an error $\epsilon$ on the target function:
    \begin{equation*}
        \NwtdecTrue{\lambda}{t-1}{\estApprox{\lambda}{t}} \leq \epsilon.
    \end{equation*}
\end{proposition}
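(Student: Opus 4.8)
The plan is to control how the optimization error accumulates across the $t$ proximal steps. The key observation is that the difference between the ``true'' sub-problem function $\lossLEmp{\lambda}{k-1}{}$ and the ``computed'' one $\lossLApprox{\lambda}{k-1}{}$ is entirely due to the difference between the anchor points $\estEmp{\lambda}{k-1}$ and $\estApprox{\lambda}{k-1}$: indeed $\nabla \lossLEmp{\lambda}{k-1}{\theta} - \nabla \lossLApprox{\lambda}{k-1}{\theta} = \lambda(\estApprox{\lambda}{k-1} - \estEmp{\lambda}{k-1})$, a constant vector. Therefore $\NwtdecTrue{\lambda}{k-1}{\estApprox{\lambda}{k}} \leq \NwtdecApprox{\lambda}{k-1}{\estApprox{\lambda}{k}} + \lambda \norm{\estApprox{\lambda}{k-1} - \estEmp{\lambda}{k-1}}_{\HessEmp{\lambda}{-1}{\estApprox{\lambda}{k}}} \leq \bar\epsilon_k + \sqrt{\lambda}\, \norm{\estApprox{\lambda}{k-1} - \estEmp{\lambda}{k-1}}$, using $\HessEmp{\lambda}{-1}{\cdot} \preceq \lambda^{-1} \II$. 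So the whole argument reduces to propagating a bound $\delta_{k} \eqdef \norm{\estApprox{\lambda}{k} - \estEmp{\lambda}{k}}$ on the iterates themselves.

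First I would establish the one-step recursion for $\delta_k$. Write $\estEmp{\lambda}{k} = \prox{\lossLEmp{}{}{}/\lambda}(\estEmp{\lambda}{k-1})$ exactly, and let $\estInt{}{} \eqdef \prox{\lossLEmp{}{}{}/\lambda}(\estApprox{\lambda}{k-1})$ be the exact prox applied to the computed previous iterate. Since the proximal operator of a convex function is firmly nonexpansive, $\norm{\estInt{}{} - \estEmp{\lambda}{k}} \leq \norm{\estApprox{\lambda}{k-1} - \estEmp{\lambda}{k-1}} = \delta_{k-1}$. It remains to bound $\norm{\estApprox{\lambda}{k} - \estInt{}{}}$, i.e. the error of the inexact solve on the sub-problem $\lossLApprox{\lambda}{k-1}{}$, in terms of its Newton decrement $\NwtdecApprox{\lambda}{k-1}{\estApprox{\lambda}{k}} \leq \bar\epsilon_k$. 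Here is where generalized self-concordance enters: applying \cref{lem:stacking_operator_gradient_main_body} to the GSC function $\lossLApprox{\lambda}{k-1}{}$ (with $A = \II$, $\theta$ the exact minimizer $\estInt{}{}$ at which the gradient vanishes, $\nu = \estApprox{\lambda}{k}$, $\xi = \estApprox{\lambda}{k}$) gives $e^{-\tOp(\estInt{}{} - \estApprox{\lambda}{k})} \phibot(\tOp(\estInt{}{} - \estApprox{\lambda}{k}))\, \norm{\estApprox{\lambda}{k} - \estInt{}{}}_{\HessEmp{\lambda}{}{\estApprox{\lambda}{k}}} \leq \NwtdecApprox{\lambda}{k-1}{\estApprox{\lambda}{k}}$. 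Using $\HessEmp{\lambda}{}{\cdot} \succeq \lambda \II$ and the condition $\epsilon \leq \sqrt{\lambda}/(2R)$ (which, via \cref{hyp:technical_assumptions_main}, controls $\tOp$ of vectors of the relevant size and keeps the prefactor $e^{-\tOp}\phibot(\tOp)$ bounded below by a constant close to $1$), this yields $\norm{\estApprox{\lambda}{k} - \estInt{}{}} \leq c\, \bar\epsilon_k/\sqrt{\lambda}$ for a mild constant $c$. Combining, $\delta_k \leq \delta_{k-1} + c\,\bar\epsilon_k/\sqrt{\lambda}$, and since $\delta_0 = 0$ this unrolls to $\delta_k \leq (c/\sqrt{\lambda}) \sum_{j=1}^{k} \bar\epsilon_j$.

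Next I would plug in the schedule $\bar\epsilon_k = \epsilon\, 1.4^{k-t}/t$ and close the argument. We need $\NwtdecTrue{\lambda}{t-1}{\estApprox{\lambda}{t}} \leq \bar\epsilon_t + \sqrt{\lambda}\,\delta_{t-1} \leq \bar\epsilon_t + c\sum_{j=1}^{t-1}\bar\epsilon_j \leq \frac{\epsilon}{t}\bigl(1.4^{0} + c\sum_{j=1}^{t-1} 1.4^{j-t}\bigr)$. The geometric sum $\sum_{j=1}^{t-1} 1.4^{j-t} = \sum_{i=1}^{t-1} 1.4^{-i} \leq (0.4)^{-1}\cdot 1.4^{-1} = 1/0.56 < 1.8$, so the bracket is at most $1 + 1.8\,c$; choosing the tolerance constants so that this is $\leq t$ — which is automatic once $c$ is pinned down, possibly after tightening the base $1.4$ or absorbing $c$ into the self-concordance bookkeeping — gives $\NwtdecTrue{\lambda}{t-1}{\estApprox{\lambda}{t}} \leq \epsilon$, as claimed.

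The main obstacle I anticipate is the bookkeeping of the self-concordance prefactor $e^{-\tOp}\phibot(\tOp)$ in the second step: one must verify that all the vectors appearing as arguments of $\tOp$ (the solver errors $\estInt{}{} - \estApprox{\lambda}{k}$, and possibly intermediate quantities) have norm small enough — controlled by $\epsilon$ and $R$ through the standing assumption $\epsilon \le \sqrt\lambda/(2R)$ and \cref{hyp:technical_assumptions_main} — that $\tOp$ of them stays $O(1)$, so that the prefactor does not degrade the geometric summation. This is the place where the precise value of the base $1.4$ (rather than any number $>1$) matters, since it must beat the constant $c$ coming from $1/(e^{-\tOp}\phibot(\tOp))$. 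The rest is a routine triangle-inequality-plus-geometric-series computation.
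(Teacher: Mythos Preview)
Your route is genuinely different from the paper's and essentially works, but there is one technical slip and one place where you wave your hands at exactly the moment the constants must be nailed down.

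\textbf{Comparison with the paper.} The paper never introduces the Euclidean distances $\delta_k=\norm{\estApprox{\lambda}{k}-\estEmp{\lambda}{k}}$ nor prox nonexpansiveness. Instead it propagates the \emph{true} Newton decrements $\epsilon_k \eqdef \NwtdecTrue{\lambda}{k-1}{\estApprox{\lambda}{k}}$ directly: from the same gradient-difference identity you use, together with the GSC lower bound on gradients and the localization lemma, it gets the \emph{multiplicative} recursion $\epsilon_k \le \bar\epsilon_k + \kappa_{1/2}\,\epsilon_{k-1}$ with $\kappa_{1/2}=-2\log(1/2)=2\log 2\le 1.4$. Unrolling gives $\epsilon_t \le \sum_{i=1}^t \kappa_{1/2}^{\,t-i}\bar\epsilon_i$, and the schedule $\bar\epsilon_i \propto \kappa_{1/2}^{\,i-t}$ telescopes this \emph{exactly} to $\epsilon$. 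So in the paper the base $1.4$ is not an arbitrary number to be ``tightened'': it is $\kappa_{1/2}$, the growth factor of the recursion, and the schedule is designed to cancel it term by term. Your additive recursion via nonexpansiveness is arguably more elementary, but it obscures why $1.4^{k-t}$ is the right schedule; with your bound one must instead check by hand that $1+c\sum_{j=1}^{t-1}1.4^{j-t}\le t$ for every $t$.

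\textbf{The slip.} In your application of \cref{lem:stacking_operator_gradient_main_body} you take $\theta=\estInt{}{}$ and $\xi=\estApprox{\lambda}{k}$, which produces the prefactor $e^{-x}\phibot(x)$ with $x=\tOp(\estInt{}{}-\estApprox{\lambda}{k})$. The localization step you allude to (using $\tOp(v)\le R\norm{v}$ and $\norm{v}_{\HessEmp{\lambda}{}{\cdot}}\ge\sqrt\lambda\,\norm{v}$) then yields the implicit inequality $e^{-x}(1-e^{-x})\le R\bar\epsilon_k/\sqrt\lambda\le 1/2$, which is \emph{vacuous} (the left side never exceeds $1/4$), so no bound on $x$ follows and your constant $c$ is uncontrolled. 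The fix is trivial: take $\theta=\xi=\estApprox{\lambda}{k}$ and $\nu=\estInt{}{}$ instead, so that $e^{-\tOp(\theta-\xi)}=1$ and the prefactor is just $\phibot(x)$. Then $x\phibot(x)=1-e^{-x}\le R\bar\epsilon_k/\sqrt\lambda\le 1/2$ gives $x\le\log 2$ and $c=\phibot(\log 2)^{-1}=2\log 2=\kappa_{1/2}$, exactly the paper's constant.

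\textbf{The hand-wave.} Your final line ``choosing the tolerance constants so that this is $\le t$ --- which is automatic once $c$ is pinned down'' needs to be done, not asserted. With $c=2\log 2$ one checks $1+c\sum_{i=1}^{t-1}1.4^{-i}\le t$ case by case for small $t$ (the tightest case is $t=2$, where it equals $1+2\log2/1.4<2$) and via the crude bound $\sum_{i\ge1}1.4^{-i}=2.5$ for $t\ge5$; your figure $1.8$ for this sum is a miscomputation. Once these two points are repaired, your argument goes through.
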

This is a specialized version of \cref{prop:error_propagation_proximal_sequence}, whose proof is detailed in the appendix. Intuitively, this means that enforcing a geometrically higher precision on the first steps is sufficient to obtain high precision on the final estimate. 
To compute IT's estimator in practice, one would need to solve $t$ optimization problem with decreasing precision. As second order schemes have double logarithmic complexity w.r.t the precision $\epsilon$, the complexity of computing the proximal sequence of IT with tolerance $\epsilon$ would be only (up to logarithm term) $t$ times bigger than estimating Tikhonov estimator with tolerance $\epsilon$.
In practice, when learning with kernels, one would use the representer theorem and aim at estimating $\beta$ in $\RR^n$ as in \cref{eq:representer_theorem_and_spectral_filter} \cite{scholkopf2001generalized}. This results in an optimization problem with $n$ observations in dimension $n$, with complexity $O(n^3)$.
A practical implementation could use Nystr\"om projection to avoid this cubic computational burden in the number of samples. The statistical effects of such projection are well studied with Tikhonov regularization \cite{NEURIPS2019_60495b4e, NIPS2015_03e0704b}; their effect on other regularization scheme is an interesting future research direction. 

This proposition can be used directly to bound the excess risk with inexact solvers.
\begin{proposition}[Upper bound on the excess risk with inexact solvers]\label{prop:bound_excess_risk_inexact_solver_main}
    Let $\delta \in (0, 1)$ and assume that the \emph{statistical} assumptions of \cref{th:optimal_rates_main_body} hold as well as the \emph{optimization} assumptions of \cref{prop:error_propagation_proximal_sequence_main}.
    Then, the following bound on the excess risk holds with probability greater than $1-2\delta$:
    \begin{equation}\label{eq:excess_risk_inexact_solver}
         \lossL{}{}{\estApprox{\lambda}{t}} - \lossL{}{}{\optimum} \leq 2 \cstCRisk \; n^{-\frac{\alpha(2r +1)}{1+\alpha(2r+1)}} + \cstE{1/2} \, \epsilon, \quad s = \min\cb{r+1/2,t}
    \end{equation}
    with $\cstCRisk$ as in \cref{th:optimal_rates_main_body} and $\cstE{1/2} \leq 4.3 \cdot 10^3$.
\end{proposition}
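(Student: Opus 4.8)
The plan is to compare the computed iterate $\estApprox{\lambda}{t}$ with the \emph{exact} proximal iterate $\estEmp{\lambda}{t}$ of \cref{th:optimal_rates_main_body}, so that the optimization error enters the excess risk only through that comparison. First, \cref{prop:error_propagation_proximal_sequence_main} applies under the stated assumptions on the $\bar{\epsilon}_k$ and on $\epsilon\le\sqrt{\lambda}/(2R)$, and yields the \emph{deterministic} bound $\NwtdecTrue{\lambda}{t-1}{\estApprox{\lambda}{t}}\le\epsilon$: the gradient of the last sub-problem $\lossLEmp{\lambda}{t-1}{}$ at $\estApprox{\lambda}{t}$ is small in the dual norm $\norm{\cdot}_{\HessEmp{\lambda}{-1}{\estApprox{\lambda}{t}}}$, and the exact minimizer of that sub-problem is precisely the $t$-th iterate $\estEmp{\lambda}{t}$ of the proximal sequence controlled by \cref{th:optimal_rates_main_body}. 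Because this step is deterministic given the sample, the probabilistic content is inherited entirely from the good event of \cref{th:optimal_rates_main_body}, so the probability stays $1-2\delta$ rather than degrading to $1-4\delta$.

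Second, I would turn $\NwtdecTrue{\lambda}{t-1}{\estApprox{\lambda}{t}}\le\epsilon$ into a bound on $\norm{\estApprox{\lambda}{t}-\estEmp{\lambda}{t}}_{\HessEmp{\lambda}{}{\estApprox{\lambda}{t}}}$. Applying \cref{lem:stacking_operator_gradient_main_body} to the (GSC) sub-problem $\lossLEmp{\lambda}{t-1}{}$ with $A=\II$, $\xi=\theta=\estApprox{\lambda}{t}$ and $\nu=\estEmp{\lambda}{t}$ (a minimizer, whose sub-problem gradient vanishes) gives $\phibot\!\p{\tOp(\estEmp{\lambda}{t}-\estApprox{\lambda}{t})}\,\norm{\estEmp{\lambda}{t}-\estApprox{\lambda}{t}}_{\HessEmp{\lambda}{}{\estApprox{\lambda}{t}}}\le\epsilon$. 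A short bootstrap closes the loop: $\tOp(\cdot)\le R\norm{\cdot}$ by \cref{hyp:technical_assumptions_main}, $\norm{\cdot}\le\lambda^{-1/2}\norm{\cdot}_{\HessEmp{\lambda}{}{\estApprox{\lambda}{t}}}$ since $\HessEmp{\lambda}{}{\cdot}\succeq\lambda\II$, and $\epsilon\le\sqrt{\lambda}/(2R)$ then pins $\tOp(\estEmp{\lambda}{t}-\estApprox{\lambda}{t})$ below a universal threshold, hence $\phibot(\cdot)$ above a universal constant; this yields $\norm{\estApprox{\lambda}{t}-\estEmp{\lambda}{t}}_{\HessEmp{\lambda}{}{\estApprox{\lambda}{t}}}\le c_0\,\epsilon$ with an explicit $c_0$.

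Third, I would transfer this to the norm $\norm{\cdot}_{\Hess{}{}{\optimum}}$ in which the excess-risk bound lives. Write $\Delta=\norm{\estEmp{\lambda}{t}-\optimum}_{\Hess{}{}{\optimum}}$, which on the good event of \cref{th:optimal_rates_main_body} is controlled by the bias--variance decomposition \cref{eq:sketch_proof_decomp} (bias $O(\lambda^s)$, variance $O(\sqrt{\df_\lambda/n})$), and on which one also has the multiplicative Hessian concentration $\Hess{\lambda}{}{\optimum}\preceq c\,\HessEmp{\lambda}{}{\optimum}$ (valid for $n\ge\cstN{}$). Combined with the self-concordant Hessian comparison $\HessEmp{}{}{\optimum}\preceq e^{\tOp(\optimum-\estApprox{\lambda}{t})}\HessEmp{}{}{\estApprox{\lambda}{t}}$ of \cref{prop:properties_gsc} — where $\tOp(\optimum-\estApprox{\lambda}{t})$ is bounded through $\Delta$, the second step and the triangle inequality — this gives $\norm{\cdot}_{\Hess{}{}{\optimum}}\le c_1\,\norm{\cdot}_{\HessEmp{\lambda}{}{\estApprox{\lambda}{t}}}$ on that event, hence $\norm{\estApprox{\lambda}{t}-\estEmp{\lambda}{t}}_{\Hess{}{}{\optimum}}\le c_0c_1\,\epsilon$, and by the triangle inequality $\norm{\estApprox{\lambda}{t}-\optimum}_{\Hess{}{}{\optimum}}\le\Delta+c_0c_1\,\epsilon$.

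Finally, I would plug this into the GSC function-value bound \cref{eq:prop_gsc_function_values}: since $\tOp(\estApprox{\lambda}{t}-\optimum)$ is universally bounded on the good event, $\Psi\!\p{\tOp(\estApprox{\lambda}{t}-\optimum)}\le\Psi_{\max}$, and expanding $(\Delta+c_0c_1\epsilon)^2$, using $\epsilon^2\le\tfrac{\sqrt{\cstLambd{0}}}{2R}\,\epsilon$ to linearize the $\epsilon$-contribution, bounds the $\Delta^2$ part exactly as in the proof of \cref{th:optimal_rates_main_body} by $2\cstCRisk\,n^{-\alpha(2r+1)/(1+\alpha(2r+1))}$ after the prescribed choice of $\lambda$ and the capacity condition; tracking the remaining universal constants ($c_0,c_1,\Psi_{\max},c$ and the $\epsilon^2\!\to\!\epsilon$ factor) gives $\cstE{1/2}\le 4.3\cdot10^{3}$. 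The main obstacle is the third step: the inexactness is naturally measured by a Newton decrement in the \emph{empirical regularized} Hessian \emph{at the computed point} $\estApprox{\lambda}{t}$, and moving it to the expected-Hessian norm at $\optimum$ without degrading the statistical rate forces one to run the bootstrap together with the self-concordant comparison and the Hessian concentration while keeping every constant explicit, so that the final $\cstE{1/2}$ is a genuine moderate number rather than an uncontrolled one.
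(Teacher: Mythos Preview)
Your proposal is correct and follows essentially the same route as the paper: both convert the Newton-decrement control from \cref{prop:error_propagation_proximal_sequence_main} into a bound on $\normtxt{\estApprox{\lambda}{t}-\estEmp{\lambda}{t}}_{\HessEmp{\lambda}{}{\estApprox{\lambda}{t}}}$ via the GSC gradient lower bound together with a bootstrap on $\tOp$ (the paper phrases this through the Dikin ellipsoid and the crude bound $\rOp{\lambda}{\cdot}\ge\sqrt{\lambda}/R$, which is exactly your inequality $\tOp(\cdot)\le R\lambda^{-1/2}\normtxt{\cdot}_{\HessEmp{\lambda}{}{\estApprox{\lambda}{t}}}$), then transfer to $\normtxt{\cdot}_{\Hess{}{}{\optimum}}$ by combining Hessian concentration with the self-concordant comparison $e^{\tOp}$, and finally plug into the $\Psi$-bound. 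The only cosmetic differences are that the paper splits the square via $(a+b)^2\le 2(a^2+b^2)$ rather than expanding as you do, and that it passes silently from $\epsilon^2$ to $\epsilon$ in the optimization term where you make the linearization $\epsilon^2\le\tfrac{\sqrt{\cstLambd{0}}}{2R}\,\epsilon$ explicit.
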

This is a specialized version of \cref{prop:bound_excess_risk_inexact_solver} proved in the appendix. The first term is the statistical excess risk, whereas the second term in $\epsilon$ is the price we pay for inexact approximation. For the sake of clarity, crude upper bounds were used (notably {$\HessEmp{\lambda}{-1/2}{\cdot} \leq \cstBTwoStr/\sqrt{\lambda}$}) at the expanse of big constants. They can be expected to be an order of magnitude lower in practice.

\paragraph{Setting $t$ in real application.}
In classical machine learning settings, we do not have access to the source condition parameter $r$. The number of proximal steps $t$ can be seen as an hyperparameter, which is chosen by cross-validation. One would run the algorithm and test the resulting error on a validation set for each iteration, and keep doing proximal steps as long as the validation loss improves.

\section{Experiments}
\label{sec:numerics}
The purpose of the experiments is to illustrate the saturation effect of the Tikhonov estimator when $r \gg 1/2$, and see how the saturation is overcome by iterated Tikhonov $\mathsf{IT}$. We also show that the statistical rates we derive are achieved both in theory and in practice on synthetic data with well-controlled source and capacity conditions.

\paragraph{Settings.}
To that end, we use a synthetic binary classification data set for which we know the source and capacity condition parameters $r$ and $\alpha$ by design. Then, we study the performance of $\IT{t}$, $t \in \cb{1, \dots, 8}$, trained with the logistic loss, which satisfies \cref{def:gsc} about generalized self-concordant functions. Related experiments were conducted in the context of kernel ridge regression with synthetic data in~\cite{NIPS2017_61b1fb3f}, which we follow here. Specifically, we use splines of order $\alpha$ to define a kernel matrix: 
\begin{equation*}
    K(x,z)  =  \Lambda_{\alpha}(x, z)  =  \sum_{k \in \ZZ} \frac{e^{2i\pi k (x-z)}}{\absv{k}^\alpha},
\end{equation*}
for which a closed form expression is available as soon as $\alpha$ is a positive even integer (see for instance Eq (2.1.7) in \cite{Wahba90a}). We then use $\XX = \csb{0, 1}$, $\rho_x$ is the uniform distribution, and $\optimum(x) = \Lambda_{(r+1/2)\alpha+1/2}(0, \cdot)$, which may be shown to live in the RKHS~$\HH$ of~$K$. Then, it is possible to show that the source and capacity assumption are satisfied with value $r, \alpha$, see \cite{NIPS2017_61b1fb3f}.

Finally, we design the distribution $\rho_{y|x}$ of the labels such that $\optimum$ is indeed the minimizer of the risk over $\HH$. This may be ensured if $\optimum$ coincides with the minimizer of the risk over the set of measurable functions, which has the following form under mild assumptions (see Eq. (3) in \cite{JMLR:v21:20-097}):
\begin{equation}\label{eq:logistic_numeric_optimum}
    \optimum(x) = \arg\min_z \EE_{y\mid x}\csb{\ell(y, z)}.
\end{equation}
The previous relation can be satisfied by choosing $\rho_{y\mid x}$ accordingly. More precisely, we need
\begin{equation*}
 \YY = \cb{-1, 1}, \quad \PP(y = 1\mid x) = \p{1 + e^{-\optimum(x)}}^{-1}, \quad \PP(y = -1\mid x) = \p{1 + e^{\optimum(x)}}^{-1},
\end{equation*}
which ensures that \cref{eq:logistic_numeric_optimum} holds -- see details in \cref{sec:numerics_binary}. To our knowledge, this is the first synthetic dataset with given source and capacity condition for classification tasks. For each $\lambda, t$, we sample $n$ points uniformly on $\csb{0, 1}$, evaluate {$\optimum$}, the observed labels $y_i$, and {$\estEmp{\lambda}{t}$}. We evaluate the excess risk {$\lossL{}{}{\estEmp{\lambda}{t}} - \lossL{}{}{\optimum }$} with Monte Carlo sampling. We then report the \textit{lowest excess risk} achieved across the regularization $\lambda$, and the \textit{optimal regularization} used to achieve this loss. We plot lines of slope $\nicefrac{2s \alpha}{1 + 2s \alpha}$ and $\nicefrac{\alpha}{1+ 2s \alpha}$ respectively, with $s = (r+1/2) \land t$ in order to compare the statistical rates achieved in practice and in theory.

\paragraph{Results.}
Results for the logistic loss are available in \cref{fig:logistic_results} and we also present results with least squares where the noise is Gaussian in \cref{sec:numerics_ls_results}.
We set $\alpha = 2$, $r \in \cb{1/4, 41/4}$, and we study the performance of Iterated Tikhonov estimators with $t \in \cb{1, 3, 8}$. $t=1$ corresponds to Tikhonov estimator and saturates at $r=1/2$. $\IT{3}$ and $\IT{8}$ saturates at $r=5/2$ and $r=15/2$ respectively. Consequently, all estimators have optimal rates on the difficult task with $r=1/4$; however, only IT exploits the additional regularity of the easy task, with $r=41/4$. 
This experimentally shows that better sample complexity can be achieved when the learning task is easier and $t$ is high, matching the rates predicted in \cref{th:optimal_rates}, which are $n^{-\nicefrac{\alpha(1 + 2s)}{1 + \alpha(1 + 2s)}}$, with $s = \min\cb{r, t-1/2}$. Learning rates were estimated with an ordinary least square regression in log-log scale, and are given in \cref{tab:learning_coefficients}, where they are compared with the theoretical values.
To conclude, we observe a slight improvement in absolute value of the excess risk in the range $r \ll t$, suggesting that IT is useful even when the learning task is hard. This could be because of lower constants for high $t$: \textit{e.g.} we show that $\cstCBias$ decays in $1/t^{r}$ when $t \geq r+1/2$, see \cref{th:bound_on_bias} in the appendix. We report in the appendix additional experimental results such as plots with the chosen regularization $\lambda$ as a function of $n$, and plots on the ratio between the excess risk of $\IT{t}$ and Tikhonov, to show that the former is consistently better than the latter on easy tasks. 

\begin{figure}[t]
    \centering
    ~~~\begin{subfigure}[t]{0.45\linewidth}
        \centering
        \includegraphics[width=\linewidth]{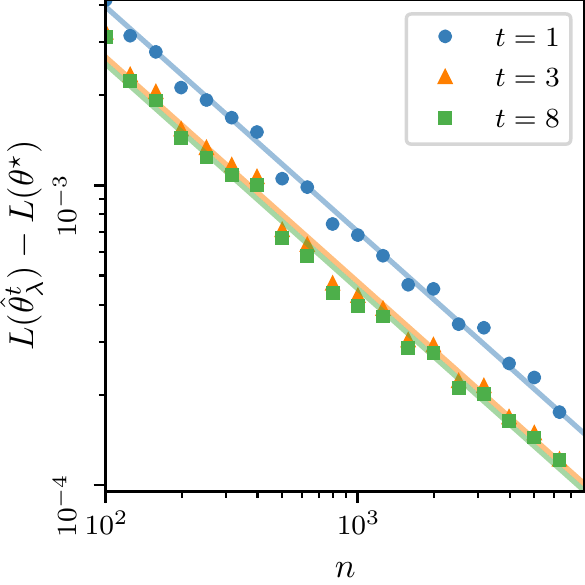}
    \end{subfigure}
    \hfill
    \begin{subfigure}[t]{0.45\linewidth}
        \centering
        \includegraphics[width=\linewidth]{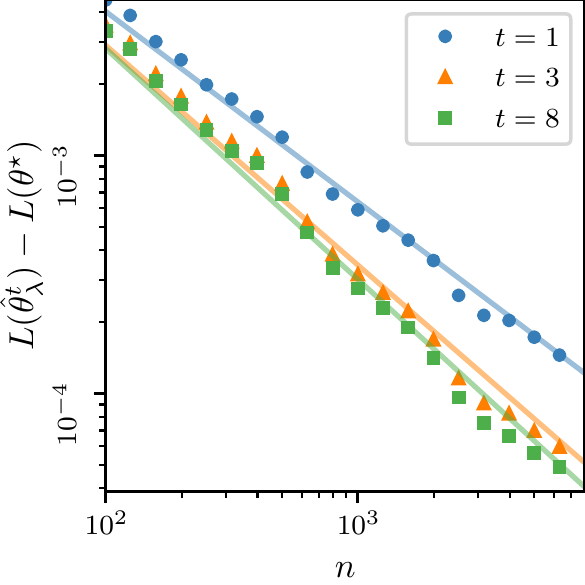}
    \end{subfigure}~~~
    \caption{Excess risk for various Iterated Tikhonov estimators as a function of $n$. \textbf{Colors}: $t=1$ (Tikhonov) estimator is shown in blue; $t=3, 8$ in green, orange. \textbf{Left}: from a difficult problem, $r=1/4, \alpha=2$. \textbf{Right}: easy problem, $r=41/4, \alpha=2$. Plain lines are predicted by theory, with slope $-\nicefrac{\alpha(1 + 2s)}{1 + \alpha(1 + 2s)}$, $s = \min\cb{r, t-1/2}$ (see main text). All plots are averaged over $100$  runs of the optimization procedure with different initialization.}
    \label{fig:logistic_results}
\end{figure}

\begin{table}[ht]
\centering
\caption{Learning rate coefficients for capacity condition $\alpha =2$ and various source condition assumption $r$. We estimate $\gamma$ with ordinary least square with the model {$L(\estEmp{\lambda}{t}) - L(\optimum) \propto n^{-\gamma}$}. We display the coefficient we expect in theory, and the one we estimate.}
\label{tab:learning_coefficients}
\begin{tabular}{@{}ccccc@{}}
\toprule
 & $r$ & $0.25$ & $3.25$ & $10.25$ \\ \midrule
\multirow{2}{*}{$t=1$} & Theory & $0.75$ & $0.80$ & $0.80$ \\
 & Estimation & $0.71$ & $0.73$ & $0.72$ \\ \midrule
\multirow{2}{*}{$t=3$} & Theory & $0.75$ & $0.92$ & $0.92$ \\
 & Estimation & $0.75$ & $0.83$ & $0.87$ \\ \midrule
\multirow{2}{*}{$t=8$} & Theory & $0.75$ & $0.94$ & $0.97$ \\
 & Estimation & $0.79$ & $0.95$ & $0.98$ \\ \bottomrule
\end{tabular}
\end{table}

\section{Conclusion}
This paper studies a well-known regularization scheme for least square, and extend it for the first time to other loss functions, which notably contain the logistic loss used for classification.
We prove that Iterated Tikhonov, corresponding to proximal point iterations, has optimal learning rates and higher qualification than Tikhonov, and as such could outperform it on easy tasks. We extend the scope of the theory of learning with generalized self concordant loss functions beyond standard Tikhonov regularization, which fills a gap in the previous theory, showing that it is possible to be fully adaptive to the regularity of the learning problem, without saturation effects.
On top of this, we gave sufficient conditions to compute the estimator in practice, which is nontrivial by its sequential nature. Interesting research directions include related regularization schemes, such as boosting, but also implementations of the iterated Tikhonov procedure with sketching techniques as Nystr\"om projections. The goal is to derive algorithms  that are both optimal, in terms of statistical guarantees, and with reduced computational complexity, which is an aspect we will address in future work.

\subsection*{Acknowledgments}
A.R. acknowleges support of the French government under management of Agence Nationale de la Recherche as part of the “Investissements d’avenir” program, reference ANR-19-P3IA-0001 (PRAIRIE 3IA Institute). A.R. acknowledges support of the European Research Council (grant REAL 947908).  J. Mairal was supported by the ERC grant number 714381 (SOLARIS project) and by ANR 3IA MIAI@Grenoble Alpes, (ANR19-P3IA-0003).

\bibliographystyle{unsrt}
\bibliography{mybib}

\newpage
\appendix
\doparttoc
\faketableofcontents 
\addcontentsline{toc}{section}{Appendix} %
\part{Appendix} %
\parttoc %

\section{Settings, notations and assumptions}
\label{sec:appendix_settings}

Given a separable Hilbert space $\HH$, $\norm{\cdot}$ denotes the norm in $\HH$. For any operator $A$ on $\HH$, $\norm{A}$ denotes its operator norm, and $\Tr A$ its trace norm. If $A$ is a p.d operator, we denote by $\norm{\cdot}_A = \norm{A^{1/2} \cdot}$ the norm induced by $A$. We denote $\norm{A}_{HS}$ the Hilbert Schmidt norm of $A$. We use the short-hand notation
\begin{equation*}
    A_\lambda = A + \lambda \II,
\end{equation*}
where $\II$ is the identity. We denote by $a \land b$ the minimum of $\cb{a, b}$, and $a \lor b$ its maximum.

\subsection{Settings and technical assumptions}\label{sec:settings_apendix}
The settings in this subsection are the same as in \cite{regularized-erm-gsc}. We report them for completeness. 

Let $\XX$ a Borel input space, $\YY$ be a vector-valued output spaces, and
$\rho$ a probability distribution on $\XX \times \YY$. We consider $\HH$ to be a separable Hilbert space of functions from $\XX$ to $\YY$. We consider a loss function $\ell: \YY \times \YY \to \RR$ for measuring the fit between predictions and true labels. Given $n$ observations $(x_1, y_1), \dots, (x_n, y_n)$ i.i.d according to $\rho$, the goal is to build a measurable function $\estEmp{}{}$, which minimizes the expected loss
\begin{equation*}
    L(\estEmp{}{}) = \EE_{x, y \sim \rho} \csb{\ell(y, \estEmp{}{}(x))}.
\end{equation*}
In this paper, we evaluate the quality of the estimator with probabilistic upper bounds on the \textit{excess risk}
\begin{equation*}
    L(\estEmp{}{}) - \inf_{\theta \in \HH} L(\theta) \leq K n^{-\gamma},
\end{equation*}
with probability greater than $1 - \delta$. The rate of decay $\gamma$ is referred to as the \textit{learning rate} of the estimator. 
Our main assumption on the loss function is to be generalized self-concordant (GSC).
\begin{assumption}[Generalized Self-Concordance]\label{hyp:gsc_loss}
    For any $z = x, y \in \XX \times \YY$, the function $\ell_z : \HH \to \RR$ defined as $\ell_z(\theta) = \ell(y, \theta(x))$ for $\theta \in \HH$ is convex and three times differentiable. Besides, there exists a set $\phi(z) \subset \HH$ s.t
    \begin{equation*}
        \forall \theta \in \HH, \, \forall h,k \in \HH, \quad \absv{\nabla^3\ell_z(\theta) \csb{h, k, k}} \leq \sup_{g \in \phi(z)} \absv{k \cdot g} \nabla^2 \ell_z (\theta) \csb{k, k}.
    \end{equation*}
\end{assumption}

Next, we introduce the following quantities.
\begin{definition}[Useful quantities]\label{def:all_quantities}
    Let $\theta \in \HH$. The following quantities are independant of the random variable $z \sim \rho$, either by taking the supremum over the support of $\rho$ or by considering the expectation. Define:
    \begin{itemize}
        \item uniform bounds on the derivatives:
        \begin{equation*}
            \cstBOne(\theta) = \sup_{z \in \Supp \rho} \norm{\nabla \ell_z(\theta)}, \quad \cstBTwo(\theta) = \sup_{z \in \Supp \rho} \Tr \nabla^2 \ell_z(\theta);
        \end{equation*}
        \item the Hessian of the expected and empirical loss:
        \begin{equation*}
            \Hess{}{}{\theta} = \nabla^2 \EE\csb{\ell_z(\theta)}, \quad \HessEmp{}{}{\theta} = \frac{1}{n} \sum_{i=1}^n \nabla^2 \ell_{z_i}(\theta);
        \end{equation*}
        \item the function $\tOp$, s.t:
        \begin{equation*}
            \tOp(\theta) = \sup_{z \in \Supp \rho} \sup_{g \in \phi(z)} \absv{\theta \cdot g}.
        \end{equation*}
    \end{itemize}
\end{definition}

We make technical assumption to ensure that the loss function and its derivatives are well defined everywhere and that we can exchange expectation and derivative. 
\begin{assumption}[Technical assumptions]\label{hyp:technical_assumptions}
    There exists $R$ s.t $\sup_{g \in \phi(z)} \norm{g} \leq R$ almost surely; $\absv{\ell_z(0)}, \norm{\nabla \ell_z(0)}, \Tr \Hess{}{}{\nabla^2 \ell_z(0)}$ are almost surely bounded.
\end{assumption}
Using Prop. 2 of \cite{regularized-erm-gsc}, we have that $\cstBOne(\theta), \cstBTwo(\theta), \lossL{}{}{\theta}, \nabla \lossL{}{}{\theta}, \Hess{}{}{\theta}$ exist for all $\theta \in \HH$, and
\begin{equation*}
    \nabla \lossL{}{}{\theta} = \EE\csb{\nabla \ell_z (\theta)}, \quad \Hess{}{}{\theta} = \EE\csb{\nabla^2 \ell_z(\theta)}.
\end{equation*}
Finally, $\Hess{}{}{\theta}$ is trace-class, that is its trace is finite for any $\theta \in \HH$. The same properties hold when considering $\hat{\rho}$ instead of $\rho$, that is for the quantities $\lossLEmp{}{}{\theta}, \nabla \lossLEmp{}{}{\theta}$ and $\HessEmp{}{}{\theta}$.

We make three key assumptions to obtain our learning rate.
\begin{assumption}[Existence of a minimizer]\label{hyp:existence_minimizer_appendix}
    There exists a minimizer of $\lossL{}{}{}$ in $\HH$. There is $\optimum \in \HH$ s.t
    \[
    \lossL{}{}{\optimum} = \inf_{\theta \in \HH} \lossL{}{}{\optimum}.
    \]
\end{assumption}
\begin{assumption}[Source condition]\label{hyp:source_appendix}
    There exists $r > 0$ and $v \in \HH$ s. t
    \[
    \optimum = \Hess{}{r}{\optimum} v.
    \]
\end{assumption}
The third assumption qualifies the ill-posedness of the problem:
\begin{assumption}[Capacity condition]\label{hyp:capacity_appendix}
    There exists $\alpha >1$, $\cstSsmall, \cstS > 0$ s.t
    \[
    \cstSsmall \lambda^{-1/\alpha} \leq \df_\lambda \leq \cstS \lambda^{-1/\alpha}. 
    \]
\end{assumption}
To understand the source and capacity condition, one must pay attention to the counterpart of the covariance operator for GSC loss function, that is the expected hessian at optimality. It is denoted with $\Hess{}{}{\optimum}$ throughout the paper. The source and capacity conditions are assumptions on the eigendecomposition of this operator. To better quanitfy these assumptions, take $\sigma_j, e_j$ an eigenbasis of $\Hess{}{}{\optimum}$, with $\sigma_j > \sigma_{j+1}$. 

The source condition is a smoothness assumption on $\optimum$. It amounts to assuming that the eigendecomposition of $\optimum$ on the basis of the Hessian decays faster than its spectrum. Indeed, rewriting \cref{hyp:source_appendix} we obtain
\begin{equation*}
    \norm{v}^2 = \sum_{j \geq 1} \sigma_j^{-2r} \dotprod{\optimum}{e_j}^2 < + \infty.
\end{equation*}
Assuming $r =0$ simplifies to $\optimum \in \HH$. Bigger $r$ implies that the optimum can be well approximated by the first few eigenvectors (as $(\sigma_j^{-2r})_j$ goes quickly to infinity). 

Similarly, the capacity condition is an assumption on the decay of the spectrum of the Hessian. Specifically, it assumes that the spectrum decays polynomially, i.e $\sigma_j \sim j^{-\alpha}$. As this operator is compact, we have $\alpha > 1$ for the $\sum_j j^{-\alpha}$ to be summable. Bigger $\alpha$ gives easier input space $\XX$. 

See \cref{sec:settings} in the main body of the paper for a discussion on the significance of these assumptions.

\subsection{Basic results on GSC loss functions}
Here, we present Prop. 4 of \cite{regularized-erm-gsc}, which we then extend with an additional lemma. 
\begin{proposition}[Properties of GSC functions]\label{prop:properties_gsc}
    Let $\theta, \nu \in \HH$, $\lambda \geq 0$. The following properties hold:
    \begin{align}
        \label{eq:lower_bound_Hessian}
        \HessEmp{\lambda}{}{\theta} &\preceq e^{\tOp(\theta - \nu)} \HessEmp{\lambda}{}{\nu} \\
        \norm{\nabla \lossLEmp{\lambda}{}{\theta} - \lossLEmp{\lambda}{}{\nu}}_{\HessEmp{\lambda}{-1}{\theta}} &\leq \norm{\theta - \nu}_{\HessEmp{\lambda}{}{\theta}} \phitop \p{\tOp(\theta - \nu)} \\
        \label{eq:gsc_upper_bound_function_value}
        \lossL{\lambda}{}{\theta} - \lossL{\lambda}{}{\nu} - \nabla \lossL{\lambda}{}{\nu} \cdot (\theta - \nu) &\leq \Psi\p{\tOp(\theta - \nu)} \norm{\theta - \nu}_{\Hess{\lambda}{}{\theta}}^2
    \end{align}
    where $\phibot: t \mapsto (1 - e^{-t})/t$ and $\Psi: t \mapsto (e^t - t - 1)/t^2$. Moreover, if $\nu, \xi \in \HH$, $A: \HH \to \HH$ commutes with $\Hess{}{}{\xi}$, then the following holds:
    \begin{equation}
        \label{eq:prop_gsc_gradient_lowerbound}
        e^{-\tOp\p{\theta - \xi}} \phibot\p{\tOp(\nu - \theta)} \norm{A(\nu - \theta)}_{\Hess{\lambda}{}{\xi}} \leq \norm{A (\nabla \lossL{\lambda}{}{\nu} - \nabla \lossL{\lambda}{}{\theta})}_{\Hess{\lambda}{-1}{\xi}}
    \end{equation}
\end{proposition}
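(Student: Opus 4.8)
\textit{Proof idea.} The first three displays are Proposition~4 of \cite{regularized-erm-gsc}; I only sketch them and then detail the new ingredient \eqref{eq:prop_gsc_gradient_lowerbound}. All four rest on a one-dimensional reduction along segments. Fix $\theta,\nu\in\HH$, set $h=\nu-\theta$, and for $u\in\HH$ define $\psi_u(s)=\nabla^2 F(\theta+sh)[u,u]$, where $F$ denotes either $\lossL{\lambda}{}{}$ or $\lossLEmp{\lambda}{}{}$; both are generalized self-concordant with the same sets $\phi(z)$ as the base loss, since adding $\tfrac\lambda2\norm{\cdot}^2$ leaves the third derivative unchanged while $\lossL{}{}{},\lossLEmp{}{}{}$ are GSC as recalled above. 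Observing that $\tOp$ is an even seminorm and that $\tOp(h)\ge\sup_{g\in\phi(z)}\absv{h\cdot g}$ for every $z\in\Supp\rho$, differentiating $\psi_u$ and applying \cref{hyp:gsc_loss} under the (empirical) expectation gives $\absv{\psi_u'(s)}\le\tOp(h)\,\psi_u(s)$, hence $\psi_u(0)e^{-s\tOp(h)}\le\psi_u(s)\le\psi_u(0)e^{s\tOp(h)}$ on $[0,1]$. Taking $s=1$ yields the two-sided Hessian comparison; adding $\lambda\II$ and using $e^{\tOp}\ge1$ gives \eqref{eq:lower_bound_Hessian}.

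For the gradient display I would expand $\nabla\lossLEmp{\lambda}{}{\nu}-\nabla\lossLEmp{\lambda}{}{\theta}=\int_0^1\HessEmp{\lambda}{}{\theta+sh}h\,ds$ and bound each integrand in the $\HessEmp{\lambda}{-1}{\theta}$-norm using $\HessEmp{\lambda}{}{\theta+sh}\preceq e^{s\tOp(h)}\HessEmp{\lambda}{}{\theta}$ together with the elementary fact that $A\preceq cB$ implies $AB^{-1}A\preceq cA$ for positive $A,B$; this yields $\norm{\HessEmp{\lambda}{}{\theta+sh}h}_{\HessEmp{\lambda}{-1}{\theta}}\le e^{s\tOp(h)}\norm{h}_{\HessEmp{\lambda}{}{\theta}}$, and $\int_0^1 e^{s\tOp(h)}\,ds=\phitop(\tOp(h))$ closes it. The third display is the same mechanism on the integral Taylor remainder $\lossL{\lambda}{}{\theta}-\lossL{\lambda}{}{\nu}-\nabla\lossL{\lambda}{}{\nu}\cdot(\theta-\nu)=\int_0^1(1-s)\,\nabla^2\lossL{\lambda}{}{\nu+s(\theta-\nu)}[\theta-\nu,\theta-\nu]\,ds$: bounding the Hessian along the segment by an endpoint Hessian times the relevant exponential factor and integrating in $s$ produces the prefactor $\Psi$.

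The new inequality \eqref{eq:prop_gsc_gradient_lowerbound} needs a \emph{lower} bound, so I want the averaged Hessian to dominate a multiple of $\Hess{\lambda}{}{\xi}$. Writing $h=\nu-\theta$ and $M=\int_0^1\Hess{\lambda}{}{\theta+sh}\,ds$, so that $\nabla\lossL{\lambda}{}{\nu}-\nabla\lossL{\lambda}{}{\theta}=Mh$, the pointwise bound $\Hess{\lambda}{}{\theta+sh}\succeq e^{-\tOp(\theta+sh-\xi)}\Hess{\lambda}{}{\xi}$ (from the $\Hess{}{}{\cdot}$ analogue of \eqref{eq:lower_bound_Hessian}, the factor $e^{-\tOp}\le1$ absorbing the $\lambda\II$ term), the subadditivity $\tOp(\theta+sh-\xi)\le\tOp(\theta-\xi)+s\,\tOp(h)$, and $\int_0^1 e^{-s\tOp(h)}\,ds=\phibot(\tOp(h))$ give
\begin{equation*}
    M\;\succeq\;e^{-\tOp(\theta-\xi)}\,\phibot\!\big(\tOp(\nu-\theta)\big)\,\Hess{\lambda}{}{\xi}\;=:\;c\,\Hess{\lambda}{}{\xi}.
\end{equation*}
It then remains to convert this operator inequality into the claimed inequality between the $\Hess{\lambda}{}{\xi}$-weighted norms of $A(\nu-\theta)$ and of $A(\nabla\lossL{\lambda}{}{\nu}-\nabla\lossL{\lambda}{}{\theta})=AMh$: here the hypothesis that $A$ commutes with $\Hess{}{}{\xi}$ — hence with $\Hess{\lambda}{-1/2}{\xi}$ — is essential, as conjugating by $\Hess{\lambda}{-1/2}{\xi}$ reduces matters to a spectral comparison between $A$ and the normalized operator $\Hess{\lambda}{-1/2}{\xi}\,M\,\Hess{\lambda}{-1/2}{\xi}\succeq c\,\II$. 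I expect this last passage — controlling how $A$ interacts with the Hessian \emph{averaged over the segment}, rather than with a single fixed Hessian — to be the main obstacle; the rest is the standard self-concordant bookkeeping of \cite{regularized-erm-gsc,10.1214/09-EJS521}.
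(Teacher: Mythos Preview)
Your treatment of the first three displays is fine and matches the paper, which simply imports Proposition~4 of \cite{regularized-erm-gsc}. For \eqref{eq:prop_gsc_gradient_lowerbound} you also arrive at the same intermediate step as the paper: with $h=\nu-\theta$ and $M=\int_0^1 \Hess{\lambda}{}{\theta+sh}\,ds$, the GSC Hessian comparison plus subadditivity of $\tOp$ give $M\succeq c\,\Hess{\lambda}{}{\xi}$ with $c=e^{-\tOp(\theta-\xi)}\phibot(\tOp(h))$.

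The gap is precisely where you stop. You propose to conjugate by $\Hess{\lambda}{-1/2}{\xi}$ and reduce to a ``spectral comparison'' between $A$ and $\tilde M:=\Hess{\lambda}{-1/2}{\xi}M\Hess{\lambda}{-1/2}{\xi}\succeq c\II$. But this route is blocked: after using that $A$ commutes with $\Hess{\lambda}{}{\xi}$, the target inequality becomes $\|A\tilde M w\|\ge c\|Aw\|$ for $w=\Hess{\lambda}{1/2}{\xi}h$, and $\tilde M\succeq c\II$ does \emph{not} imply this when $A$ and $\tilde M$ fail to commute (which they generally do, since $A$ only commutes with $\Hess{\lambda}{}{\xi}$, not with the path Hessians). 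So the ``main obstacle'' you flag is real, and your sketch does not overcome it.

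The paper's device is different and avoids this non-commutation issue entirely: it does not try to compare $\|AMh\|$ and $\|Ah\|$ directly. Instead it pivots on the scalar
\[
\big\langle A^2\big(\nabla\lossL{\lambda}{}{\nu}-\nabla\lossL{\lambda}{}{\theta}\big),\,\nu-\theta\big\rangle
=\int_0^1 \big\langle A^2\Hess{\lambda}{}{v_s}h,\,h\big\rangle\,ds,
\]
bounds it from below by $c\,\|Ah\|_{\Hess{\lambda}{}{\xi}}^2$ using the pointwise inequality $\Hess{\lambda}{}{v_s}\succeq \gamma_s\Hess{\lambda}{}{\xi}$ together with the commutation $A\Hess{\lambda}{}{\xi}=\Hess{\lambda}{}{\xi}A$, and from above by $\|AMh\|_{\Hess{\lambda}{-1}{\xi}}\,\|Ah\|_{\Hess{\lambda}{}{\xi}}$ via Cauchy--Schwarz in the $\Hess{\lambda}{}{\xi}$-geometry. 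Dividing by $\|Ah\|_{\Hess{\lambda}{}{\xi}}$ yields \eqref{eq:prop_gsc_gradient_lowerbound}. The Cauchy--Schwarz step is the missing idea in your proposal.
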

We slightly modify the lower bound gradient, which is crucial for obtaining higher qualification with IT. 
\begin{lemma}[Stacking operator on gradient bounds]\label{lem:stacking_operator_gradient}
    Let $\theta, \nu, \xi \in \HH$, $\lambda > 0$. If $A: \HH \to \HH$ commutes with $\Hess{}{}{\xi}$, the following holds:
    \begin{equation*}
        e^{-\tOp\p{\theta - \xi}} \phibot\p{\tOp(\nu - \theta)} \norm{A(\nu - \theta)}_{\Hess{\lambda}{}{\xi}} \leq \norm{A (\nabla \lossL{\lambda}{}{\nu} - \nabla \lossL{\lambda}{}{\theta})}_{\Hess{\lambda}{-1}{\xi}}.
    \end{equation*}
\end{lemma}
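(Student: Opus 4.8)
The plan is to reduce the inequality to a Loewner comparison between the Hessian of $\lossL{\lambda}{}{}$ averaged along the segment $[\theta,\nu]$ and the regularized Hessian $\Hess{\lambda}{}{\xi}$, and then to commute $A$ through the two $\Hess{\lambda}{\pm 1}{\xi}$-weighted norms. Write $w=\nu-\theta$. Since $\lossL{\lambda}{}{}$ is twice differentiable, the fundamental theorem of calculus gives
\[
\nabla\lossL{\lambda}{}{\nu} - \nabla\lossL{\lambda}{}{\theta} = \mathbf{M}\,w,\qquad \mathbf{M}\eqdef\int_0^1 \Hess{\lambda}{}{\theta+s\,w}\,\dd s,
\]
a self-adjoint positive operator on $\HH$.

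Next I would lower-bound $\mathbf{M}$ in the Loewner order. Applying the GSC Hessian comparison \eqref{eq:lower_bound_Hessian} (which holds verbatim for the expected regularized risk $\lossL{\lambda}{}{}$) first between $\theta+s\,w$ and $\theta$, then between $\theta$ and $\xi$, and using that $\tOp$ is a seminorm, hence $\tOp(s\,w)=s\,\tOp(w)$ for $s\in[0,1]$, gives for every $s\in[0,1]$
\[
\Hess{\lambda}{}{\theta+s\,w}\succeq e^{-s\,\tOp(w)}\,\Hess{\lambda}{}{\theta}\succeq e^{-s\,\tOp(w)-\tOp(\theta-\xi)}\,\Hess{\lambda}{}{\xi}.
\]
Integrating over $s$, with $\int_0^1 e^{-s\,\tOp(w)}\,\dd s=\phibot(\tOp(w))$, yields $\mathbf{M}\succeq c\,\Hess{\lambda}{}{\xi}$ where $c\eqdef e^{-\tOp(\theta-\xi)}\,\phibot(\tOp(\nu-\theta))$ is exactly the prefactor in the statement.

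It remains to transfer this operator inequality to the norm inequality. Set $\mathbf{B}=\Hess{\lambda}{}{\xi}$; since $A$ commutes with $\Hess{}{}{\xi}$ it commutes with $\mathbf{B}$, $\mathbf{B}^{1/2}$ and $\mathbf{B}^{-1/2}$. Hence $\norm{A\,w}_{\mathbf{B}}=\norm{A\,\mathbf{B}^{1/2}w}$ and
\[
\norm{A(\nabla\lossL{\lambda}{}{\nu}-\nabla\lossL{\lambda}{}{\theta})}_{\Hess{\lambda}{-1}{\xi}}=\norm{\mathbf{B}^{-1/2}A\,\mathbf{M}\,w}=\norm{A\,\big(\mathbf{B}^{-1/2}\mathbf{M}\,\mathbf{B}^{-1/2}\big)\,\mathbf{B}^{1/2}w},
\]
and $\mathbf{N}\eqdef\mathbf{B}^{-1/2}\mathbf{M}\,\mathbf{B}^{-1/2}\succeq c\,\II$ by the previous step. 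So the lemma reduces to $\norm{A\,\mathbf{N}\,y}\geq c\,\norm{A\,y}$ for $y=\mathbf{B}^{1/2}w$, which is immediate when $A$ commutes with $\mathbf{N}$; for $A=\II$ this recovers \eqref{eq:prop_gsc_gradient_lowerbound}, as one sees from $\mathbf{M}\,\mathbf{B}^{-1}\mathbf{M}=(\mathbf{M}-c\mathbf{B})\,\mathbf{B}^{-1}(\mathbf{M}-c\mathbf{B})+2c(\mathbf{M}-c\mathbf{B})+c^2\mathbf{B}\succeq c^2\mathbf{B}$, using $\mathbf{M}-c\mathbf{B}\succeq 0$.

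I expect this last step to be the main obstacle: one must make the operator juggling go through for a general $A$ commuting only with the anchoring Hessian $\Hess{}{}{\xi}$ and not with the path-averaged Hessian $\mathbf{M}$. The useful structure is that $A^*A$ again commutes with $\Hess{}{}{\xi}$, so $\Hess{\lambda}{-1/2}{\xi}\,A^*A\,\Hess{\lambda}{-1/2}{\xi}$ is positive and commutes with $\mathbf{B}$; expanding $\norm{A\,\mathbf{M}\,w}_{\Hess{\lambda}{-1}{\xi}}^2=\dotprod{A^*A\,\mathbf{M}\,\mathbf{B}^{-1}\mathbf{M}\,w}{w}$ as above, the delicate term is the cross term $c\,\dotprod{\big(A^*A(\mathbf{M}-c\mathbf{B})+(\mathbf{M}-c\mathbf{B})A^*A\big)w}{w}$, which has to be controlled using the positivity of $\mathbf{M}-c\mathbf{B}$, the commutation of $A^*A$ with $\mathbf{B}$, and, if needed, the matching upper bound $\mathbf{M}\preceq e^{\tOp(\theta-\xi)}\phitop(\tOp(\nu-\theta))\,\mathbf{B}$ coming from the same Hessian comparison. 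The remaining ingredients (the path integral, the homogeneity of $\tOp$, and the integration of the Loewner bound) are routine.
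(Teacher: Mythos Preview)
Your setup matches the paper's exactly: integrate the Hessian along the segment, use the GSC comparison twice to get $\mathbf{M}\succeq c\,\Hess{\lambda}{}{\xi}$ with $c=e^{-\tOp(\theta-\xi)}\phibot(\tOp(\nu-\theta))$. The divergence, and the genuine gap, is in the last step. Your reduction to ``$\norm{A\mathbf{N}y}\geq c\,\norm{Ay}$ with $\mathbf{N}\succeq c\,\II$'' is a dead end: that inequality is \emph{false} when $A$ does not commute with $\mathbf{N}$ (take $\mathbf{N}=\Diag(2,1)$, $c=1$, $A$ the orthogonal projector onto the span of $(1,1)$, and $y=(1,-2)$; then $Ay\neq 0$ but $A\mathbf{N}y=0$). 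The quadratic expansion you sketch afterwards inherits the same obstruction in the cross term, and the upper Loewner bound on $\mathbf{M}$ does not rescue it.

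The paper sidesteps this by never isolating $\norm{A\mathbf{M}w}_{\Hess{\lambda}{-1}{\xi}}$. It works instead with the single scalar
\[
\dotprod{A^2\big(\nabla\lossL{\lambda}{}{\nu}-\nabla\lossL{\lambda}{}{\theta}\big)}{\nu-\theta}
=\int_0^1\dotprod{A^2\,\Hess{\lambda}{}{v_s}\,w}{w}\,\dd s,
\]
(implicitly $A$ self-adjoint, which is the case in every application where $A$ is a power of $\HessEmp{\lambda}{}{\optimum}$). This scalar is bounded \emph{below} by replacing $\Hess{\lambda}{}{v_s}$ with its Loewner minorant and then using commutativity of $A$ with $\Hess{\lambda}{}{\xi}$ to rewrite the result as $c\,\norm{A(\nu-\theta)}_{\Hess{\lambda}{}{\xi}}^2$; and bounded \emph{above} by Cauchy--Schwarz in the $\Hess{\lambda}{\pm 1}{\xi}$-duality, which produces exactly $\norm{A(\nabla\lossL{\lambda}{}{\nu}-\nabla\lossL{\lambda}{}{\theta})}_{\Hess{\lambda}{-1}{\xi}}\cdot\norm{A(\nu-\theta)}_{\Hess{\lambda}{}{\xi}}$. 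Dividing the two bounds by the common factor $\norm{A(\nu-\theta)}_{\Hess{\lambda}{}{\xi}}$ gives the lemma. The missing idea in your attempt is this inner-product sandwich: you never need a two-sided operator estimate, only a lower and an upper bound on one scalar quantity that already contains both target norms.
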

\begin{proof}
    Defining $v_s = \theta + s(\nu - \theta)$ for $s \in \cb{0, 1}$, we have:
    \begin{align*}
        A^2 \p{\nabla \lossL{\lambda}{}{\nu} - \nabla \lossL{\lambda}{}{\theta}} 
        &= A^2 \int_{0}^{1} \Hess{\lambda}{}{v_s} \p{\nu-\theta} \dd s, \\
        \text{which implies} \quad \dotprod{A^2 \p{\nabla \lossL{\lambda}{}{\nu} - \nabla \lossL{\lambda}{}{\theta}}}{\nu - \theta}
        &= A^2 \int_{0}^{1} \dotprod{\Hess{\lambda}{}{v_s} \p{\nu - \theta}}{\nu - \theta} \dd s.
    \end{align*}
    We may then use the lower bound on the Hessian from \cref{eq:lower_bound_Hessian},
    \begin{align*}
        \Hess{\lambda}{}{v_s} \succeq \Hess{\lambda}{}{\xi} e^{-\tOp\p{v_s - \xi}} \succeq \Hess{\lambda}{}{\xi} e^{-\tOp\p{\theta - \xi}} e^{-s\tOp\p{\nu - \theta}},
    \end{align*}
    where the second inequality comes from $\tOp$ satisfying the triangle inequality.
    Plugging this in the previous equation and using the fact that $\Hess{}{}{\xi}$ and $A$ commute, we have that:
    \begin{equation*}
        \begin{aligned}
            \int_{0}^{1} \dotprod{A^2 \Hess{\lambda}{}{v_s}\p{\nu - \theta}}{\nu - \theta} \dd s 
            &\geq e^{-\tOp\p{\theta - \xi}} \int_0^1 e^{-s\tOp\p{\nu - \theta}} \dd s \; \dotprod{\Hess{\lambda}{}{\xi} A \p{\nu - \theta}}{A\p{\nu - \theta}} \\
            &= e^{-\tOp\p{\theta - \xi}} \phibot\p{\tOp(\nu - \theta)} \dotprod{\Hess{\lambda}{}{\xi}A(\nu - \theta)}{A(\nu - \theta)},
        \end{aligned}
    \end{equation*}
    which gives the lower bound
    \begin{equation}\label{eq:proof_lemma_ineq_1}
        e^{-\tOp\p{\theta - \xi}} \phibot\p{\tOp(\nu - \theta)} \norm{A(\nu - \theta)}_{\Hess{\lambda}{}{\xi}}^2 \leq \dotprod{A^2 \p{\nabla \lossL{\lambda}{}{\nu} - \nabla \lossL{\lambda}{}{\theta}}}{\nu - \theta}.
    \end{equation}
    On the other hand, with Cauchy Schwartz inequality, we obtain:
    \begin{equation}\label{eq:proof_lemma_ineq_2}
        \dotprod{A^2 (\nabla \lossL{\lambda}{}{\nu} - \nabla \lossL{\lambda}{}{\theta})}{\nu - \theta}
        \leq \norm{A (\nabla \lossL{\lambda}{}{\nu} - \nabla \lossL{\lambda}{}{\theta})}_{\Hess{\lambda}{-1}{\xi}} \norm{A (\nu - \theta)}_{\Hess{\lambda}{}{\xi}}.
    \end{equation}
    Combining the inequalities \cref{eq:proof_lemma_ineq_1,eq:proof_lemma_ineq_2} and dividing by $\norm{A(\nu - \theta)}_{\Hess{\lambda}{}{\xi}}$, we obtain the result needed. 
\end{proof}

\section{Proof of Theorem 1}\label{sec:proof_upper_rate_appendix}

\subsection{Error decomposition}

Thanks to \cref{eq:gsc_upper_bound_function_value}, the excess risk is bounded by the distance between estimate in $\Hess{}{}{\optimum}$ norm with
\begin{equation*}
    \lossL{\lambda}{}{\estEmp{\lambda}{t}} - \lossL{\lambda}{}{\optimum} \leq \Psi\p{\tOp(\estEmp{\lambda}{t} - \optimum)} \norm{\estEmp{\lambda}{t} - \optimum}_{\Hess{\lambda}{}{\optimum}}^2.
\end{equation*}
In order to compute $\normtxt{\estEmp{\lambda}{t} - \optimum}_{\Hess{}{}{\optimum}}$, we need to go through an intermediate quantity $\estInt{}{}$. In the context of least squares and spectral filters, such quantity is usually defined to be
\begin{equation}
    \label{eq:optimal_decomposition_LS}
    \estInt{}{} = g_\lambda(\hat{T}) \hat{S}^* \hat{S} \optimum,
\end{equation}
where:
\begin{itemize}
    \item $\hat{T} = \hat{S}^* \hat{S}$ is the \textit{empirical covariance operator}, equal to $\sum_{i=1}^n \Psi(x_i) \otimes \Psi(x_i)$ when $\HH$ is a RKHS with feature map $\Psi$ (see \cref{remark:ls});
    \item $\hat{S}: \HH \to \RR^n$ is the \textit{sampling operator}, with $\hat{S} \theta = 1/\sqrt{n} (\theta(x_i), \dots, \theta(x_n))$;
    \item Its dual is $\hat{S}^*: \RR^n \to \HH$, with $\hat{S}^* y = 1/\sqrt{n} \sum_{i=1}^n y_i \Phi(x_i)$;
\end{itemize}
 see \cite{blanchard} for details. Thus, the quantity in \cref{eq:optimal_bv_decomposition_ls} can be seen as the estimator trained on the \textit{empirical noiseless distribution}, where we use $\hat{S} \optimum$ instead of $y = (y_i)_{1\leq i \leq n}$. It is optimal in the sense that its bias $\norm{\estInt{}{} - \optimum}_{\hat{T}}$ will be of the order of $\lambda^{r + 1/2}$ and its variance $\norm{\estEmp{\lambda}{t} - \estInt{}{}}_{\hat{T}}$ of the order of $\df_\lambda/n$, leading to the optimal rates for least squares \cite{optimal_rate_rls}. 

Expressing the quantity above as a proximal sequence is the key insight of the proof. It turns out that the following quantity obtains the same optimal decomposition.
\begin{definition}[Error decomposition]\label{def:error_decomposition}
    Define the following quantity:
    \begin{equation*}
        \begin{aligned}
            \estInt{\lambda}{0} &= \optimum \\
            \estInt{\lambda}{k+1} &= \prox{\lossLEmp{}{}{}/\lambda} (\estInt{\lambda}{k}), \quad k \geq 0
        \end{aligned}
    \end{equation*}
\end{definition}

\begin{remark}
    In fact, the estimator above, when expressed with filters, has its (bias, variance) equals to the (variance, bias) of the estimator of \cref{eq:optimal_decomposition_LS}. It is easy to change the intermediate quantity of \cref{def:error_decomposition} to match, but it introduces unnecessary burden with the notations.
\end{remark}

The purpose of next sections is to bound
\begin{equation}
    \label{eq:decomposition_distance_estimates}
    \norm{\estEmp{\lambda}{t} - \optimum}_{\Hess{}{}{\optimum}} \leq \norm{\estEmp{\lambda}{t} - \estInt{\lambda}{t}}_{\Hess{}{}{\optimum}} + \norm{\estInt{\lambda}{t} - \optimum}_{\Hess{}{}{\optimum}}.
\end{equation}
The first term will be the \textit{bias} of the estimator (decreases with $\lambda/t$) while the second one will be the \textit{variance} (decreases with $t/\lambda$ and $n$). The intermediate quantity of \cref{def:error_decomposition} being very close to the one of \cref{eq:optimal_decomposition_LS} used in \cite{blanchard}, it is natural that the proof follows similarly. 

\subsection{Bounding the bias}
Here, we proceed in bounding the bias, that is the quantity $\normtxt{\estEmp{\lambda}{t} - \estInt{\lambda}{t}}_{\Hess{}{}{\optimum}}$. 

\begin{theorem}[Improved qualification of Iterated Tikhonov estimator]\label{th:bound_on_bias}
    Let $\delta \in (0, 1]$. Recall the source condition of parameter $r, \norm{v}$. Define the following conditions on the number of samples:
    \begin{align*}
        \Hyp{1}: n &\geq 24 \frac{\cstBTwoStr}{\lambda} \log\frac{16 \cstBTwoStr}{\lambda\delta}, \\
        \Hyp{1b}: n &\geq 8 \frac{\cstBTwoStr^2}{\lambda^2} \log^2 \frac{4}{\delta}, \\
        \Hyp{2}: n &\geq 2 \csb{1 \lor \p{\frac{2 \cstBTwoStr (t-1/2)^r}{\lambda^{s-1/2}}}^2} \log \frac{4}{\delta},
    \end{align*}
    Now assume:
    \begin{align*}
        \Hyp{1} \quad &\iftext r \leq 1/2, \\
        \Hyp{1} + \Hyp{1b} \quad &\iftext 1/2 < r \leq 1, \\
        \Hyp{1} + \Hyp{2} \quad &\iftext r > 1.
    \end{align*}
    Then, with probability greater than $1 - \delta$:
    \begin{equation}
        \norm{\estEmp{\lambda}{t} - \estInt{\lambda}{t}}_{\Hess{}{}{\optimum}}
        \leq \sqrt{2} \cstT(r,t) \cstP{t} \lambda^s,
    \end{equation}
    with $s = (r+1/2) \land t$,
    \begin{equation}
        \cstT(r, t) = \begin{cases}
            \norm{v} (1 \lor (\cstBTwoStr + \lambda)) 2^r            \quad &\iftext r \leq 1, \\
            \norm{v} \frac{w(r) + r}{(t-1/2)^r}                      \quad &\iftext r > 1 \andtext r + 1/2 < t \\
            \norm{v} \frac{w(r)}{(t-1/2)^r} + \cstBTwoStr^{r-t+1/2}  \quad &\iftext r > 1 \andtext r + 1/2 \geq t,
        \end{cases}
    \end{equation}
    $w(r) = r 2^{\intpart{r}+1} \cstBTwoStr^r$, and:
    \begin{equation*}
        \cstP{t} \eqdef \prod_{k=1}^t \phibot^{-1}\p{\tOpEmp(\estEmp{\lambda}{k} - \estInt{\lambda}{k})} e^{\tOpEmp(\estInt{\lambda}{k} - \optimum)}.
    \end{equation*}
\end{theorem}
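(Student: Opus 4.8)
The plan is to reduce the two coupled proximal recursions to a single operator recursion for the error $e_k\eqdef\estEmp{\lambda}{k}-\estInt{\lambda}{k}$, unfold it $t$ times, and bound the outcome by combining the source condition with concentration of the empirical Hessian and of its powers. Writing the first‑order optimality conditions of the proximal problems in \cref{def:it_estimator_main} and \cref{def:error_decomposition} gives $\nabla\lossLEmp{}{}{\estEmp{\lambda}{k}}+\lambda\estEmp{\lambda}{k}=\lambda\estEmp{\lambda}{k-1}$ and the same with $\estInt{}{}$ in place of $\estEmp{}{}$, so that, with $\nabla\lossLEmp{\lambda}{}{\theta}\eqdef\nabla\lossLEmp{}{}{\theta}+\lambda\theta$,
\[
\nabla\lossLEmp{\lambda}{}{\estEmp{\lambda}{k}}-\nabla\lossLEmp{\lambda}{}{\estInt{\lambda}{k}}=\lambda\,e_{k-1},\qquad e_0=\estEmp{\lambda}{0}-\estInt{\lambda}{0}=-\optimum.
\]
Applying the empirical analogue of \cref{lem:stacking_operator_gradient} (valid since $\lossLEmp{}{}{}$ is GSC) with $\nu=\estEmp{\lambda}{k}$, $\theta=\estInt{\lambda}{k}$, $\xi=\optimum$ and any $A$ commuting with $\HessEmp{}{}{\optimum}$, then substituting the identity above and dividing by the strictly positive prefactor, gives $\norm{Ae_k}_{\HessEmp{\lambda}{}{\optimum}}\le c_k\,\lambda\,\norm{Ae_{k-1}}_{\HessEmp{\lambda}{-1}{\optimum}}$ with $c_k\eqdef\phibot^{-1}\!\big(\tOpEmp(e_k)\big)\,e^{\tOpEmp(\estInt{\lambda}{k}-\optimum)}$. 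Setting $M\eqdef\lambda\HessEmp{\lambda}{-1}{\optimum}$, which is a function of $\HessEmp{}{}{\optimum}$ and hence commutes with $A$, the right‑hand side equals $c_k\norm{AMe_{k-1}}_{\HessEmp{\lambda}{}{\optimum}}$. A downward induction on $j$ (taking $A=M^{t-j}$ at step $j$) yields $\norm{e_t}_{\HessEmp{\lambda}{}{\optimum}}\le\big(\prod_{k=j+1}^{t}c_k\big)\norm{M^{t-j}e_j}_{\HessEmp{\lambda}{}{\optimum}}$, and for $j=0$, since $\prod_{k=1}^{t}c_k=\cstP{t}$ and $e_0=-\optimum$,
\[
\norm{e_t}_{\HessEmp{\lambda}{}{\optimum}}\le\cstP{t}\,\norm{M^{t}\optimum}_{\HessEmp{\lambda}{}{\optimum}}.
\]

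Because $M$ commutes with $\HessEmp{\lambda}{}{\optimum}$, this last norm is $\lambda^{t}\norm{\HessEmp{\lambda}{(1-2t)/2}{\optimum}\optimum}$. Using the source condition $\optimum=\Hess{}{r}{\optimum}v$ (\cref{hyp:source_appendix}) and the factorisation $\HessEmp{\lambda}{(1-2t)/2}{\optimum}\Hess{}{r}{\optimum}=\HessEmp{\lambda}{r+1/2-t}{\optimum}\big(\HessEmp{\lambda}{-r}{\optimum}\Hess{}{r}{\optimum}\big)$, it is at most $\lambda^{t}\,\norm{\HessEmp{\lambda}{r+1/2-t}{\optimum}}\,\norm{\HessEmp{\lambda}{-r}{\optimum}\Hess{}{r}{\optimum}}\,\norm{v}$. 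The first factor is purely empirical: since $\HessEmp{}{}{\optimum}\preceq\cstBTwoStr\II$ it is $\le\lambda^{r+1/2-t}$ when $t\ge r+1/2$ (no saturation, $s=r+1/2$), and is bounded by the $\rho$‑dependent constant appearing in $\cstT$ (a power of $\cstBTwoStr+\lambda$, e.g. $\cstBTwoStr^{\,r-t+1/2}$ for $r>1$) when $t<r+1/2$ ($s=t$).

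The only delicate point is bounding $\norm{\HessEmp{\lambda}{-r}{\optimum}\Hess{}{r}{\optimum}}$, and it is exactly here that the three sample‑size regimes of the statement arise, because $A\preceq B$ does not imply $A^{r}\preceq B^{r}$ for $r>1$. Under $\Hyp{1}$, an operator Bernstein bound gives $\tfrac12\Hess{\lambda}{}{\optimum}\preceq\HessEmp{\lambda}{}{\optimum}\preceq\tfrac32\Hess{\lambda}{}{\optimum}$ w.p.\ $\ge1-\delta$. If $r\le1/2$ then $2r\le1$ and operator monotonicity of $x\mapsto x^{2r}$ turns $\Hess{}{}{\optimum}\preceq2\HessEmp{\lambda}{}{\optimum}$ into $\Hess{}{2r}{\optimum}\preceq2^{2r}\HessEmp{\lambda}{2r}{\optimum}$, hence $\norm{\HessEmp{\lambda}{-r}{\optimum}\Hess{}{r}{\optimum}}\le2^{r}$. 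If $1/2<r\le1$, monotonicity of $x\mapsto x^{2r}$ fails; one instead uses the Cordes inequality $\norm{\HessEmp{\lambda}{-r}{\optimum}\Hess{}{r}{\optimum}}\le\norm{\HessEmp{\lambda}{-1}{\optimum}\Hess{}{}{\optimum}}^{r}$ with $\norm{\HessEmp{\lambda}{-1}{\optimum}\Hess{}{}{\optimum}}\le1+\lambda^{-1}\norm{\Hess{}{}{\optimum}-\HessEmp{}{}{\optimum}}$, and this is where $\Hyp{1b}$ enters, making a Bernstein bound give $\norm{\Hess{}{}{\optimum}-\HessEmp{}{}{\optimum}}\le\lambda$. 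If $r>1$, one writes $r=\intpart{r}+(r-\intpart{r})$, expands $\Hess{}{\intpart{r}}{\optimum}=(\HessEmp{\lambda}{}{\optimum}+(\Hess{}{}{\optimum}-\HessEmp{}{}{\optimum})-\lambda\II)^{\intpart{r}}$ and the resulting product, and bounds each term by powers of $\norm{\Hess{}{}{\optimum}-\HessEmp{}{}{\optimum}}$ (controlled by $\Hyp{2}$, with a scaling tuned to keep the $t$‑dependence sharp), of $\lambda$ and of $\cstBTwoStr$; the fractional part $r-\intpart{r}\le1$ is handled by operator monotonicity. Collecting the binomial coefficients and powers of $\cstBTwoStr$ produces $w(r)=r\,2^{\intpart{r}+1}\cstBTwoStr^{r}$ and, in the saturating subcase, the extra summand $\cstBTwoStr^{\,r-t+1/2}$.

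Putting the three factors together gives $\norm{M^{t}\optimum}_{\HessEmp{\lambda}{}{\optimum}}\le\cstT(r,t)\lambda^{s}$ with $s=(r+1/2)\wedge t$ and $\cstT$ as in the statement ($\norm v$ coming from the factorisation). Finally $\Hyp{1}$ also gives $\Hess{}{}{\optimum}\preceq2\HessEmp{\lambda}{}{\optimum}$, so
\[
\norm{\estEmp{\lambda}{t}-\estInt{\lambda}{t}}_{\Hess{}{}{\optimum}}\le\sqrt2\,\norm{e_t}_{\HessEmp{\lambda}{}{\optimum}}\le\sqrt2\,\cstT(r,t)\,\cstP{t}\,\lambda^{s},
\]
and a union bound over the (two or three) concentration events yields the claimed probability. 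The technical heart is the power‑mismatch estimate of the previous paragraph: comparing $\Hess{}{r}{\optimum}$ with $\HessEmp{\lambda}{r}{\optimum}$ must be done by hand for $r>1/2$, paying with $\Hyp{1b}$, $\Hyp{2}$ and the combinatorial constant $w(r)$, and preserving the $1/(t-1/2)^{r}$ gain in $\cstT$ requires care in exactly that step.
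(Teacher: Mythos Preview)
Your skeleton is exactly the paper's: write the optimality conditions, apply the gradient lower bound (\cref{lem:stacking_operator_gradient}) with $A=M^{t-j}$, unfold to $\norm{e_t}_{\HessEmp{\lambda}{}{\optimum}}\le\cstP{t}\,\lambda^{t}\norm{\HessEmp{\lambda}{-(t-1/2)}{\optimum}\optimum}$, insert the source condition, and finish by passing to the $\Hess{}{}{\optimum}$-norm under $\Hyp{1}$. For $r\le 1$ your argument also agrees with the paper up to harmless variants (you invoke L\"owner--Heinz monotonicity for $r\le 1/2$ where the paper uses the Cordes inequality, and both routes yield the $2^{r}$ factor; for $1/2<r\le1$ both use Cordes and $\Hyp{1b}$).

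The gap is in the case $r>1$. You keep the factorisation
\[
\lambda^{t}\,\HessEmp{\lambda}{-(t-1/2)}{\optimum}\Hess{}{r}{\optimum}
=\lambda^{t}\,\HessEmp{\lambda}{r+1/2-t}{\optimum}\cdot\bigl(\HessEmp{\lambda}{-r}{\optimum}\Hess{}{r}{\optimum}\bigr)
\]
and propose to control the second factor by a binomial expansion of $\Hess{}{\intpart{r}}{\optimum}$. But once you bound $\norm{\HessEmp{\lambda}{r+1/2-t}{\optimum}}$ on its own you get $\lambda^{r+1/2-t}$ and nothing more; the $(t-1/2)^{-r}$ improvement in $\cstT$ is irretrievably lost, because that gain comes from the joint spectral function $\sigma\mapsto(\lambda/(\lambda+\sigma))^{t-1/2}\sigma^{r}$ (this is \cref{lem:bound_residual_it_spectral_function}), not from either factor separately. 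The paper therefore uses a \emph{different} split for $r>1$:
\[
\lambda^{t}\,\HessEmp{\lambda}{-(t-1/2)}{\optimum}\Hess{}{r}{\optimum}
=\lambda^{t}\,\HessEmp{\lambda}{-(t-1/2)}{\optimum}\HessEmp{}{r}{\optimum}
+\lambda^{t}\,\HessEmp{\lambda}{-(t-1/2)}{\optimum}\bigl(\Hess{}{r}{\optimum}-\HessEmp{}{r}{\optimum}\bigr).
\]
The first summand is purely empirical, so spectral calculus (\cref{lem:bound_residual_it_spectral_function}) gives $\lambda^{s}\cdot r/(t-1/2)^{r}$ when $r+1/2<t$ and $\lambda^{s}\cdot\cstBTwoStr^{\,r-t+1/2}$ otherwise. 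The second summand is bounded via the power-difference estimate $\norm{A^{r}-B^{r}}\le w(r)\norm{A-B}$ of \cref{lem:hermitian_inequalities} (this is where $w(r)=r\,2^{\intpart{r}+1}\cstBTwoStr^{r}$ actually comes from, by a Taylor expansion of $(1-x)^{r}$, not from binomial coefficients), together with $\Hyp{2}$, which is tuned precisely so that $\norm{\Hess{}{}{\optimum}-\HessEmp{}{}{\optimum}}\le\lambda^{s-1/2}/(t-1/2)^{r}$ and the second summand matches the first. Your sketch neither recovers the $(t-1/2)^{-r}$ factor nor the stated $w(r)$, so for $r>1$ it does not prove $\cstT(r,t)$ as written.
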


This term is the optimal bias for LS with the usual excess risk decomposition. The saturation effect is explicit: we go from a bias decay in $\lambda^t$ when $t\leq r+1/2$ to $\lambda^r$ when the source condition saturates IT's regularization. That is, IT's estimator has a qualification of $t$, in the sense that it can exploits source condition up to $r = t-1/2$. If $r > t-1/2$, the estimator saturates and the learning rate becomes suboptimal. 

\begin{proof}
    This proof simply relies on the upper bound on gradients enabled by GSC functions. We will use \cref{lem:stacking_operator_gradient} for that purpose. Also, we will use the definition of a proximal sequence; that is, we have that
    \begin{equation*}
        \forall k \leq t, \quad \nabla \lossLEmp{}{}{\estEmp{\lambda}{k}} + \lambda (\estEmp{\lambda}{k} - \estEmp{\lambda}{k-1}) = 0,
    \end{equation*}
    which is just another way of saying that we perform implicit gradient steps of size $1/\lambda$. 

    ~\paragraph{Changing the norm.}
    We first change the norm we operate on:
    \begin{align*}
        \norm{\estEmp{\lambda}{t} - \estInt{\lambda}{t}}_{\Hess{}{}{\optimum}}
        &\leq \norm{\HessEmp{\lambda}{-1/2}{\optimum}\Hess{}{1/2}{\optimum}}\norm{\estEmp{\lambda}{t} - \estInt{\lambda}{t}}_{\HessEmp{\lambda}{}{\optimum}} \\
        &\leq \norm{\HessEmp{\lambda}{-1/2}{\optimum}\Hess{\lambda}{1/2}{\optimum}}\norm{\estEmp{\lambda}{t} - \estInt{\lambda}{t}}_{\HessEmp{\lambda}{}{\optimum}}.
    \end{align*}
    We bound the operator norm using \cref{prop:main_concentration_bound_operators} in \cref{sec:technical_lemmas_appendix}, with $\FFl = \cstBTwoStr/\lambda$. We obtain:
    \begin{equation}\label{eq:bound_bias_glue_1}
        \Hyp{1}: n \geq 24 \frac{\cstBTwoStr}{\lambda} \log\frac{8 \cstBTwoStr}{\lambda\delta} \implies \norm{\HessEmp{\lambda}{-1/2}{\optimum}\Hess{\lambda}{1/2}{\optimum}} \leq \sqrt{2}.
    \end{equation}
    We now proceed in bounding the distance between estimates, that is the quantity $\normtxt{\estEmp{\lambda}{t} - \estInt{\lambda}{t}}_{\HessEmp{\lambda}{}{\optimum}}$. We denote
    \begin{equation}
        s = (r+1/2) \land t.
    \end{equation}

    ~\paragraph{Upper bound on gradients.}
    Use \cref{lem:stacking_operator_gradient} on $\lossLEmp{\lambda}{}{}$ to have:
    \begin{align*}
        \norm{\estEmp{\lambda}{t} - \estInt{\lambda}{t}}_{\HessEmp{\lambda}{}{\optimum}}
        &\leq \phibot^{-1}\p{\tOpEmp(\estEmp{\lambda}{t} - \estInt{\lambda}{t})} e^{\tOpEmp(\estInt{\lambda}{t} - \optimum)} \norm{\nabla \lossLEmp{\lambda}{}{\estEmp{\lambda}{t}} - \nabla \lossLEmp{\lambda}{}{\estInt{\lambda}{t}}}_{\HessEmp{\lambda}{-1}{\optimum}} \\
        &= \phibot^{-1}\p{\tOpEmp(\estEmp{\lambda}{t} - \estInt{\lambda}{t})} e^{\tOpEmp(\estInt{\lambda}{t} - \optimum)} \norm{\lambda (\estEmp{\lambda}{t-1} - \estInt{\lambda}{t-1})}_{\HessEmp{\lambda}{-1}{\optimum}} \\
        &= \phibot^{-1}\p{\tOpEmp(\estEmp{\lambda}{t} - \estInt{\lambda}{t})} e^{\tOpEmp(\estInt{\lambda}{t} - \optimum)} \norm{\lambda \HessEmp{\lambda}{-1}{\optimum}(\estEmp{\lambda}{t-1} - \estInt{\lambda}{t-1})}_{\HessEmp{\lambda}{}{\optimum}}.
    \end{align*}
    Let us detail the recursion. Let $k \leq t$. Then, the following inequality holds, thanks to \cref{lem:stacking_operator_gradient}:
    \begin{align*}
        \norm{\lambda^k \HessEmp{\lambda}{-k}{\optimum}(\estEmp{\lambda}{t-k} - \estInt{\lambda}{t-k})}_{\HessEmp{\lambda}{}{\optimum}} 
        &\leq \begin{aligned}[t]
        &\phibot^{-1}\p{\tOpEmp(\estEmp{\lambda}{t-k} - \estInt{\lambda}{t-k})} e^{\tOpEmp(\estInt{\lambda}{t-k} - \optimum)} \\    
        &\norm{\lambda^k \HessEmp{\lambda}{-k}{\optimum}\p{\nabla \lossLEmp{\lambda}{}{\estEmp{\lambda}{t-k}} - \nabla \lossLEmp{\lambda}{}{\estInt{\lambda}{t-k}}}}_{\HessEmp{\lambda}{-1}{\optimum}}
        \end{aligned} \\
        &= \begin{aligned}[t]
        &\phibot^{-1}\p{\tOpEmp(\estEmp{\lambda}{t-k} - \estInt{\lambda}{t-k})} e^{\tOpEmp(\estInt{\lambda}{t-k} - \optimum)} \\    
        &\norm{\lambda^{k+1} \HessEmp{\lambda}{-k}{\optimum}\p{\estEmp{\lambda}{t-(k+1)} - \estInt{\lambda}{t-(k+1)}}}_{\HessEmp{\lambda}{-1}{\optimum}}
        \end{aligned} \\
        &= \begin{aligned}[t]
        &\phibot^{-1}\p{\tOpEmp(\estEmp{\lambda}{t-k} - \estInt{\lambda}{t-k})} e^{\tOpEmp(\estInt{\lambda}{t-k} - \optimum)} \\    
        &\norm{\lambda^{k+1} \HessEmp{\lambda}{-(k+1)}{\optimum}\p{\estEmp{\lambda}{t-(k+1)} - \estInt{\lambda}{t-(k+1)}}}_{\HessEmp{\lambda}{}{\optimum}}.
        \end{aligned}
    \end{align*}
    Thus, unfolding the recursion, we obtain:
    \begin{equation}
        \begin{aligned}
            \norm{\estEmp{\lambda}{t} - \estInt{\lambda}{t}}_{\HessEmp{\lambda}{}{\optimum}} &\leq \cstP{t} \norm{\lambda^t \HessEmp{\lambda}{-t}{\optimum}\optimum}_{\HessEmp{\lambda}{}{\optimum}}, \\
            \text{with} \quad \cstP{t} &\eqdef \prod_{k=1}^t \phibot^{-1}\p{\tOpEmp(\estEmp{\lambda}{k} - \estInt{\lambda}{k})} e^{\tOpEmp(\estInt{\lambda}{k} - \optimum)}.
        \end{aligned}
    \end{equation}
    We now use the source condition on $\optimum$. Recall that it gives
    \begin{equation*}
        \optimum = \Hess{}{r}{\optimum} v,
    \end{equation*}
    for some $v \in \HH$. Thus, we have:
    \begin{align}
        \norm{\lambda^t \HessEmp{\lambda}{-t}{\optimum}\optimum}_{\HessEmp{\lambda}{}{\optimum}} 
        &= \norm{\lambda^t \HessEmp{\lambda}{-(t-1/2)}{\optimum}\Hess{}{r}{\optimum} v} \\
        \label{eq:bound_bias_glue_2}
        &\leq \norm{\lambda^t \HessEmp{\lambda}{-(t-1/2)}{\optimum}\Hess{}{r}{\optimum}} \norm{v}
    \end{align}
    We need to distinguish between $r \leq 1$ and $r > 1$ to bound the operator norm
    \begin{equation*}
        \norm{\lambda^t \HessEmp{\lambda}{-(t-1/2)}{\optimum}\Hess{}{r}{\optimum}}.
    \end{equation*}

    ~\paragraph{Case $r \leq 1$.}
    We use the following decomposition:
    \begin{align*}
        \norm{\lambda^t \HessEmp{\lambda}{-(t-1/2)}{\optimum}\Hess{}{r}{\optimum}}
        &\leq \norm{\lambda^t \HessEmp{\lambda}{-(t-1/2)}{\optimum}\HessEmp{\lambda}{r}{\optimum}} \norm{\HessEmp{\lambda}{-r}{\optimum} \Hess{}{r}{\optimum}}.
    \end{align*}
    The first term is bounded like this:
    \begin{align*}
        \norm{\lambda^t \HessEmp{\lambda}{-(t-1/2)+r}{\optimum}} 
        &\leq \sup_{\widehat{\sigma}_{\min} < \sigma \leq \cstBTwoStr} \frac{\lambda^t}{(\sigma+\lambda)^{t-1/2-r}} \\
        &\leq \lambda^s \begin{cases}
            1 \quad &\iftext r+1/2 < t \\
            \cstBTwoStr + \lambda \quad &\iftext t=1 \andtext r>1/2.
        \end{cases}
    \end{align*}
    This illustrates that Tikhonov regularization ($t=1$) saturates at $r=1/2$. 
    
    For the second term, write
    \begin{equation*}
        \norm{\HessEmp{\lambda}{-r}{\optimum} \Hess{}{r}{\optimum}} \leq \norm{\HessEmp{\lambda}{-r}{\optimum} \Hess{\lambda}{r}{\optimum}}
    \end{equation*}
    Then, use the Hermitian inequalities of \cref{eq:lem_hermitian_ineq_diff_rleq1} in \cref{lem:hermitian_inequalities}, then use the concentration inequalities of \cref{prop:main_concentration_bound_operators}. Both can be found in \cref{sec:technical_lemmas_appendix}. In details:
    \begin{itemize}
        \item If $r \leq 1/2$, use  then the concentration inequality of \cref{eq:prop_concentration_sqrt}:
        \begin{align*}
            \norm{\HessEmp{\lambda}{-r}{\optimum} \Hess{\lambda}{r}{\optimum}} 
            &\leq \norm{\HessEmp{\lambda}{-1/2}{\optimum} \Hess{\lambda}{1/2}{\optimum}}^{2r} \\
            &\leq 2^{r/2} \iftext \Hyp{1}.
        \end{align*}
        with confidence $1 - \delta$.
        \item If $r > 1/2$, use the concentration inequality of \cref{eq:prop_concentration_inv}:
        \begin{align*}
            \norm{\HessEmp{\lambda}{-r}{\optimum} \Hess{\lambda}{r}{\optimum}} 
            &\leq \norm{\HessEmp{\lambda}{-1}{\optimum} \Hess{\lambda}{}{\optimum}}^{r} \\
            &\leq 2^{r} \iftext \Hyp{1b}: n \geq 8 \frac{\cstBTwoStr^2}{\lambda^2} \log^2 \frac{2}{\delta}.
        \end{align*}
    \end{itemize}
    All in all, after simplification, the bound on the operator norm when $r \leq 1$ reads 
    \begin{equation}\label{eq:bound_bias_glue_3}
        \norm{\lambda^t \HessEmp{\lambda}{-(t-1/2)}{\optimum}\Hess{}{r}{\optimum}}
        \leq \lambda^s (1 \lor (\cstBTwoStr + \lambda)) 2^r \quad \iftext \begin{cases}
            \Hyp{1}  &\text{ when } r \leq 1/2 \\
            \Hyp{1b} &\text{ when } r > 1/2,
        \end{cases}
    \end{equation}
    with confidence $1-\delta$. 
    We now turn to the case $r > 1$. 

    ~\paragraph{Case $r > 1$.}
    We tackle this case with a different decomposition:
    \begin{align*}
        \norm{\lambda^t \HessEmp{\lambda}{-(t-1/2)}{\optimum}\Hess{}{r}{\optimum}} 
        &\leq \norm{\lambda^t \HessEmp{\lambda}{-(t-1/2)}{\optimum}\HessEmp{}{r}{\optimum}} + \norm{\lambda^t \HessEmp{\lambda}{-(t-1/2)}{\optimum} (\Hess{}{r}{\optimum} - \HessEmp{}{r}{\optimum})}
    \end{align*}
    Looking at the first term; recalling that $\HessEmp{}{}{\optimum} \leq \cstBTwoStr$, we have:
    \begin{align*}
        \norm{\lambda^t \HessEmp{\lambda}{-(t-1/2)}{\optimum}\HessEmp{}{r}{\optimum}} 
        &\leq \sqrt{\lambda} \sup_{0 < \sigma \leq \cstBTwoStr} \p{\frac{\lambda}{\lambda + \sigma}}^{t-1/2} \sigma^r \\
        &\leq \lambda^s \begin{cases}
            \frac{r}{(t-1/2)^r} \quad &\iftext \; r + 1/2 < t \\
            \frac{\cstBTwoStr^r}{\p{\cstBTwoStr^r + \lambda}^{t-1/2}} \quad &\otwtext
        \end{cases} \\
        &\leq \lambda^s \begin{cases}
            \frac{r}{(t-1/2)^r} \quad &\iftext \; r + 1/2 < t \\
            \cstBTwoStr^{r-t+1/2} \quad &\otwtext
        \end{cases}
    \end{align*}
    where we used the computation of \cref{lem:bound_residual_it_spectral_function}. The second term can be upper bounded as follows:
    \begin{align*}
        \norm{\lambda^t \HessEmp{\lambda}{-(t-1/2)}{\optimum} (\Hess{}{r}{\optimum} - \HessEmp{}{r}{\optimum})}
        &\leq \norm{\lambda^t \HessEmp{\lambda}{-(t-1/2)}{\optimum}} \norm{\Hess{}{r}{\optimum} - \HessEmp{}{r}{\optimum}} \\
        &\leq w(r) \sqrt{\lambda} \norm{\Hess{}{}{\optimum} - \HessEmp{}{}{\optimum}} \\
        &\leq w(r) \frac{\lambda^s}{(t-1/2)^r} \iftext \Hyp{2} : n \geq 2 \p{1 \lor \p{\frac{2 \cstBTwoStr (t-1/2)^r}{\lambda^{s-1/2}}}^2} \log \frac{2}{\delta}
    \end{align*}
    with confidence $1 - \delta$. We applied \cref{eq:lem_hermitian_ineq_diff_rg1} in \cref{lem:hermitian_inequalities} on the second inequality, and \cref{eq:prop_concentration_HS} in \cref{prop:main_concentration_bound_operators} for the last inequality, both of which can be found in \cref{sec:technical_lemmas_appendix}. We used:
    \begin{equation}\label{eq:definition_w}
        w(r) = r 2^{\intpart{r}+1} \cstBTwoStr^r.
    \end{equation}
    Thus, the bound on the operator norm when $r>1$ reads: 
    \begin{equation}\label{eq:bound_bias_glue_4}
        \norm{\lambda^t \HessEmp{\lambda}{-(t-1/2)}{\optimum}\Hess{}{r}{\optimum}} 
        \leq \lambda^s \begin{cases}
            \frac{w(r) + r}{(t-1/2)^r} \quad &\iftext \; r + 1/2 < t \\
            \frac{w(r)}{(t-1/2)^r} + \cstBTwoStr^{r-t+1/2} \quad &\otwtext
        \end{cases} \quad \iftext \Hyp{2},
    \end{equation}
    with confidence $1-\delta$. 

    ~\paragraph{Gluing things together.}
    We proceed to the conclusion. Define the following conditions:
    \begin{align*}
        \Hyp{1}: n &\geq 24 \frac{\cstBTwoStr}{\lambda} \log\frac{16 \cstBTwoStr}{\lambda\delta}, \\
        \Hyp{1b}: n &\geq 8 \frac{\cstBTwoStr^2}{\lambda^2} \log^2 \frac{4}{\delta}, \\
        \Hyp{2}: n &\geq 2 \csb{1 \lor \p{\frac{2 \cstBTwoStr (t-1/2)^r}{\lambda^{s-1/2}}}^2} \log \frac{4}{\delta},
    \end{align*}
    where we replace $\delta$ by $\delta / 2$ in order to have bounds with confidence $1-\delta/2$, so that the overall bound holds with confidence $1 - \delta$ (in fact, $1 - \delta/2$ in the first case).
    Now assume the following:
    \begin{align*}
        \Hyp{1} \quad &\iftext r \leq 1/2, \\
        \Hyp{1} + \Hyp{1b} \quad &\iftext 1/2 < r \leq 1, \\
        \Hyp{1} + \Hyp{2} \quad &\iftext 1 < r.
    \end{align*}
    Then, we can chain the inequalities of \cref{eq:bound_bias_glue_1,eq:bound_bias_glue_2,eq:bound_bias_glue_3,eq:bound_bias_glue_4}. We obtain:
    \begin{equation}
        \norm{\estEmp{\lambda}{t} - \estInt{\lambda}{t}}_{\Hess{}{}{\optimum}}
        \leq \sqrt{2} \norm{v} \cstP{t} \lambda^s \begin{cases}
            (1 \lor (\cstBTwoStr + \lambda)) 2^r            \quad &\iftext r \leq 1, \\
            \frac{w(r) + r}{(t-1/2)^r}                      \quad &\iftext r > 1 \andtext r + 1/2 < t, \\
            \frac{w(r)}{(t-1/2)^r} + \cstBTwoStr^{r-t+1/2}  \quad &\iftext r > 1 \andtext r + 1/2 \geq t,
        \end{cases}
    \end{equation}
    with confidence $1 - \delta$.
\end{proof}

\subsection{Bounding the variance}
After bounding the bias, we study the variance term: $\norm{\estInt{\lambda}{t} - \optimum}_{\Hess{}{}{\optimum}}$. 

\begin{theorem}[Optimal variance of Iterated Tikhonov estimator]\label{th:bound_on_variance}
    Let $\delta \in (0, 1]$. Recall the definition of the degrees of freedom $\df_\lambda$. Define the following conditions on the number of samples:
    \begin{equation*}
        \begin{aligned}
            \Hyp{1}: n &\geq 24 \frac{\cstBTwoStr}{\lambda} \log\frac{16 \cstBTwoStr}{\lambda\delta}, \\
            \Hyp{3}: n &\geq 2 \frac{\cstBOneStr^2}{\lambda \df_\lambda} \log \frac{4}{\delta}.
        \end{aligned}
    \end{equation*}
    Then, with probability greater than $1-\delta$:
    \begin{equation*}
        \norm{\estInt{\lambda}{t} - \optimum}_{\Hess{}{}{\optimum}} \leq 4 \sqrt{2} t \cstR{t} \sqrt{\frac{\df_\lambda}{n}} \cdot \sqrt{\log 2/\delta},
    \end{equation*}
    where we introduced:
    \begin{equation*}
        \cstR{t} \eqdef \prod_{k=1}^t \phibot^{-1}\p{\tOpEmp(\estInt{\lambda}{k} - \optimum)}.
    \end{equation*}
\end{theorem}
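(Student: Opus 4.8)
The plan is to mirror the structure of the bias bound (\cref{th:bound_on_bias}): first pass from the $\Hess{}{}{\optimum}$-norm to the empirical regularized norm $\HessEmp{\lambda}{}{\optimum}$, then set up a one–step recursion on $\norm{\estInt{\lambda}{k}-\optimum}_{\HessEmp{\lambda}{}{\optimum}}$ using the first–order optimality of the proximal steps of \cref{def:error_decomposition} together with the lower gradient bound of \cref{lem:stacking_operator_gradient}, and finally concentrate the residual gradient $\nabla\lossLEmp{}{}{\optimum}$, whose second moment is exactly $\df_\lambda$. First I would use $\Hess{}{}{\optimum}\preceq\Hess{\lambda}{}{\optimum}$ and, under $\Hyp{1}$, invoke \cref{prop:main_concentration_bound_operators} to get $\norm{\HessEmp{\lambda}{-1/2}{\optimum}\Hess{\lambda}{1/2}{\optimum}}\le\sqrt{2}$ with probability $1-\delta/2$, so that $\norm{\estInt{\lambda}{t}-\optimum}_{\Hess{}{}{\optimum}}\le\sqrt{2}\,\norm{\estInt{\lambda}{t}-\optimum}_{\HessEmp{\lambda}{}{\optimum}}$ and, likewise, $\norm{\cdot}_{\HessEmp{\lambda}{-1}{\optimum}}\le\sqrt{2}\,\norm{\cdot}_{\Hess{\lambda}{-1}{\optimum}}$.

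For the recursion, the first–order condition of $\estInt{\lambda}{k+1}=\prox{\lossLEmp{}{}{}/\lambda}(\estInt{\lambda}{k})$ reads $\nabla\lossLEmp{}{}{\estInt{\lambda}{k+1}}=-\lambda(\estInt{\lambda}{k+1}-\estInt{\lambda}{k})$, hence $\nabla\lossLEmp{\lambda}{}{\estInt{\lambda}{k+1}}=\lambda\estInt{\lambda}{k}$; since $\optimum$ minimizes $\lossL{}{}{}$ over $\HH$ we have $\nabla\lossL{}{}{\optimum}=0$, so $\nabla\lossLEmp{\lambda}{}{\estInt{\lambda}{k+1}}-\nabla\lossLEmp{\lambda}{}{\optimum}=\lambda(\estInt{\lambda}{k}-\optimum)-\nabla\lossLEmp{}{}{\optimum}$. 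Applying \cref{lem:stacking_operator_gradient} to $\lossLEmp{\lambda}{}{}$ with $A=\II$, $\theta=\xi=\optimum$, $\nu=\estInt{\lambda}{k+1}$, then the triangle inequality, and finally $\lambda^{2}\HessEmp{\lambda}{-1}{\optimum}\preceq\HessEmp{\lambda}{}{\optimum}$ (which holds since $\lambda\II\preceq\HessEmp{\lambda}{}{\optimum}$), yields
\[
\phibot\!\p{\tOpEmp(\estInt{\lambda}{k+1}-\optimum)}\,\norm{\estInt{\lambda}{k+1}-\optimum}_{\HessEmp{\lambda}{}{\optimum}}\le \norm{\estInt{\lambda}{k}-\optimum}_{\HessEmp{\lambda}{}{\optimum}}+\norm{\nabla\lossLEmp{}{}{\optimum}}_{\HessEmp{\lambda}{-1}{\optimum}}.
\]
Starting from $\estInt{\lambda}{0}=\optimum$ and using $\phibot^{-1}\ge1$, an elementary induction ($u_{k+1}\le c_{k+1}(u_k+\Delta)$ with $c_k\ge1$, $u_0=0$ gives $u_t\le t\prod_{k}c_k\,\Delta$) gives $\norm{\estInt{\lambda}{t}-\optimum}_{\HessEmp{\lambda}{}{\optimum}}\le t\,\cstR{t}\,\norm{\nabla\lossLEmp{}{}{\optimum}}_{\HessEmp{\lambda}{-1}{\optimum}}$.

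It remains to bound $\norm{\nabla\lossLEmp{}{}{\optimum}}_{\Hess{\lambda}{-1}{\optimum}}$. The vectors $\Hess{\lambda}{-1/2}{\optimum}\nabla\ell_{z_i}(\optimum)$ are i.i.d., mean zero, with second moment $\df_\lambda$ and almost sure norm at most $\cstBOneStr/\sqrt{\lambda}$ (from \cref{hyp:technical_assumptions} and $\Hess{\lambda}{-1/2}{\optimum}\preceq\lambda^{-1/2}\II$); a Bernstein inequality for Hilbert–space valued variables then gives, with probability $1-\delta/2$,
\[
\norm{\nabla\lossLEmp{}{}{\optimum}}_{\Hess{\lambda}{-1}{\optimum}}\le \frac{2\cstBOneStr\log(2/\delta)}{\sqrt{\lambda}\,n}+\sqrt{\frac{2\df_\lambda\log(2/\delta)}{n}},
\]
and under $\Hyp{3}$ the first term is dominated by the second, so the sum is at most a numerical constant times $\sqrt{\df_\lambda/n}\,\sqrt{\log(2/\delta)}$. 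Chaining the three steps (norm change by $\sqrt2$, recursion by $t\,\cstR{t}$, gradient concentration, plus another $\sqrt2$ to move that concentration into the $\HessEmp{\lambda}{-1}{\optimum}$-norm) and taking a union bound over the two probabilistic events yields the stated inequality.

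The main obstacle is the last step: one needs a clean Bernstein-type concentration in the (possibly infinite–dimensional) Hilbert space, and — contrary to least squares — one cannot use the identity $\df_\lambda=\Tr\Hess{}{}{\optimum}\HessEmp{\lambda}{-1}{\optimum}$, so $\df_\lambda$ must enter \emph{directly} as the second moment of $\Hess{\lambda}{-1/2}{\optimum}\nabla\ell_z(\optimum)$; the almost sure bound $\cstBOneStr/\sqrt{\lambda}$ then contributes a lower–order term whose interplay with $\df_\lambda$ is exactly what the sample condition $\Hyp{3}$ is calibrated to absorb without degrading the $\sqrt{\df_\lambda/n}$ rate. A secondary, routine point deferred to the aggregation of constants is controlling the prefactor $\cstR{t}=\prod_{k}\phibot^{-1}(\tOpEmp(\estInt{\lambda}{k}-\optimum))$, which is $O(1)$ once $\lambda$ is small enough because the $\estInt{\lambda}{k}$ approach $\optimum$, so that $\tOpEmp(\estInt{\lambda}{k}-\optimum)\to0$.
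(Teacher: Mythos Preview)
Your proposal is correct and matches the paper's proof in all essential points: the norm change under $\Hyp{1}$, the gradient--lower--bound recursion driven by the proximal optimality conditions, and the Bernstein concentration of $\Hess{\lambda}{-1/2}{\optimum}\nabla\ell_{z}(\optimum)$ (whose second moment is $\df_\lambda$ by definition) with $\Hyp{3}$ absorbing the lower--order term. The only cosmetic difference is in how the recursion is unrolled: the paper carries the operators $A=\lambda^{k}\HessEmp{\lambda}{-k}{\optimum}$ through \cref{lem:stacking_operator_gradient} and bounds $\sum_{k=0}^{t-1}\norm{\lambda^{k}\HessEmp{\lambda}{-k}{\optimum}}\le t$ at the end, whereas you keep $A=\II$ and use the step--wise inequality $\lambda^{2}\HessEmp{\lambda}{-1}{\optimum}\preceq\HessEmp{\lambda}{}{\optimum}$, reaching the same $t\,\cstR{t}$ factor by the elementary induction $u_{k+1}\le c_{k+1}(u_k+\Delta)$.
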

\begin{proof}
    The proof begins similarly to the study of the bias term (\cref{th:bound_on_bias}).
    ~\paragraph{Changing the norm.}
    We have the following bound (proof of \cref{th:bound_on_bias}, \cref{eq:bound_bias_glue_1}):
    \begin{equation}\label{eq:bound_variance_glue_1}
        \Hyp{1}: n \geq 24 \frac{\cstBTwoStr}{\lambda} \log\frac{8 \cstBTwoStr}{\lambda\delta} \implies \norm{\estInt{\lambda}{t} - \optimum}_{\Hess{}{}{\optimum}} \leq \sqrt{2} \norm{\estInt{\lambda}{t} - \optimum}_{\HessEmp{\lambda}{}{\optimum}}.
    \end{equation}

    ~\paragraph{Upper bounds on gradient.}
    To ease the notation, we denote by $\csta{k} = \phibot^{-1}\p{\tOpEmp(\estInt{\lambda}{k} - \optimum)}$. We have, thanks to the lower bound on gradient of \cref{eq:prop_gsc_gradient_lowerbound}:
    \begin{align*}
        \norm{\estInt{\lambda}{t} - \optimum}_{\HessEmp{\lambda}{}{\optimum}}
        &\leq \csta{t} \norm{\nabla \lossLEmp{\lambda}{}{\estInt{\lambda}{t}} - \nabla \lossLEmp{\lambda}{}{\optimum}}_{\HessEmp{\lambda}{-1}{\optimum}} \\
        &= \csta{t} \norm{\lambda(\estInt{\lambda}{t-1} - \optimum) - \nabla\lossLEmp{}{}{\optimum}}_{\HessEmp{\lambda}{-1}{\optimum}} \\
        &= \csta{t} \norm{\lambda\HessEmp{\lambda}{-1}{\optimum}(\estInt{\lambda}{t-1} - \optimum)}_{\HessEmp{\lambda}{}{\optimum}} + \csta{t} \norm{\nabla\lossLEmp{}{}{\optimum}}_{\HessEmp{\lambda}{-1}{\optimum}} \\
        &\leq \csta{t} \csta{t-1} \norm{\lambda^2 \HessEmp{\lambda}{-2}{\optimum}(\estInt{\lambda}{t-2} - \optimum)}_{\HessEmp{\lambda}{}{\optimum}} + \csb{\csta{t} + \csta{t-1} \norm{\lambda \HessEmp{\lambda}{-1}{\optimum}}}\norm{\nabla\lossLEmp{}{}{\optimum}}_{\HessEmp{\lambda}{-1}{\optimum}}
    \end{align*}
    We can unfold the recursion. The first term will disappears thanks to $\estInt{\lambda}{0} = \optimum$, and we are left with:
    \begin{align}
        \norm{\estInt{\lambda}{t} - \optimum}_{\HessEmp{\lambda}{}{\optimum}}
        &\leq \sum_{k=0}^{t-1} \p{\prod_{i=t-k}^{t} \csta{i}} \norm{\lambda^k \HessEmp{\lambda}{-k}{\optimum}} \norm{\nabla\lossLEmp{}{}{\optimum}}_{\HessEmp{\lambda}{-1}{\optimum}} \\
        &\leq \cstR{t} \norm{\HessEmp{\lambda}{-1/2}{\optimum} \Hess{\lambda}{1/2}{\optimum}} \p{\sum_{k=0}^{t-1} \norm{\lambda^k \HessEmp{\lambda}{-k}{\optimum}}} \norm{\nabla\lossLEmp{}{}{\optimum}}_{\Hess{\lambda}{-1}{\optimum}}, \\
        \text{where} \quad \cstR{t} &\eqdef \prod_{k=1}^t \phibot^{-1}\p{\tOpEmp(\estInt{\lambda}{k} - \optimum)}.
    \end{align}
    Consider the prefactor of $\normtxt{\nabla\lossLEmp{}{}{\optimum}}_{\Hess{\lambda}{-1}{\optimum}}$. We will bound $\normtxt{\HessEmp{\lambda}{-1/2}{\optimum} \Hess{\lambda}{1/2}{\optimum}}$ by $\sqrt{2}$ with the same concentration argument as for the bias. The sum is more difficult to deal with. By computing the supremum of $\sigma\mapsto \lambda^k / (\sigma + \lambda)^k$ we would find that the first $\intpart{t/2}$ terms have their maximum in $0$. We would end up with a bound for the sum of the order of $t/2$. We rather use the simpler, if not optimal, following bound:
    \begin{equation*}
        \sum_{k=0}^{t-1} \norm{\lambda^k \HessEmp{\lambda}{-k}{\optimum}} \leq t.
    \end{equation*}
    It is suboptimal, but of the same order of an exact computation of the operator norm. Thus, we now have:
    \begin{equation}\label{eq:bound_variance_glue_2}
        \norm{\estInt{\lambda}{t} - \optimum}_{\HessEmp{\lambda}{}{\optimum}} 
        \leq \sqrt{2} t \cstR{t} \norm{\nabla\lossLEmp{}{}{\optimum}}_{\Hess{\lambda}{-1}{\optimum}} \quad \text{when } \Hyp{1}
    \end{equation}

    ~\paragraph{Bounding the gradient $\normtxt{\nabla\lossLEmp{}{}{\optimum}}_{\Hess{\lambda}{-1}{\optimum}}$.}
    We use a plain Bernstein inequality to bound the gradient, as in \cref{prop:main_concentration_bound_operators}:
    \begin{equation*}
        \Hess{\lambda}{-1/2}{\optimum} \nabla \lossLEmp{}{}{\optimum} = \frac{1}{n} \sum_{k=1}^n \Hess{\lambda}{-1/2}{\optimum} \nabla \ell_{z_i}(\optimum).
    \end{equation*}
    We have
    \begin{align*}
        \sup_{z \in \Supp \rho} \norm{\nabla\ell_{z} (\optimum)}_{\Hess{\lambda}{-1}{\optimum}} &\leq \frac{\cstBOneStr}{\sqrt{\lambda}}, \\
        \andtext \quad \EE_{z \sim \rho}\csb{\norm{\nabla\ell_{z} (\optimum)}_{\Hess{\lambda}{-1}{\optimum}}}^2 &\eqdef \df_\lambda.
    \end{align*}
    With confidence $1 - \delta$, we now have
    \begin{equation*}
        \norm{\nabla\lossLEmp{}{}{\optimum}}_{\Hess{\lambda}{-1}{\optimum}} 
        \leq \frac{\cstBOneStr}{\sqrt{\lambda}} \frac{2 \log 2/\delta}{n} + \sqrt{\df_\lambda \frac{2 \log 2/\delta}{n}}.
    \end{equation*}
    We simplify this equation. Assuming
    \begin{align*}
        \Hyp{3}: n \geq 2 \frac{\cstBOneStr^2}{\lambda \df_\lambda} \log \frac{2}{\delta}
    \end{align*}
    we get the bound:
    \begin{equation}\label{eq:bound_variance_glue_3}
        \norm{\nabla\lossLEmp{}{}{\optimum}}_{\Hess{\lambda}{-1}{\optimum}} 
        \leq 2 \sqrt{2} \sqrt{\df_\lambda \frac{2 \log 2/\delta}{n}}.
    \end{equation}

    ~\paragraph{Gluing things together.}
    All in all, we can glue together the inequalities in \cref{eq:bound_variance_glue_1,eq:bound_variance_glue_2,eq:bound_variance_glue_3}. We obtain:
    \begin{equation*}
        \norm{\estInt{\lambda}{t} - \optimum}_{\Hess{}{}{\optimum}} \leq 4 \sqrt{2} t \cstR{t} \sqrt{\frac{\df_\lambda}{n}} \cdot \sqrt{\log 2/\delta} \quad \text{when } \Hyp{1} + \Hyp{3}
    \end{equation*}
    with confidence $1 - 2\delta$. We obtain the statement of the theorem by replacing $\delta$ with $\delta/2$, so that the result holds with confidence $1 - \delta$. 
\end{proof}

\subsection{Conditions for non-exponentials prefactors}

The prefactors $\cstP{t}$ and $\cstR{t}$ are hard to bound; they can depend exponentially on $\norm{\optimum}$ in the worst case \cite{regularized-erm-gsc}. The purpose of this section is to give sufficient conditions on the number of samples $n$ for those quantities to turn constant. The key quantity to compare to is the \textit{Dikin radius} \cite{regularized-erm-gsc,NEURIPS2019_60495b4e}.
\begin{definition}[Dikin radius]\label{def:dikin_radius}
    For $\theta \in \HH$ and $\lambda > 0$, define $\rOp{\lambda}{\theta}$ s.t
    \begin{equation}\label{eq:def_Dikin_radius}
        \frac{1}{\rOp{\lambda}{\theta}} = \sup_{z\in\Supp\rho} \sup_{g \in \phi(z)} \norm{g}_{\Hess{\lambda}{-1}{\theta}}.
    \end{equation}
\end{definition}
The inverse of the Dikin radius can be upper bounded by $R/\sqrt{\lambda}$. However, we prefer keeping bounds in $\rStar$. Indeed, they take into account the geometry of the loss function around the optimum, and are thus much more precise. 

Note that in the following, we might be content with the \textit{empirical} Dikin radius $\widehat{\rOp{\lambda}{\theta}}$, ie. replacing $\rho$ by $\hat{\rho}$ in the previous definition. So as not to ladden the notations and have something independant of the sampling, we use the fact that $\Supp \hat{\rho} \subset \Supp \rho$ to ensure that:
\begin{equation*}
    \frac{1}{\widehat{\rOp{\lambda}{\theta}}} \leq \frac{1}{\rOp{\lambda}{\theta}} \quad \andtext \quad \tOpEmp(\cdot) \leq \tOp(\cdot).
\end{equation*}
Finally, we will use the following notation:
\begin{equation}
    \rStar \eqdef \rOp{\lambda}{\optimum}.
\end{equation}

\subsubsection{Pefactor of the variance}
We first proceed with the prefactor of the variance $\cstR{t}$.

\begin{proposition}[Constant prefactor for the variance]\label{prop:constant_prefactor_variance}
    The following condition:
    \begin{equation}
        \Hyp{4}: n \geq 8 (et)^2 \p{4 \lor C^2t^2} \frac{\df_\lambda}{\rStar} \log 2/\delta,
    \end{equation}
    where $C \leq 0.8$ is a constant, is sufficient to guarantee that
    \begin{equation}
        \cstR{t} \eqdef \prod_{k=1}^t \phibot^{-1}\p{\tOpEmp(\estInt{\lambda}{k} - \optimum)} \leq e.
    \end{equation}
\end{proposition}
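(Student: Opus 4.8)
Since $\cstR{t}=\prod_{k=1}^{t}\phibot^{-1}\big(\tOpEmp(\estInt{\lambda}{k}-\optimum)\big)$ and $\phibot^{-1}(x)=x/(1-e^{-x})$ satisfies $\log\phibot^{-1}(x)\le x/2$ for every $x\ge 0$, the claim $\cstR{t}\le e$ reduces to the additive bound $\sum_{k=1}^{t}\tOpEmp(\estInt{\lambda}{k}-\optimum)\le 2$; I would actually aim for the stronger per-step estimate $\tOpEmp(\estInt{\lambda}{k}-\optimum)\le 2/t$ for all $k\le t$, which yields $\cstR{k}\le e^{k/t}$ by induction and hence $\cstR{t}\le e$. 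To turn $\tOpEmp$ into a Hilbertian quantity, I would use Cauchy--Schwarz in the $\HessEmp{\lambda}{}{\optimum}$ geometry together with the definition of the empirical Dikin radius (\cref{eq:def_Dikin_radius}) and the inequality $1/\widehat{\rStar}\le 1/\rStar$ recalled in the excerpt: for any $v\in\HH$, $\tOpEmp(v)\le \frac{1}{\rStar}\norm{v}_{\HessEmp{\lambda}{}{\optimum}}$. So everything comes down to controlling $\norm{\estInt{\lambda}{k}-\optimum}_{\HessEmp{\lambda}{}{\optimum}}$ while keeping track of the fact that this quantity itself involves $\cstR{k}$ — i.e.\ a bootstrap.

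\textbf{The recursion from the variance proof.} Here I would simply re-run the computation inside the proof of \cref{th:bound_on_variance}. Applying \cref{lem:stacking_operator_gradient} to $\lossLEmp{\lambda}{}{}$ with $\xi=\theta=\optimum$, and using the proximal optimality condition $\nabla\lossLEmp{}{}{\estInt{\lambda}{k}}+\lambda(\estInt{\lambda}{k}-\estInt{\lambda}{k-1})=0$ together with $\norm{\lambda\HessEmp{\lambda}{-1}{\optimum}}\le 1$, one gets the one-step inequality $a_k\le \phibot^{-1}(u_k)\,(a_{k-1}+g)$, where $a_k\eqdef\norm{\estInt{\lambda}{k}-\optimum}_{\HessEmp{\lambda}{}{\optimum}}$, $u_k\eqdef\tOpEmp(\estInt{\lambda}{k}-\optimum)$, and $g\eqdef\norm{\nabla\lossLEmp{}{}{\optimum}}_{\HessEmp{\lambda}{-1}{\optimum}}$; unfolding with $a_0=0$ gives $a_k\le \cstR{k}\,k\,g$, but crucially the right-hand side of the one-step bound only involves $\cstR{k-1}$ (through $a_{k-1}$) and the single extra factor $\phibot^{-1}(u_k)$. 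The gradient term $g$ is handled by the same Bernstein argument as at the end of that proof: with variance proxy $\df_\lambda=\EE\norm{\nabla\ell_z(\optimum)}_{\Hess{\lambda}{-1}{\optimum}}^2$ and uniform bound $\cstBOneStr/\sqrt{\lambda}$, under the sample-size conditions subsumed by $\Hyp{4}$ one obtains $g=\OO{\sqrt{\df_\lambda\log(2/\delta)/n}}$, on an event of probability $\ge 1-\delta$.

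\textbf{The bootstrap and the main obstacle.} Fix $k\le t$ and assume inductively $\cstR{k-1}\le e^{(k-1)/t}$ (in particular $\le e$); then $a_{k-1}\le(k-1)\cstR{k-1}g< et\,g$, so the one-step bound and $\tOpEmp(v)\le\frac1\rStar\norm{v}_{\HessEmp{\lambda}{}{\optimum}}$ give $u_k\le \frac{et\,g}{\rStar}\,\phibot^{-1}(u_k)\le \frac{et\,g}{\rStar}\,e^{u_k/2}$, i.e.\ the implicit inequality $u_k\,e^{-u_k/2}\le \beta$ with $\beta\lesssim \frac{t\,g}{\rStar}=\OO{\frac{t}{\rStar}\sqrt{\df_\lambda\log(2/\delta)/n}}$. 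Here lies the delicate point: $u\mapsto u\,e^{-u/2}$ increases on $[0,2]$ and then decreases, so the inequality has two branches, and one must certify that $u_k$ lies on the lower one, $u_k\le 2$. I would secure this either from a crude a priori bound (nonexpansiveness of $\prox{\lossLEmp{}{}{}/\lambda}$, or the worst-case prefactor bound of \cite{regularized-erm-gsc}) and then bootstrap it down, staying in the small-$\beta$ regime; on the lower branch $u\,e^{-u/2}\ge u/e$, whence $u_k\le e\beta\le 2/t$ as soon as $\beta\le 2/(et)$, and this last requirement — once all the $\sqrt 2$ factors from the $\HessEmp{\lambda}{}{\optimum}$-to-$\Hess{\lambda}{}{\optimum}$ concentration ($\Hyp{1}$-type), the Bernstein constant, and $\log\phibot^{-1}(x)\le x/2$ are tracked, leaving the residual absolute constant $C\le 0.8$ — is exactly what $\Hyp{4}$ encodes. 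This closes the induction: $\cstR{k}=\cstR{k-1}\phibot^{-1}(u_k)\le e^{(k-1)/t}e^{u_k/2}\le e^{k/t}$, and at $k=t$, $\cstR{t}\le e$ with probability $\ge 1-\delta$. The main obstacle is therefore the self-referential (fixed-point) nature of the estimate and the branch-selection argument; the rest is a direct reuse of the variance analysis.
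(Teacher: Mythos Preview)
Your bootstrap is exactly the paper's: same localisation $\tOpEmp(v)\le\norm{v}_{\HessEmp{\lambda}{}{\optimum}}/\rStar$, same recursion $a_k\le\phibot^{-1}(u_k)(a_{k-1}+g)$ from \cref{lem:stacking_operator_gradient} and the proximal first-order condition, same inductive target (each factor $\phibot^{-1}(u_k)$ bounded by something like $1+1/t$, hence $\cstR{t}\le e$). The difference is in how you close the implicit inequality for $u_k$, and here you make your own life harder than necessary.

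From $u_k\le \tfrac{a_{k-1}+g}{\rStar}\,\phibot^{-1}(u_k)$ you immediately get $u_k\,\phibot(u_k)\le X_{k-1}$ with $X_{k-1}=(a_{k-1}+g)/\rStar$. But $u\,\phibot(u)=1-e^{-u}$ is \emph{globally increasing} with range $[0,1)$, so as soon as $X_{k-1}<1$ you obtain $u_k\le-\log(1-X_{k-1})$ and $\phibot^{-1}(u_k)\le h(X_{k-1})\eqdef -X_{k-1}^{-1}\log(1-X_{k-1})$ with no branch to select. This is precisely what the paper does. Your ``main obstacle'' is an artifact of the weakening $\phibot^{-1}(u)\le e^{u/2}$: it turns a monotone implicit relation into the non-monotone one $u\,e^{-u/2}\le\beta$, and then you are forced to invoke an unproven a~priori bound (nonexpansiveness of the prox does not give what you need here, and the worst-case prefactor route would import an exponential dependence on $\norm{\optimum}$ you then have to argue away). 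Drop that weakening and the difficulty vanishes.

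Two smaller points. First, your reading of the constant $C\le 0.8$ is off: it is not a residue of $\sqrt 2$ bookkeeping but the secant constant in the linear upper bound $h(X)\le 1+CX$ on $[0,1/2]$, fixed by $h(1/2)=1+C/2$, i.e.\ $C=2(2\log 2-1)\approx 0.77$; this is exactly what enforces $\phibot^{-1}(u_k)\le 1+1/t$ once $X_{k-1}\le 1/(Ct)$. Second, the paper's per-step goal is $\phibot^{-1}(u_k)\le 1+1/t$ rather than your $u_k\le 2/t$; the two are equivalent in spirit (both give $\cstR{k}\le e^{k/t}$), but working directly with $h(X_{k-1})$ makes the two constraints $X_{k-1}\le 1/2$ and $X_{k-1}\le 1/(Ct)$ --- and hence the exact form of $\Hyp{4}$ with its $4\lor C^2t^2$ --- fall out transparently.
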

\begin{proof}
    ~\paragraph{A first bound.}
    Note that:
    \begin{align*}
        \tOp(\estInt{\lambda}{t} - \optimum) 
        &= \sup_{z \in \Supp \rho} \sup_{g \in \phi(z)} \absv{g \cdot (\estInt{\lambda}{t} - \optimum)} \\
        &\leq \sup_{z \in \Supp \rho} \sup_{g \in \phi(z)} \norm{g}_{\HessEmp{\lambda}{-1}{\optimum}} \norm{\estInt{\lambda}{t} - \optimum}_{\HessEmp{\lambda}{}{\optimum}},
    \end{align*}
    which gives us a bound we will use multiple times:
    \begin{equation}\label{eq:localization_of_t}
        \tOp(\estInt{\lambda}{t} - \optimum) 
        \leq \frac{\norm{\estInt{\lambda}{t} - \optimum}_{\HessEmp{\lambda}{}{\optimum}}}{\rStar}.
    \end{equation}
    We simply used the definition of the Dikin radius in \cref{eq:def_Dikin_radius}.
    
    We now use an upper bound of the numerator, available in the proof of \cref{th:bound_on_variance}:
    \begin{align*}
        \tOp(\estInt{\lambda}{t} - \optimum) 
        &\leq \cstR{t} \csb{2 \sqrt{2} t \sqrt{\log 2/\delta}} \sqrt{\frac{\df_\lambda}{n \rStar}} \\
        \iff \quad \tOp(\estInt{\lambda}{t} - \optimum) \phibot\p{\tOp(\estInt{\lambda}{t} - \optimum)} 
        &\leq \cstR{t-1} \csb{2 \sqrt{2} t \sqrt{\log 2/\delta}} \sqrt{\frac{\df_\lambda}{n \rStar}} \eqdef X_{t-1}.
    \end{align*}
    Now, using the fact that $x \phibot(x) = 1 - e^{-x}$, we get that
    \begin{equation}\label{eq:basic_localization_trick}
    \begin{aligned}
        \tOp(\estInt{\lambda}{t} - \optimum) &\leq - \log (1 - X_{t-1}) \\
        \csta{t} \eqdef \phibot^{-1}\p{\tOp(\estInt{\lambda}{t} - \optimum)} &\leq -X_{t-1}^{-1} \log (1-X_{t-1}) \eqdef h(X_{t-1}).
    \end{aligned}
    \end{equation}

    ~\paragraph{Recursion hypotheses.}
    The idea is to ensure:
    \begin{enumerate}
        \item $X_{k-1} \leq 1/2$  so that
        \begin{equation*}
            h(X_{k-1}) \leq 1 + C X_{k-1}
        \end{equation*}
        with $C$ a numeric constant s.t $h(1/2) = 1 + C/2$, which implies that $C \leq 0.8$. We are simply upper bounding $h$ which is convex on $\csb{0, 1/2}$. 
        \item $\csta{k} \leq 1 + 1/t$ for all $k \leq t$, so that we can have:
        \begin{align*}
            \cstR{t} &= \prod_{k=1}^t \csta{k} = \exp \sum_{k=1}^t \log (\csta{k}) \\
            &\leq \exp \sum_{k=1}^t \log (1 + 1/t) \leq e. \\
        \end{align*}
    \end{enumerate}

    ~\paragraph{Recursion.}
    Set $k=1$. Then $\cstR{0} = 1$ and to have 
    \begin{equation*}
        X_0 \leq 1/2 \quad \text{that is} \quad \csb{2 \sqrt{2} t \sqrt{\log 2/\delta}} \sqrt{\frac{\df_\lambda}{n \rStar}} \leq \frac{1}{2},
    \end{equation*}
    it is sufficient to have
    \begin{equation*}
        n \geq N_0 \eqdef 32 t^2 \frac{\df_\lambda}{\rStar} \log 2/\delta.
    \end{equation*}
    We want to enforce
    \begin{equation*}
        \csta{1} \leq 1 + 1/t. 
    \end{equation*}
    A sufficient condition is
    \begin{align*}
        h(X_0) \leq 1 + C X_0 \leq 1 + 1/t
        &\impliedby X_0 \leq 1/tC \\
        &\impliedby n \geq N_0' \eqdef 8 t^4 C^2 \frac{\df_\lambda}{\rStar} \log 2/\delta.
    \end{align*}
    Now, let $k < n$. Assume the two conditions hold at step $k-1$. Then, $\cstR{k-1} \leq e$ and
    \begin{equation*}
        n \geq N_{k-1} \eqdef 32 (et)^2 \frac{\df_\lambda}{\rStar} \log 2/\delta \quad \text{implies} \quad X_{k-1} \leq \frac{1}{2}.
    \end{equation*}
    Likewise, 
    \begin{equation*}
        n \geq N_{k-1}' \eqdef 8 t^4 (Ce)^2 \frac{\df_\lambda}{\rStar} \log 2/\delta
    \end{equation*}
    gives 
    \begin{equation*}
        X_{k-1} \leq 1/tC, \quad \text{so that} \quad \csta{k} \leq 1 + 1/k.
    \end{equation*}

    ~\paragraph{Conclusion.}
    All in all, requiring
    \begin{equation*}
        \Hyp{4}: n \geq 8 (et)^2 \p{4 \lor C^2t^2} \frac{\df_\lambda}{\rStar} \log 2/\delta
    \end{equation*}
    is sufficient to have $\cstR{k} \leq e$, for any $k \leq t$. 
\end{proof}

\subsubsection{Pefactor of the bias}
The prefactor of the bias can be treated similarly. The only difficulty comes from the large number of subcases. Remember from \cref{th:bound_on_bias} that we have, with appropriate hypotheses,
\begin{equation}\label{eq:easing_bound_prefactor_bias}
    \norm{\estEmp{\lambda}{t} - \estInt{\lambda}{t}}_{\HessEmp{\lambda}{}{\optimum}} \leq \cstP{t} \cstT(r, t) \lambda^{s}, \quad \text{with } s= (r+1/2) \land t.
\end{equation}

\begin{proposition}[Constant prefactor for the bias]\label{prop:constant_prefactor_bias}
    Assume $\Hyp{4}$ and
    \begin{equation*}
        \Hyp{5}: \lambda \leq \cstLambd{} \eqdef \csb{e^{t+2} \cstT(r,t) (2 \land Ct)}^{-1/(r+1/2\land 1)}.
    \end{equation*}
    Then
    \begin{equation*}
        \cstP{t} \leq e^{t+2}.
    \end{equation*}
\end{proposition}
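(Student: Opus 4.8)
The plan is to mirror the proof of Proposition~\ref{prop:constant_prefactor_variance}. Write $\cstP{t}=\prod_{k=1}^{t}\phibot^{-1}(b_k)\,e^{v_k}$ with $b_k\eqdef\tOpEmp(\estEmp{\lambda}{k}-\estInt{\lambda}{k})$ and $v_k\eqdef\tOpEmp(\estInt{\lambda}{k}-\optimum)$, and control the two families of factors separately. The factors $e^{v_k}$ are already tamed: under $\Hyp{4}$, the recursion inside the proof of Proposition~\ref{prop:constant_prefactor_variance} yields $\phibot^{-1}(v_k)\le 1+1/t$ for every $k\le t$, and since $x\mapsto x/(1-e^{-x})$ dominates $1+x/2$ on $\RR_+$ this forces $v_k\le 2/t$, hence $\prod_{k=1}^{t}e^{v_k}\le e^{2}$. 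Consequently it suffices to show $\phibot^{-1}(b_k)\le 1+1/t$ for all $k\le t$, because then $\cstP{t}\le (1+1/t)^{t}\,e^{2}\le e^{3}\le e^{t+2}$.

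I would establish $\phibot^{-1}(b_m)\le 1+1/t$ by induction on $m$; the induction hypothesis (the bound holds for all $k<m$) already gives $\cstP{m-1}\le e^{3}$. Localizing $b_m$ through the Dikin radius as in~\eqref{eq:localization_of_t}, $b_m\le \rStar^{-1}\,\norm{\estEmp{\lambda}{m}-\estInt{\lambda}{m}}_{\HessEmp{\lambda}{}{\optimum}}$, and inserting the one-step form of the recursion from the proof of Theorem~\ref{th:bound_on_bias}, $\norm{\estEmp{\lambda}{m}-\estInt{\lambda}{m}}_{\HessEmp{\lambda}{}{\optimum}}\le \phibot^{-1}(b_m)\,e^{v_m}\,\cstP{m-1}\,\norm{\lambda^{m}\HessEmp{\lambda}{-m}{\optimum}\optimum}_{\HessEmp{\lambda}{}{\optimum}}$ together with the operator-norm estimate $\norm{\lambda^{m}\HessEmp{\lambda}{-m}{\optimum}\optimum}_{\HessEmp{\lambda}{}{\optimum}}\le \cstT(r,m)\lambda^{s_m}$, $s_m=(r+1/2)\land m$, proven there, one gets an inequality that is self-referential in $b_m$. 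I would break the circularity exactly as in~\eqref{eq:basic_localization_trick}: multiply by $\phibot(b_m)$ and use $b_m\phibot(b_m)=1-e^{-b_m}$, obtaining $1-e^{-b_m}\le Y_{m-1}$ with $Y_{m-1}\eqdef \rStar^{-1}e^{v_m}\cstP{m-1}\cstT(r,m)\lambda^{s_m}$, so that $b_m\le -\log(1-Y_{m-1})$ and $\phibot^{-1}(b_m)\le h(Y_{m-1})$ with $h(x)=-x^{-1}\log(1-x)$.

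It then remains to check that $\Hyp{5}$ forces $Y_{m-1}\le 1/(2\lor Ct)$. Bounding $\cstP{m-1}\le e^{3}$, $e^{v_m}\le e^{2}$, $\cstT(r,m)\le\cstT(r,t)$ and $\rStar^{-1}\le R/\sqrt\lambda$, this becomes an inequality of the form $\mathrm{const}\cdot e^{t+2}\,\cstT(r,t)\,(2\lor Ct)\,\lambda^{\,\text{(power)}}\le 1$, and the threshold $\cstLambd{}=\big[e^{t+2}\cstT(r,t)(2\land Ct)\big]^{-1/(r+1/2\land 1)}$ is exactly the value that makes it hold: the exponent $1/(r+1/2\land 1)$ is chosen so that $\lambda\le\cstLambd{}$ controls the worst relevant power of $\lambda$ uniformly over $m\le t$ and over the regimes of $\cstT$. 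Granted $Y_{m-1}\le 1/2$, convexity of $h$ on $[0,1/2]$ gives $\phibot^{-1}(b_m)\le h(Y_{m-1})\le 1+C\,Y_{m-1}\le 1+1/t$, which closes the induction and hence the proof.

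The step I expect to be the main obstacle is the constant bookkeeping hidden in ``$\Hyp{5}\Rightarrow Y_{m-1}\le 1/(2\lor Ct)$'': $\cstT(r,t)$ breaks into three regimes ($r\le 1$; $r>1$ with $r+1/2<t$; $r>1$ with $r+1/2\ge t$), each coming with a different exponent $s_m$ and a different prefactor, so pinning down the exponent $1/(r+1/2\land 1)$ and the exact way $\rStar^{-1}$, $R$ and the factor $2\land Ct$ are absorbed forces one to re-run, with explicit constants, the operator-norm estimates of Theorem~\ref{th:bound_on_bias}, and to verify that the sample-size requirements $\Hyp{1},\Hyp{1b},\Hyp{2}$ used there are implied by $\Hyp{4}$ in the regime $\lambda\le\cstLambd{0}$.
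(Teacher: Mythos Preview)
Your approach mirrors the paper's proof closely: both use $\Hyp{4}$ to tame the $e^{v_k}$ factors first, then run the same induction on $k$ for the $\phibot^{-1}(b_k)$ factors via the localize--self-reference trick of~\eqref{eq:basic_localization_trick}. The one structural divergence is how you bound $\prod_k e^{v_k}$. The paper uses only the crude $\phibot^{-1}(v_k)\ge v_k$, giving $v_k\le 1+1/t$ and hence $\prod_{k=1}^t e^{v_k}\le e^{t+1}$; combined with $\cstQ{k-1}\le e$ this is precisely where the constant $e^{t+2}=e\cdot e^{t+1}$ in $\Hyp{5}$ and in the conclusion $\cstP{t}\le e^{t+2}$ originates. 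Your sharper inequality $\phibot^{-1}(x)\ge 1+x/2$ (which is correct, since $x/2\ge\tanh(x/2)$) yields $v_k\le 2/t$, hence $\prod_k e^{v_k}\le e^2$ and ultimately $\cstP{t}\le e^3$. This is a strictly better bound, but it explains why your bookkeeping does not reproduce $e^{t+2}$: that constant is an artifact of the paper's cruder estimate, and under your sharper route $\Hyp{5}$ as stated is stronger than necessary for $t\ge 3$ (and slightly too weak for $t=1,2$, so your ``exactly the value that makes it hold'' is not quite right).

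Two remarks on the obstacles you flag. First, the monotonicity $\cstT(r,m)\le\cstT(r,t)$ is false when $r>1$: in that regime $\cstT(r,\cdot)$ contains a factor $(t-1/2)^{-r}$ and \emph{decreases} in its second argument, so the inequality goes the other way. The paper sidesteps this by fixing a single $\cstT$ throughout the recursion rather than tracking $\cstT(r,m)$; you should do the same. Second, the paper's derivation also carries $\rStar$ in the denominator of $X_{k-1}$ and then silently drops it when writing the threshold $\cstLambd{s}$, so your worry about absorbing $\rStar^{-1}$ is legitimate but shared with the original, not a defect of your route. You do not need to verify that $\Hyp{1},\Hyp{1b},\Hyp{2}$ follow from $\Hyp{4}$: the paper simply assumes the bias bound~\eqref{eq:easing_bound_prefactor_bias} is available and layers $\Hyp{4},\Hyp{5}$ on top.
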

\begin{proof}
    The proof is almost identical to the proof of \cref{prop:constant_prefactor_variance}. Let us simply point out the differences. We will drop the dependance of $\cstT$ on $r, t$ in the notation for simplicity.
    ~\paragraph{A first bound.}
    Here, we have that:
    \begin{equation}\label{eq:int_prefactor_bias}
        \tOp\p{\estEmp{\lambda}{k} - \estInt{\lambda}{k}} \leq \frac{\norm{\estEmp{\lambda}{k} - \estInt{\lambda}{k}}_{\HessEmp{\lambda}{}{\optimum}}}{\rStar} \leq \frac{\cstP{t}\cstT \lambda^s}{\rStar}, \quad \text{with } s = r+1/2 \land k.
    \end{equation}
    We used \cref{eq:easing_bound_prefactor_bias} in the second inequality. Recall the definition
    \begin{equation*}
        \cstP{t} \eqdef \prod_{k=1}^t \phibot^{-1}\p{\tOpEmp(\estEmp{\lambda}{k} - \estInt{\lambda}{k})} e^{\tOpEmp(\estInt{\lambda}{k} - \optimum)}.
    \end{equation*}
    Thanks to $\Hyp{4}$, we have $\cstR{t} \leq e$. Specifically, noting that $\phibot^{-1}(x) \geq x$, we have from the proof of \cref{prop:constant_prefactor_variance}: 
    \begin{equation*}
        1 + 1/t \geq \phibot^{-1}\p{\tOp(\estInt{\lambda}{k} - \optimum)} \geq \tOp(\estInt{\lambda}{k} - \optimum) \implies \prod_{k=1}^t e^{\tOp(\estInt{\lambda}{k} - \optimum)} \leq e^{t+1}.
    \end{equation*}
    Thus, we have that
    \begin{equation*}
        \cstP{k} \leq \underbrace{\prod_{i=1}^k \phibot^{-1}\p{\tOp(\estEmp{\lambda}{i} - \estInt{\lambda}{i})}}_{\cstQ{k}} e^{t+1}.
    \end{equation*}

    Dividing both sides in the \cref{eq:int_prefactor_bias} with $\phibot^{-1}\p{\tOp(\estEmp{\lambda}{k} - \estInt{\lambda}{k})}$, we obtain that
    \begin{equation*}
        \tOp\p{\estEmp{\lambda}{k} - \estInt{\lambda}{k}}\phibot\p{\tOp(\estEmp{\lambda}{k} - \estInt{\lambda}{k})} \leq \frac{\cstQ{k-1} \cstT e^{t+1} \lambda^s}{\rStar},
    \end{equation*}
    and we can apply the same reasoning as for the variance. Using $t \phibot(t) = 1 - e^{-t}$, we have
    \begin{equation*}
        \begin{aligned}
            \csta{k} \eqdef \phibot^{-1}\p{\tOp(\estEmp{\lambda}{k} - \estInt{\lambda}{k})} &\leq -X_{k-1}^{-1} \log (1 - X_{k-1}) \\
            X_{k-1} &\eqdef \cstQ{k-1} \cstT e^{t+1} \lambda^s /\rStar.
        \end{aligned}
    \end{equation*}

    ~\paragraph{Recursion.}
    We then do the exact same reasoning to the variance, that is require at each step $X_{k-1} \leq 1/2$ and $\csta{k} \leq 1 + 1/t$. Here, this amounts to require
    \begin{equation*}
        \lambda \leq \cstLambd{s} \eqdef \csb{e^{t+2} \cstT (2 \land Ct)}^{-1/s}.
    \end{equation*}
    The $\cstLambd{s}$ is increasing with $s$. So
    \begin{equation*}
        \forall k \leq t, \quad \cstLambd{s} \leq \cstLambd{r+1/2 \land 1} \eqdef \cstLambd{}, \quad \text{with } s= r+1/2 \land k.
    \end{equation*}

    ~\paragraph{Conclusion.}
    Requiring
    \begin{equation*}
        \Hyp{5}: \lambda \leq \cstLambd{} \eqdef \csb{e^{t+2} \cstT (2 \land Ct)}^{-1/(r+1/2\land 1)}
    \end{equation*}
    is sufficient to ensure $\cstQ{t} \leq e$, so that $\cstP{t} \leq e^{t+2}$. 
\end{proof}

\subsection{Optimal rates for IT estimator}\label{sec:optimal_rates_it}
The bound on the bias and the variance holds if the number of samples is ``high enough''. The purpose of next proposition is to merge all these hypotheses together. Precisely, the hypotheses requires in each regime are summed up in \cref{tab:necessary_hypotheses}. 

\begin{table}
  \centering
  \caption{Hypotheses needed to bound the bias and the variance, depending on the source condition parameter $r$.}
  \label{tab:necessary_hypotheses}
  \begin{tabular}{cccc}
    \toprule
    \textbf{Source condition} & \textbf{Bias} & \textbf{Variance} & \textbf{Numerical prefactors} \\
    \midrule
    $0 < r \leq 1/2$ & $\Hyp{1}$     & \multirow{3}{*}{$\Hyp{1} + \Hyp{3}$} & \multirow{3}{*}{$\Hyp{4} + \Hyp{5}$} \\
    $1/2 < r < 1$    & $\Hyp{1} + \Hyp{1b}$ &                               &  \\
    $r \geq 1$       & $\Hyp{1} + \Hyp{2}$  &                               &  \\
    \bottomrule
  \end{tabular}
\end{table}

    \begin{proposition}[Satisfying the hypotheses $\Hyp{1-5}$ with bounds on $n$ and $\lambda$]\label{prop:hypotheses_and_samples}
    The following relations hold:
    \begin{align*}
        n \geq \cstN{0} \eqdef \frac{2}{\lambda} \csb{12 \cstBTwoStr \lor \frac{\cstBOneStr^2}{\df_\lambda}} \log \frac{4}{\delta} \csb{1 \lor \frac{4 \cstBTwoStr}{\lambda}} &\implies \Hyp{1} + \Hyp{3}, \\
        n \geq \cstN{1/2} \eqdef \frac{2}{\lambda} \csb{12 \cstBTwoStr \lor \frac{\cstBOneStr^2}{\df_\lambda} \lor \frac{4 \cstBTwoStr}{\lambda}} \log^2 \frac{4}{\delta} \csb{1 \lor \frac{4 \cstBTwoStr}{\lambda}} &\implies \Hyp{1} + \Hyp{1b} + \Hyp{3}, \\
        n \geq \cstN{1} \eqdef \frac{2}{\lambda} \csb{12 \cstBTwoStr \lor \frac{\cstBOneStr^2}{\df_\lambda} \lor \lambda \lor \p{\frac{2 \cstBTwoStr (t-1/2)^r}{\lambda^{r-1/2}}}^2}  \log \frac{4}{\delta} \csb{1 \lor \frac{4 \cstBTwoStr}{\lambda}} &\implies \Hyp{1} + \Hyp{2} + \Hyp{3}, \\
        n \geq \cstNBar \eqdef 8 (et)^2 \p{4 \lor C^2t^2} \frac{\df_\lambda}{\rStar} \log 2/\delta &\implies \Hyp{4}, \\
        \lambda \leq \cstLambd{} \eqdef \csb{e^{t+2} \cstT(r,t) (2 \land Ct)}^{-1/(r+1/2\land 1)} &\implies \Hyp{5}.
    \end{align*}
    Recall that $\cstT$ is defined in \cref{th:bound_on_bias}. Moreover, having
    \begin{equation*}
        \lambda = K n^{-\frac{\alpha}{1 + \alpha(2r + 1)}}
    \end{equation*}
    with $K$ a constant not depending on $n$ make all these conditions possible. 
\end{proposition}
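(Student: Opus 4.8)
The proposition is bookkeeping: it consolidates the five conditions $\Hyp{1}$–$\Hyp{5}$ — produced piecemeal in \cref{th:bound_on_bias}, \cref{th:bound_on_variance}, \cref{prop:constant_prefactor_variance} and \cref{prop:constant_prefactor_bias} — into closed-form requirements on $n$ and $\lambda$, and then checks they are jointly satisfiable along the curve $\lambda = K n^{-\alpha/(1+\alpha(2r+1))}$. The plan is to treat the three $r$-regimes of \cref{tab:necessary_hypotheses} separately and, in each, to normalise every relevant hypothesis to a single template $n \ge \tfrac{2}{\lambda}\,Q\,\log^{p}\tfrac{4}{\delta}\,\bigl[1\lor\tfrac{4\cstBTwoStr}{\lambda}\bigr]$, where $Q$ is one of the bracketed quantities appearing in $\cstN{0},\cstN{1/2},\cstN{1}$ and $p\in\{1,2\}$; the stated $\mathsf{N}$'s are then exactly the maxima of these templates over the admissible $Q$.

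The one technical ingredient is the elementary inequality $\log(ab)\le(\log a)(1\lor b)$ valid for $a\ge e$, $b>0$: it is clear for $b\le1$, and for $b>1$ it follows from $\log b\le b-1$ together with $(\log a-1)(b-1)\ge0$. Since $\delta\le1$ gives $\log\tfrac4\delta\ge\log4>1$, this converts $\log\tfrac{16\cstBTwoStr}{\lambda\delta}$ into $\log\tfrac4\delta\,[1\lor\tfrac{4\cstBTwoStr}{\lambda}]$, and $\log\tfrac4\delta\le\log^2\tfrac4\delta$ upgrades a single-log bound to the squared-log form needed for $\cstN{1/2}$. With this, $\cstN{0}\Rightarrow\Hyp{1}+\Hyp{3}$ is immediate; for $\cstN{1/2}$ one adds that $\Hyp{1b}=8\cstBTwoStr^2\lambda^{-2}\log^2\tfrac4\delta$ is dominated by the $\tfrac{4\cstBTwoStr}{\lambda}$-term of $\cstN{1/2}$ once $\lambda$ is small enough that $[1\lor\tfrac{4\cstBTwoStr}{\lambda}]=\tfrac{4\cstBTwoStr}{\lambda}$; for $\cstN{1}$ one uses $s=(r+1/2)\land t\le r+1/2$, so $\lambda^{2s-1}\ge\lambda^{2r}$ for $\lambda\le1$, whence the $\lambda^{-(2r-1)}$ in $\cstN{1}$ times $\tfrac2\lambda$ majorises the $\lambda^{-(2s-1)}$ of $\Hyp{2}$ and the ``$1\lor$'' in $\Hyp{2}$ is absorbed by the $12\cstBTwoStr$ term. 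Finally $\cstNBar\Rightarrow\Hyp{4}$ and $\cstLambd{}\Rightarrow\Hyp{5}$ are tautological, being restatements of those hypotheses.

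For the compatibility claim I substitute $\lambda = Kn^{-\gamma}$, $\gamma=\tfrac{\alpha}{1+\alpha(2r+1)}$, eliminate $\df_\lambda$ by the capacity bound $\cstSsmall\lambda^{-1/\alpha}\le\df_\lambda\le\cstS\lambda^{-1/\alpha}$, eliminate $\rStar$ by $1/\rStar\le R/\sqrt\lambda$, and note that the $\cstBOneStr,\cstBTwoStr$ constants, $\cstT(r,t)$ and $\cstLambd{}$ do not depend on $n$. Each of $\cstN{0},\cstN{1/2},\cstN{1},\cstNBar$ then reads $C\,n^{\beta}\,\mathrm{polylog}(n)$ for an explicit exponent $\beta$ — for instance $\beta=2\gamma$ from the $\tfrac2\lambda\cdot\tfrac{4\cstBTwoStr}{\lambda}$ contribution in $\cstN{0}$, $\beta=\gamma(2-1/\alpha)$ from the $\tfrac1{\lambda\df_\lambda}$ contribution, $\beta=\gamma(1/\alpha+1/2)$ for $\cstNBar$, $\beta=\gamma(2r+1)$ from the $\Hyp{2}$ contribution — while $\lambda=Kn^{-\gamma}\to0$ eventually drops below the fixed threshold $\cstLambd{}$. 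What remains is to verify $\beta<1$ in each case; this amounts to a handful of inequalities in $\alpha$ and $r$ coming from the value of $\gamma$ and from $\alpha>1$, after which $n\ge\mathsf{N}_\bullet$ holds as soon as $n$ exceeds an explicit constant — the $\cstN{}$ of \cref{th:optimal_rates_main_body} — and $\Hyp{1}$–$\Hyp{5}$ are all active.

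The main obstacle is organisational rather than conceptual: keeping the three $r$-subcases aligned, matching each of $\Hyp{1},\Hyp{1b},\Hyp{2},\Hyp{3}$ to the correct bracketed quantity in the correct $\mathsf{N}$, and, most delicately, checking in the last step that the polynomial-in-$1/\lambda$ prefactors produced by the crude logarithm bound do not push the effective exponent $\beta$ up to $1$ — this is exactly the place where the precise form of $\gamma$ and the strict inequality $\alpha>1$ are needed.
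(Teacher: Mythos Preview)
Your approach is the paper's: part (i) is ``take the maximum of the individual hypotheses'', part (ii) is ``substitute $\lambda=Kn^{-\gamma}$, $\gamma=\alpha/(1+\alpha(2r+1))$, and check that each resulting exponent of $n$ is strictly below $1$''. Your treatment of (i) is in fact more explicit than the paper's one-line ``the expression of the constant boils down to taking the maximum of each expression''; the inequality $\log(ab)\le(\log a)(1\lor b)$ for $a\ge e$ is exactly what folds $\Hyp{1}$'s $\log\tfrac{16\cstBTwoStr}{\lambda\delta}$ into the stated $[1\lor 4\cstBTwoStr/\lambda]$ form, and the paper does not spell this out. Two cosmetic slips: for $\cstN{1/2}\Rightarrow\Hyp{1b}$ you do not actually need the caveat ``$\lambda$ small enough'', since the $[1\lor\cdot]$ factor is already $\ge1$; and the ``$1\lor$'' in $\Hyp{2}$ is absorbed by the $\lambda$-term in $\cstN{1}$, not by $12\cstBTwoStr$.

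There is one genuine loose end in part (ii), which you flag but do not close. Among the exponents you list, $\beta=2\gamma$—coming from the $(2/\lambda)\cdot[4\cstBTwoStr/\lambda]$ contribution in $\cstN{0}$—satisfies $2\gamma<1$ iff $r>(\alpha-1)/(2\alpha)$, so for small $r$ (precisely the regime where $\cstN{0}$ is the relevant bound) the literal condition $n\ge\cstN{0}$ is \emph{not} eventually satisfiable along the optimal schedule. The paper records different exponents for $\cstN{0}$ (it writes $\cstN{0}\sim\lambda^{-(1+1/\alpha)}$), in effect treating the bracket $[1\lor 4\cstBTwoStr/\lambda]$—which was only ever a crude surrogate for $\log(16\cstBTwoStr/(\lambda\delta))$—as a log factor rather than a power of $\lambda$. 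If you want this step to be airtight, verify attainability for the underlying $\Hyp{1}$–$\Hyp{5}$ directly (where the $\log(1/\lambda)$ is honestly polylogarithmic in $n$ and the worst exponent is $\gamma<1$) rather than for the coarsened $\mathsf{N}_\bullet$'s.
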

\begin{proof}
    The expression of the constant boils down to taking the maximum of each expression. Recall that:
    \begin{itemize}
        \item $\Hyp{1}, \Hyp{1b}, \Hyp{2}$ are defined in \cref{th:bound_on_bias};
        \item $\Hyp{3}$ is defined in \cref{th:bound_on_variance};
        \item $\Hyp{4}$ is defined in \cref{prop:constant_prefactor_variance};
        \item $\Hyp{5}$ is defined in \cref{prop:constant_prefactor_bias}.
    \end{itemize}
    About the fact they are attainable, we need to check that the power of $n$ is smaller than $1$, in order that 
    \begin{equation*}
        \exists n, \quad n \geq \cstN{}(\lambda) \quad \andtext \quad \lambda = K n^{-\frac{\alpha}{1 + \alpha(2r + 1)}},
    \end{equation*}
    with  $\cstN{}(\lambda)$ chosen among $\cb{\cstN{0}, \cstN{1/2}, \cstN{1}, \cstNBar}$.
    In the following, $\sim$ denotes equality up to log factors between two quantities. Recall that $\cstSsmall \lambda^{-1/\alpha} \leq \df_\lambda \leq \cstS \lambda^{-1/\alpha}$, and assume
    \begin{equation*}
        \lambda = K n^{-\frac{\alpha}{1 + \alpha(2r + 1)}}
    \end{equation*}
    for some $K$ a positive constant.
    \begin{itemize}
        \item When $r \leq 1/2$, $\cstN{0} \sim \lambda^{-(1+1/\alpha)} \sim n^{\frac{1 + \alpha}{1 + \alpha(2r + 1)}}$ and $\frac{1 + \alpha}{1 + \alpha(2r + 1)} < 1$. 
        \item When $1/2 < r \leq 1$, $\cstN{1/2} \sim \lambda^{-2} \sim n^{\frac{2\alpha}{1 + \alpha(2r + 1)}}$ and $\frac{2\alpha}{1 + \alpha(2r + 1)} < 1$ as $\alpha(2r+1) > 2 \alpha$. 
        \item Finally, when $r > 1$, $\cstN{1} \sim \lambda^{-2r} \sim n^{\frac{\alpha (2r)}{1 + \alpha(2r + 1)}}$ and $\alpha (2r) < 1 + \alpha(2r + 1)$. 
        \item For $\cstNBar$, use the upper bound $\frac{\df_\lambda}{\rStar} \leq \cstS R \lambda^{-(1/\alpha + 1/2)}$. Then, $\cstNBar \sim n^{\frac{\alpha + 2}{2(1 + \alpha(2r+1))}}$ and $\frac{\alpha + 2}{2(1 + \alpha(2r+1))} = 1 - \frac{\alpha(4r+1)}{\alpha(4r+2) + 2} \leq 1$.
    \end{itemize}
\end{proof}

Having bounded the bias and the variance of the estimator, we are now in shape to state our main result. 
\begin{theorem}[Optimal rates of IT estimator]\label{th:optimal_rates}
    Let $\delta \in (0, 1]$, $\lambda > 0$ and choose $n$ so that $\Hyp{3}$ and the following holds:
    \begin{align*}
        \Hyp{1} \quad &\iftext r \leq 1/2, \\
        \Hyp{1} + \Hyp{1b} \quad &\iftext 1/2 < r \leq 1, \\
        \Hyp{1} + \Hyp{2} \quad &\iftext r > 1.
    \end{align*}
    Then we can bound the excess risk with probability greater than $1-\delta$ as
    \begin{equation*}
        \lossL{}{}{\estEmp{\lambda}{t}} - \lossL{}{}{\optimum} \leq \cstCBias \lambda^{2s}
        + \cstCVar \frac{\df_\lambda}{n}, \quad \text{with } s = (r+1/2) \land t.
    \end{equation*}
    If we further assume that the capacity condition holds and that the estimator does not saturate, that is $t \geq r+1/2$, then setting
    \begin{equation*}
        \lambda = \csb{\p{\frac{\cstCVar}{\cstCBias}}^2 \cstS}^{\frac{\alpha}{1+\alpha(2r+1)}} n^{-\frac{\alpha}{1+\alpha(2r+1)}}
    \end{equation*}
    makes the following holds with confidence $2\delta$:
    \begin{equation*}
        \lossL{}{}{\estEmp{\lambda}{t}} - \lossL{}{}{\optimum} \leq 2 \csb{\p{\frac{\cstCVar}{\cstCBias}}^2 \cstS}^{\frac{\alpha(2r +1)}{1+\alpha(2r+1)}} n^{-\frac{\alpha(2r +1)}{1+\alpha(2r+1)}},
    \end{equation*}
    where the constants $\cstCBias, \cstCVar$ are bounded by quantities only depending on $r,t, \cstBTwoStr, \delta$ as soon as hypotheses $\Hyp{4}$ and $\Hyp{5}$ are satisfied. 
\end{theorem}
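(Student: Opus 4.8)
I would assemble the already-established ingredients---the pointwise generalized self-concordant bound \cref{eq:prop_gsc_function_values}, the bias estimate \cref{th:bound_on_bias}, the variance estimate \cref{th:bound_on_variance}, the prefactor control of \cref{prop:constant_prefactor_variance,prop:constant_prefactor_bias}, and the sample-size bookkeeping of \cref{prop:hypotheses_and_samples}---and close with a balance of $\lambda$ against the capacity condition. \emph{Step 1 (reduction).} By \cref{eq:prop_gsc_function_values} (equivalently \cref{eq:gsc_upper_bound_function_value} evaluated at $\optimum$, where the linear term vanishes since $\optimum$ minimizes $\lossL{}{}{}$),
\begin{equation*}
\lossL{}{}{\estEmp{\lambda}{t}} - \lossL{}{}{\optimum} \leq \Psi\p{\tOp(\estEmp{\lambda}{t}-\optimum)}\,\normtxt{\estEmp{\lambda}{t}-\optimum}_{\Hess{}{}{\optimum}}^2,
\end{equation*}
and splitting with the triangle inequality \cref{eq:decomposition_distance_estimates} together with $(a+b)^2\le 2a^2+2b^2$ leaves the bias term $\normtxt{\estEmp{\lambda}{t}-\estInt{\lambda}{t}}_{\Hess{}{}{\optimum}}^2$ and the variance term $\normtxt{\estInt{\lambda}{t}-\optimum}_{\Hess{}{}{\optimum}}^2$ to be bounded, up to the scalar factor $2\,\Psi(\tOp(\estEmp{\lambda}{t}-\optimum))$.

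\emph{Step 2 (insert bias/variance, tame the prefactors).} Under the sample-size hypotheses listed in the statement---those of \cref{tab:necessary_hypotheses} for the relevant regime of $r$, together with $\Hyp{3}$---\cref{th:bound_on_bias} bounds the bias term by $2\,\cstT(r,t)^2(\cstP{t})^2\lambda^{2s}$ and \cref{th:bound_on_variance} bounds the variance term by $32\,t^2(\cstR{t})^2(\df_\lambda/n)\log(2/\delta)$, each on an event of probability $\ge 1-\delta/2$; a union bound places us on an event of probability $\ge 1-\delta$ on which both hold. Next, $\Hyp{4}$ gives $\cstR{t}\le e$ via \cref{prop:constant_prefactor_variance} and $\Hyp{4}+\Hyp{5}$ gives $\cstP{t}\le e^{t+2}$ via \cref{prop:constant_prefactor_bias}. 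Finally the factor $\Psi(\tOp(\estEmp{\lambda}{t}-\optimum))$ is handled by the same localization device as in those propositions: since $\Psi$ is increasing and, by the Dikin-radius estimate analogous to \cref{eq:localization_of_t}, $\tOp(\estEmp{\lambda}{t}-\optimum)\le \normtxt{\estEmp{\lambda}{t}-\optimum}_{\HessEmp{\lambda}{}{\optimum}}/\rStar$, which the now constant-prefactor bias and variance bounds force below an absolute constant once $\Hyp{1}$--$\Hyp{5}$ hold, one gets $\Psi(\tOp(\estEmp{\lambda}{t}-\optimum))\le \Psi(1)=e-2$. Collecting numerical factors yields $\lossL{}{}{\estEmp{\lambda}{t}}-\lossL{}{}{\optimum}\le \cstCBias\lambda^{2s}+\cstCVar\,\df_\lambda/n$ with, e.g., $\cstCBias\asymp \Psi(1)\,\cstT(r,t)^2 e^{2t}$ and $\cstCVar\asymp \Psi(1)\,t^2\log(2/\delta)$, depending only on $r,t,\cstBTwoStr,\delta$ (and on $\rho$ through $\norm{v}$ inside $\cstT$).

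\emph{Step 3 (optimize $\lambda$).} Assume no saturation ($t\ge r+1/2$), so $s=r+1/2$ and $2s=2r+1$, and use $\df_\lambda\le\cstS\lambda^{-1/\alpha}$: the excess risk is then at most $\cstCBias\lambda^{2r+1}+\cstCVar\cstS\lambda^{-1/\alpha}/n$. Setting $\lambda$ to the value stated in the theorem---the one equalizing the two powers of $n$, so $\lambda\asymp n^{-\alpha/(1+\alpha(2r+1))}$---makes both terms $\OO{n^{-\alpha(2r+1)/(1+\alpha(2r+1))}}$, which is the announced rate, on the high-probability event of Step 2. It remains to check that this $\lambda$ is admissible, i.e.\ that for it every requirement $\Hyp{1}$--$\Hyp{5}$ can be met by taking $n$ large; this is exactly \cref{prop:hypotheses_and_samples}, where each threshold $\cstN{0},\cstN{1/2},\cstN{1},\cstNBar$ grows like $n$ to a power strictly below $1$ and $\Hyp{5}$ is a fixed upper bound on $\lambda$.

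\emph{Main obstacle.} The delicate part is the localization in Step 2: the raw bias/variance bounds carry $\cstP{t},\cstR{t}$ and the factor $\Psi(\tOp(\cdot))$, all of which can be exponentially large in $\norm{\optimum}$ in the worst case, and turning them into constants is a self-referential argument (the estimate $\tOp$-localizes $\estEmp{\lambda}{t}-\optimum$ only if it is already localized), resolved by the auxiliary hypotheses $\Hyp{4},\Hyp{5}$. One must make sure this localization is uniform over the three regimes $r\le1/2$, $1/2<r\le1$, $r>1$ of \cref{tab:necessary_hypotheses}, each invoking a different subset of concentration bounds for the bias, and that $\Hyp{4},\Hyp{5}$ do not dominate (to within the same power of $n$) the $\lambda$ dictated by the bias/variance balance---the content of \cref{prop:hypotheses_and_samples}. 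The remaining work---expanding the squares, collecting constants, solving for the balancing $\lambda$---is routine.
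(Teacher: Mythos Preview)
Your proposal is correct and follows essentially the same route as the paper: apply the GSC function-value bound \cref{eq:gsc_upper_bound_function_value}, split via $\estInt{\lambda}{t}$, insert \cref{th:bound_on_bias} and \cref{th:bound_on_variance}, control $\cstP{t},\cstR{t}$ through $\Hyp{4},\Hyp{5}$, then balance $\lambda$ under the capacity condition. The one point where the paper is cleaner is the bound on $\Psi(\tOp(\estEmp{\lambda}{t}-\optimum))$: rather than your Dikin-radius sketch (which does not quite substantiate the claimed value $1$), the paper splits $\tOp(\estEmp{\lambda}{t}-\optimum)\le\tOp(\estEmp{\lambda}{t}-\estInt{\lambda}{t})+\tOp(\estInt{\lambda}{t}-\optimum)$ and reads off directly from the recursions in \cref{prop:constant_prefactor_variance,prop:constant_prefactor_bias} that each summand is at most $1+1/t\le 2$ (since $\phibot^{-1}(x)\ge x$), giving $\Psi(\cdot)\le\Psi(4)$.
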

\begin{proof}
    ~\paragraph{Decomposition of the risk.}
    We use the decomposition of the risk:
    \begin{align*}
        \lossL{}{}{\estEmp{\lambda}{t}} - \lossL{}{}{\optimum} 
        &\leq \Psi\p{\tOp(\estEmp{\lambda}{t} - \optimum)} \norm{\estEmp{\lambda}{t} - \optimum}_{\Hess{}{}{\optimum}}^2 \\
        &\leq 2 \Psi\p{\tOp(\estEmp{\lambda}{t} - \estInt{\lambda}{t}) + \tOp(\estInt{\lambda}{t} - \optimum)} \csb{\norm{\estEmp{\lambda}{t} - \estInt{\lambda}{t}}_{\Hess{}{}{\optimum}}^2 + \norm{\estInt{\lambda}{t} - \optimum}_{\Hess{}{}{\optimum}}^2},
    \end{align*}
    where we applied \cref{prop:properties_gsc}, and used that $(a+b)^2 \leq 2(a^2 + b^2)$. 

    ~\paragraph{Bias and variance prefactors.}
    We introduce the following quantities:
    \begin{align*}
        \cstCBias^2 &= 2\Psi\p{\tOp(\estEmp{\lambda}{t} - \estInt{\lambda}{t}) + \tOp(\estInt{\lambda}{t} - \optimum)} \cstT(r,t) \cstP{t}, \\
        \cstCVar^2 &= 2\Psi\p{\tOp(\estEmp{\lambda}{t} - \estInt{\lambda}{t}) + \tOp(\estInt{\lambda}{t} - \optimum)} \csb{4\sqrt{2}t \cstR{t} \sqrt{\log 2/\delta}},
    \end{align*}
    where $\cstP{t}, \cstR{t}$ are defined in \cref{th:bound_on_bias,th:bound_on_variance} respectively. Then the bound on the excess risk reads:
    \begin{equation*}
        \lossL{}{}{\estEmp{\lambda}{t}} - \lossL{}{}{\optimum} \leq \cstCBias \begin{cases}
            \lambda^{2r+1} \quad &\iftext r+1/2 \leq t \\
            \lambda^{2t} \quad &\otwtext
        \end{cases}
        + \cstCVar \frac{\df_\lambda}{n}
    \end{equation*}
    with confidence $2\delta$ with the appropriate hypothesis $\Hyp{1}, \Hyp{2}$ or $\Hyp{3}$, depending on $r$, see \cref{tab:necessary_hypotheses}.

    ~\paragraph{Optimal $\lambda$.}
    Further assume $t \geq r+1/2$ and the capacity condition holds with parameters $\cstS, \alpha$. Then, setting:
    \begin{equation*}
        \lambda^{\frac{1+\alpha(2r+1)}{\alpha}} = \p{\frac{\cstCVar}{\cstCBias}}^2 \frac{\cstS}{n} \iff \lambda = \csb{\p{\frac{\cstCVar}{\cstCBias}}^2 \cstS}^{\frac{\alpha}{1+\alpha(2r+1)}} n^{-\frac{\alpha}{1+\alpha(2r+1)}},
    \end{equation*}
    makes the following bound holds with probability $1-2\delta$:
    \begin{equation*}
        \lossL{}{}{\estEmp{\lambda}{t}} - \lossL{}{}{\optimum} \leq 2 \csb{\p{\frac{\cstCVar}{\cstCBias}}^2 \cstS}^{\frac{\alpha(2r +1)}{1+\alpha(2r+1)}} n^{-\frac{\alpha(2r +1)}{1+\alpha(2r+1)}}.
    \end{equation*}
    
    ~\paragraph{Explicit prefactors.}
    Assume Hyp. $\Hyp{4}$ and $\Hyp{5}$ hold, and $\lambda \leq \cstBTwoStr$. Then the quantities $\cstCBias, \cstCVar$ only depend on $r, t$ up to the term $\Psi\p{\tOp(\estEmp{\lambda}{t} - \estInt{\lambda}{t}) + \tOp(\estInt{\lambda}{t} - \optimum)}$. Noting that:
    \begin{equation*}
        1 + 1/t \geq \phibot^{-1}(x) \geq x \quad \text{implies} \quad 1 + 1/t \geq x, 
    \end{equation*}
    and $\Psi$ is increasing we have
    \begin{equation*}
        \Psi\p{\tOp(\estEmp{\lambda}{t} - \estInt{\lambda}{t}) + \tOp(\estInt{\lambda}{t} - \optimum)} \leq \Psi(4) \leq 4.
    \end{equation*}
    In the end $\cstCBias, \cstCVar$ only depend on $r, t$ and the parameters of the problem:
    \begin{equation}\label{eq:explicit_constants}
        \begin{aligned}
            \cstCBias^2 &\leq 8 \cstT(r,t) e^{t+2} \\
            \cstCVar^2 &\leq 32 t e \sqrt{\log 2 / \delta},
        \end{aligned}
    \end{equation}
    where $\cstT(r,t)$ was introduced previously in \cref{th:bound_on_bias}:
    \begin{equation*}
        \cstT(r, t) = \begin{cases}
            \norm{v} (1 \lor (\cstBTwoStr + \lambda)) 2^r            \quad &\iftext r \leq 1, \\
            \norm{v} \frac{w(r) + r}{(t-1/2)^r}                      \quad &\iftext r > 1 \andtext r + 1/2 < t, \\
            \norm{v} \frac{w(r)}{(t-1/2)^r} + \cstBTwoStr^{r-t+1/2}  \quad &\iftext r > 1 \andtext r + 1/2 \geq t.
        \end{cases}
    \end{equation*}
    
    ~\paragraph{Proof of \cref{th:optimal_rates_main_body} in the paper.}
    We took the maximum on the lower bounds on the samples to simplify the result in the main body. Simply define: 
    \begin{align*}
        \cstN{} &= \cstNBar \lor \begin{cases}
            \cstN{0} \quad &\iftext r\leq 1/2 \\
            \cstN{1/2} \quad &\iftext 1/2 < r < 1 \\
            \cstN{1} \quad &\otwtext \\
        \end{cases} \\
        \andtext \quad \cstCRisk &= \csb{\p{\frac{\cstCVar}{\cstCBias}}^2 \cstS}^{\frac{\alpha}{1+\alpha(2r+1)}}.
    \end{align*}
    Again, we highlight that the observation made in \cref{prop:hypotheses_and_samples} is key to ensure that these constants are attainable, in the sense that they are not in contradiction with the optimal rate in $n$.
\end{proof}

\section{Statistical guarantees with inexact solvers}
This section is devoted to finding a rule on the tolerance enforced at each step of the proximal sequence. Given a tolerance $\epsilon$, we look for $\bar{\epsilon}_1, \dots, \bar{\epsilon}_n$, the tolerance to ensure at each proximal step. It leads to \cref{prop:error_propagation_proximal_sequence_main} in the main body of the article.

\paragraph{Important remark on the notation.} So as to simplify the notation, we drop the hat $\hat{}$ on the loss function. That is, we simply take a loss function $L$ assumed to be GSC. In practice, this function is of course the empirical loss $\lossLEmp{}{}{}$. We denote with a bar $\bar{}$ the quantity we compute at each step, and whose aim is to approximate the estimator of $\lossL{}{}{}$.

\paragraph{Tikhonov regularization.}
For a GSC function $L$, we define:
\begin{align*}
    \estTrue{\mu}{1} = \prox{L/\mu} (0) &= \arg \min_{\theta} \lossL{\mu}{}{\theta}, \quad && &\lossL{\mu}{}{\theta} &\eqdef L(\theta) + \frac{\mu}{2} \norm{\theta}^2 \\
    \estTrue{\mu}{k+1} = \prox{L/\mu}(\estTrue{\lambda}{k}) &= \arg \min_{\theta} \lossL{\mu}{\lambda, k}{\theta}, \quad && &\lossL{\mu}{\lambda, k}{\theta} &\eqdef L(\theta) + \frac{\mu}{2} \norm{\theta - \estTrue{\lambda}{k}}^2 \\
    \estApprox{\mu}{k+1} = \prox{L/\mu}(\estApprox{\lambda}{k}) &= \arg \min_{\theta}\lossLApprox{\mu}{\lambda, k}{\theta}, \quad && &\lossLApprox{\mu}{\lambda, k}{\theta}&\eqdef L(\theta) + \frac{\mu}{2} \norm{\theta - \estApprox{\lambda}{k}}
\end{align*}
so that we can refer easily to the function which has to be minimized when evaluating the proximal operator. 

\subsection{Definitions}\label{sec:definitions_sec_inexact_solvers}

We use the following notations for the \emph{Newton decrement}:
\begin{itemize}
    \item The theoretical quantity writes: 
    \begin{equation*}
        \NwtdecTrue{\mu}{\lambda, k}{\theta} = \norm{\nabla \lossL{\mu}{\lambda, k}{\theta}}_{\Hess{\lambda}{-1}{\theta}} = \norm{\nabla \lossL{}{}{\theta} + \mu (\theta - \estTrue{\lambda}{\lambda, k})}_{\Hess{\mu}{-1}{\theta}};
    \end{equation*}
    \item The normalized Newton decrement is defined with:
    \begin{equation*}
        \NwtdecTrueNorm{\lambda}{k-1}{\theta} = \frac{\NwtdecTrue{\lambda}{k-1}{\theta}}{\rOp{\lambda}{\theta}};
    \end{equation*}
    \item The quantity we compute is: 
    \begin{equation*}
        \NwtdecApprox{\mu}{\lambda, k}{\theta} = \norm{\nabla \lossL{\mu}{\lambda, k}{x}}_{\Hess{\lambda}{-1}{\theta}} = \norm{\nabla \lossL{}{}{\theta} + \mu (\theta - \estApprox{\lambda}{\lambda, k})}_{\Hess{\mu}{-1}{\theta}}.
    \end{equation*}
\end{itemize}

We also recall some definition and properties. $R$ is defined with
\begin{equation*}
    R = \sup_{z\in\Supp \rho} \sup_{g \in \phi(z)} \norm{g} \quad \text{so that} \quad \rOp{\lambda}{\theta} \geq R / \sqrt{\lambda}
\end{equation*}
and $\rOp{\lambda}{\theta}$ is given in \cref{def:dikin_radius}.
The Dikin ellipsoid, as in \cite{NEURIPS2019_60495b4e}, reads
\begin{equation*}
    \forall \cc \in \RR, \quad \DD{\lambda}{k-1}{\cc} = \cb{\theta \in \HH ; \NwtdecTrueNorm{\lambda}{k-1}{\theta} \leq \cc}.
\end{equation*}

We provide a short lemma to show how controlling the \textit{normalized} Newton decrement enables to control quantities depending on $\tOp$.
\begin{lemma}[Localization properties with the Newton decrement]\label{lem:localization_prop_with_nwt_decrement}
    Let $k \leq t$ and $\cc > 0$. Assume
    \begin{equation*}
        \estApprox{\lambda}{k} \in \DD{\lambda}{k-1}{\cc}, \quad \text{that is} \quad \NwtdecTrueNorm{\lambda}{k-1}{\estApprox{\lambda}{k}} \leq \cc.
    \end{equation*}
    Then, we have
    \begin{equation}
        \phibot^{-1}\p{\tOp(\estApprox{\lambda}{k} - \estTrue{\lambda}{k})} \leq -\frac{1}{\cc} \log(1-\cc) \eqdef \kappa_\cc.
    \end{equation}
\end{lemma}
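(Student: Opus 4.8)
The plan is to read this as a localization estimate: a small \emph{normalized} Newton decrement of the $k$-th sub-problem at the computed iterate $\estApprox{\lambda}{k}$ forces $\estApprox{\lambda}{k}$ to be $\tOp$-close to the exact minimizer $\estTrue{\lambda}{k}$ of that sub-problem. I would proceed in three steps: (i) turn the Newton-decrement bound into a bound on $\norm{\estApprox{\lambda}{k}-\estTrue{\lambda}{k}}_{\Hess{\lambda}{}{\estApprox{\lambda}{k}}}$ via the GSC gradient lower bound; (ii) pass from that Hessian norm to $\tOp$ using the Dikin radius; (iii) invert the scalar identity $x\,\phibot(x)=1-e^{-x}$.

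For step (i), let $g_k(\theta)\eqdef L(\theta)+\tfrac{\lambda}{2}\norm{\theta-\estTrue{\lambda}{k-1}}^2$ be the $k$-th sub-problem, so that $\estTrue{\lambda}{k}=\argmin g_k$ (hence $\nabla g_k(\estTrue{\lambda}{k})=0$) and $\NwtdecTrue{\lambda}{k-1}{\theta}=\norm{\nabla g_k(\theta)}_{\Hess{\lambda}{-1}{\theta}}$. Expanding the square, $g_k$ is a $\lambda$-Tikhonov-regularized GSC loss up to a linear term and a constant, so $\nabla^2 g_k(\theta)=\Hess{\lambda}{}{\theta}$ and $\nabla g_k(\nu)-\nabla g_k(\theta)=\nabla L(\nu)-\nabla L(\theta)+\lambda(\nu-\theta)$ is independent of the center $\estTrue{\lambda}{k-1}$; in particular \cref{lem:stacking_operator_gradient} applies to $g_k$. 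Using it with $A=\II$, $\xi=\theta=\estApprox{\lambda}{k}$, $\nu=\estTrue{\lambda}{k}$, and using $\nabla g_k(\estTrue{\lambda}{k})=0$ together with $\tOp(0)=0$, its right-hand side collapses to $\NwtdecTrue{\lambda}{k-1}{\estApprox{\lambda}{k}}$, giving
\[
\phibot\p{\tOp(\estApprox{\lambda}{k}-\estTrue{\lambda}{k})}\,\norm{\estApprox{\lambda}{k}-\estTrue{\lambda}{k}}_{\Hess{\lambda}{}{\estApprox{\lambda}{k}}}\;\le\;\NwtdecTrue{\lambda}{k-1}{\estApprox{\lambda}{k}}.
\]

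For steps (ii)--(iii), exactly as in \eqref{eq:localization_of_t}, Cauchy--Schwarz together with the definition \eqref{eq:def_Dikin_radius} of the Dikin radius at $\estApprox{\lambda}{k}$ gives $\tOp(\estApprox{\lambda}{k}-\estTrue{\lambda}{k})\le\norm{\estApprox{\lambda}{k}-\estTrue{\lambda}{k}}_{\Hess{\lambda}{}{\estApprox{\lambda}{k}}}/\rOp{\lambda}{\estApprox{\lambda}{k}}$. Writing $u\eqdef\tOp(\estApprox{\lambda}{k}-\estTrue{\lambda}{k})$ and combining with the previous display,
\[
u\,\phibot(u)\;\le\;\frac{\NwtdecTrue{\lambda}{k-1}{\estApprox{\lambda}{k}}}{\rOp{\lambda}{\estApprox{\lambda}{k}}}\;=\;\NwtdecTrueNorm{\lambda}{k-1}{\estApprox{\lambda}{k}}\;\le\;\cc.
\]
Since $u\,\phibot(u)=1-e^{-u}$, this is $1-e^{-u}\le\cc$, i.e. $u\le-\log(1-\cc)$ (which presupposes $\cc<1$; for $\cc\ge1$ the bound is vacuous). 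Because $\phibot$ is decreasing, $\phibot^{-1}(\cdot)=1/\phibot(\cdot)$ is increasing, so $\phibot^{-1}(u)\le\phibot^{-1}(-\log(1-\cc))=\dfrac{-\log(1-\cc)}{1-(1-\cc)}=-\tfrac1\cc\log(1-\cc)=\kappa_\cc$, which is the claim.

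The arithmetic is short; the one point that needs care is the bookkeeping in step (i) — verifying that recentering the quadratic penalty alters neither the Hessian nor the gradient increments of the sub-problem, so that \cref{lem:stacking_operator_gradient} (stated for a Tikhonov-regularized loss) genuinely transfers to $g_k$ — together with keeping the section's ``drop the hat'' convention consistent, so that the operator entering $\NwtdecTrue{\lambda}{k-1}{\cdot}$, $\rOp{\lambda}{\cdot}$ and \cref{lem:stacking_operator_gradient} is one and the same.
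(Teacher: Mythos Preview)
Your proof is correct and follows essentially the same approach as the paper: both combine the Dikin-radius bound $\tOp(\estApprox{\lambda}{k}-\estTrue{\lambda}{k})\le \norm{\estApprox{\lambda}{k}-\estTrue{\lambda}{k}}_{\Hess{\lambda}{}{\estApprox{\lambda}{k}}}/\rOp{\lambda}{\estApprox{\lambda}{k}}$ with the GSC gradient lower bound of \cref{lem:stacking_operator_gradient} (at $\xi=\theta=\estApprox{\lambda}{k}$, $\nu=\estTrue{\lambda}{k}$) to obtain $u\,\phibot(u)\le\NwtdecTrueNorm{\lambda}{k-1}{\estApprox{\lambda}{k}}$, and then invert via $u\,\phibot(u)=1-e^{-u}$. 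The only cosmetic differences are the order of the two bounds and that you bound by $\cc$ in one step while the paper first writes the result as a function of the normalized decrement and then uses its monotonicity; your extra care in checking that the recentered penalty in $g_k$ leaves Hessians and gradient increments unchanged is a welcome clarification.
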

\begin{proof}
    The proof combines inequalities we already used, replacing the normalized gradient with the Newton decrement. Recall \cref{eq:localization_of_t}, which states that
    \begin{equation}\label{eq:proof_localization_nwt_dec_1}
        \tOp(\estApprox{\lambda}{k} - \estTrue{\lambda}{k})
        \leq \frac{\norm{\estApprox{\lambda}{k} - \estTrue{\lambda}{k}}_{\HessEmp{\lambda}{}{\estApprox{\lambda}{k}}}}{\rOp{\lambda}{\estApprox{\lambda}{k}}}.
    \end{equation}
    Using the lower bound on gradient of \cref{lem:stacking_operator_gradient} gives
    \begin{equation*}
        \norm{\estApprox{\lambda}{k} - \estTrue{\lambda}{k}}_{\HessEmp{\lambda}{}{\estApprox{\lambda}{k}}} 
        \leq \phibot^{-1}(\tOp(\estApprox{\lambda}{k} - \estTrue{\lambda}{k})) \norm{\nabla \lossL{\lambda}{k-1}{\estApprox{\lambda}{k}}}_{\Hess{\lambda}{-1}{\estApprox{\lambda}{k}}},
    \end{equation*}
    and using the definition of the Newton decrement in the previous equation gives
    \begin{equation}\label{eq:lower_bound_gradient_newton_decrement}
        \norm{\estApprox{\lambda}{k} - \estTrue{\lambda}{k}}_{\HessEmp{\lambda}{}{\estApprox{\lambda}{k}}} 
        \leq \phibot^{-1}(\tOp(\estApprox{\lambda}{k} - \estTrue{\lambda}{k})) \NwtdecTrue{\lambda}{k-1}{\estApprox{\lambda}{k}}.
    \end{equation}
    Plugging \cref{eq:lower_bound_gradient_newton_decrement} in \cref{eq:proof_localization_nwt_dec_1} implies
    \begin{equation*}
        \phibot(\tOp(\estApprox{\lambda}{k} - \estTrue{\lambda}{k})) \tOp(\estApprox{\lambda}{k} - \estTrue{\lambda}{k}) \leq \frac{\NwtdecTrue{\lambda}{k-1}{\estApprox{\lambda}{k}}}{\rOp{\lambda}{\estApprox{\lambda}{k}}} \eqdef \NwtdecTrueNorm{\lambda}{k-1}{\estApprox{\lambda}{k}}.
    \end{equation*}
    Use the fact that $\phibot(x) x = 1 - e^{-x}$ combined with the definition of the normalized Newton decrement to simplify both sides of the previous equation. After simplification, we obtain
    \begin{equation*}
        \tOp(\estApprox{\lambda}{k} - \estTrue{\lambda}{k}) \leq -\log \p{1 - \NwtdecTrueNorm{\lambda}{k-1}{\estApprox{\lambda}{k}}}.
    \end{equation*}
    Apply $\phibot^{-1}$ on both side to have
    \begin{equation*}
        \phibot^{-1} \p{\tOp(\estApprox{\lambda}{k} - \estTrue{\lambda}{k})} \leq - \NwtdecTrueNorm{\lambda}{k-1}{\estApprox{\lambda}{k}}^{-1} \log \p{1 - \NwtdecTrueNorm{\lambda}{k-1}{\estApprox{\lambda}{k}}},
    \end{equation*}
    and the conclusion follows with the fact that this is an increasing function of the normalized Newton decrement, which is upper bounded by $\cc$.
\end{proof}
This lemma will be useful in the following derivation, and provide some intuition on GSC loss function.

\begin{remark}\label{rem:intuition_gsc}
    \textit{Intuition for GSC loss function.}
    The purpose of working with Generalized self-concordant loss functions is to be able to control the deviation of the function with their local quadratic approximation. For $\theta \in \HH$, \cref{lem:localization_prop_with_nwt_decrement} gives us that we can bound quantities depending on $\tOp$ in the inequalities of GSC loss functions of \cref{prop:properties_gsc}. When $\theta$ is deep into $\DD{\lambda}{k-1}{}$, then $\tOp \to 1/2$ and the bounds of \cref{prop:properties_gsc} are tight. On the contrary, when $\theta$ leaves this ellipsoid, the upper bound diverges exponentially to infinity while the lower bound goes exponentially to $0$, making the deviation from the quadratic approximation very loose. 

    To conclude, a GSC function with high $R$ has small Dikin ellipsoids, and is far from its quadratic approximation. On the contrary, a GSC function with low $R$ will be close to its quadratic approximation; the Dikin ellipsoid is large. The extreme case is obtained when $\ell$ is the square loss. Then, $\phi = \cb{0}$, so $R = 0$, and the Dikin ellipsoid spans the whole space for any $\theta \in \HH$. This implies {\it e.g.} that the lower and upper bound on the gradient matches, making the quadratic approximation tight.
\end{remark}

\subsection{Error propagation}
In this section, we give a sufficient condition for achieving an $\epsilon$ error on a sequence of proximal operators. Indeed, we aim at minimizing $\lossL{\lambda}{t-1}{}$, but we do not have access to this function; only to its approximation $\lossLApprox{\lambda}{t-1}{}$. Relating both is the purpose of the next result.

\begin{proposition}[Error propagation with proximal sequence]\label{prop:error_propagation_proximal_sequence}
    Let $\cc > 0$. Assume that you can solve each subproblem with precision $\bar{\epsilon}_k$ and that you have a guarantee on the exact normalized decrement:
    \begin{equation*}
        \forall k \in \cb{1, \dots, t}, \quad 
        \begin{cases}
            \NwtdecApprox{\lambda}{k-1}{\estApprox{\lambda}{k}} \leq \bar{\epsilon}_k \\
            \estApprox{\lambda}{k} \in \DD{\lambda}{k-1}{\cc} \iff \NwtdecTrueNorm{\lambda}{k-1}{\estApprox{\lambda}{k}} \leq \cc
        \end{cases}
    \end{equation*}
    Then requiring:
    \begin{equation*}
        \forall k \in \cb{1, \dots, t}, \quad \bar{\epsilon}_k = \epsilon \frac{\kappa_\cc^{k-t}}{t}
    \end{equation*}
    with $\kappa_\cc = -\nicefrac{1}{\cc} \log(1 - \cc)$ suffice to achieve an error $\epsilon$:
    \begin{equation*}
        \NwtdecTrue{\lambda}{t-1}{\estApprox{\lambda}{t}} \leq \epsilon.
    \end{equation*}
    We can replace the condition $\estApprox{\lambda}{k} \in \DD{\lambda}{k-1}{\cc}$ with $\epsilon \leq \cc \sqrt{\lambda}/R$.
\end{proposition}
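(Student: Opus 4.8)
The plan is to run a one-step recursion linking the computed iterate $\estApprox{\lambda}{k}$ to the exact one $\estTrue{\lambda}{k}$, tracking two quantities: the target Newton decrement $D_k \eqdef \NwtdecTrue{\lambda}{k-1}{\estApprox{\lambda}{k}} = \norm{\nabla\lossL{\lambda}{k-1}{\estApprox{\lambda}{k}}}_{\Hess{\lambda}{-1}{\estApprox{\lambda}{k}}}$ (the decrement with the \emph{true} center $\estTrue{\lambda}{k-1}$, which we want below $\epsilon$ at $k=t$), and the Hessian-weighted error $e_k \eqdef \norm{\estApprox{\lambda}{k}-\estTrue{\lambda}{k}}_{\Hess{\lambda}{}{\estApprox{\lambda}{k}}}$, with $e_0=0$ since $\estApprox{\lambda}{0}=\estTrue{\lambda}{0}=0$. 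Everything rests on two elementary facts. First, the subproblem objectives $\lossL{\lambda}{k-1}{}$ and $\lossLApprox{\lambda}{k-1}{}$ are $L$ plus quadratic penalties with centers $\estTrue{\lambda}{k-1}$ and $\estApprox{\lambda}{k-1}$, hence differ only by an affine term, so their gradients differ by the constant vector $\lambda(\estApprox{\lambda}{k-1}-\estTrue{\lambda}{k-1})$ and $\lossL{\lambda}{k-1}{}$ is GSC with the same $\phi$ as $L$ (so \cref{lem:stacking_operator_gradient} applies to it). Second, the $\lambda\II$ floor of $\Hess{\lambda}{}{\cdot}$ lets a factor $\lambda$ absorb any change of base point: since $\lambda^2\Hess{\lambda}{-1}{A}\preceq\lambda\II\preceq\Hess{\lambda}{}{B}$ for all $A,B\in\HH$, we have $\lambda\norm{v}_{\Hess{\lambda}{-1}{A}}\le\norm{v}_{\Hess{\lambda}{}{B}}$.

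I would first prove $D_k\le\bar\epsilon_k+e_{k-1}$: by the first fact $\nabla\lossL{\lambda}{k-1}{\estApprox{\lambda}{k}}=\nabla\lossLApprox{\lambda}{k-1}{\estApprox{\lambda}{k}}+\lambda(\estApprox{\lambda}{k-1}-\estTrue{\lambda}{k-1})$, so taking the $\Hess{\lambda}{-1}{\estApprox{\lambda}{k}}$-norm and using the triangle inequality, the subproblem accuracy $\NwtdecApprox{\lambda}{k-1}{\estApprox{\lambda}{k}}\le\bar\epsilon_k$, and the second fact (with $A=\estApprox{\lambda}{k}$, $B=\estApprox{\lambda}{k-1}$) gives the claim. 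Next I would prove $e_k\le\kappa_\cc D_k$: apply \cref{lem:stacking_operator_gradient} to $\lossL{\lambda}{k-1}{}$ with $A=\II$ and — the crucial choice — $\xi=\theta=\estApprox{\lambda}{k}$, $\nu=\estTrue{\lambda}{k}$; since $\estTrue{\lambda}{k}$ minimizes $\lossL{\lambda}{k-1}{}$ its gradient there vanishes and $e^{-\tOp(\theta-\xi)}=1$, so the lemma reads $\phibot\p{\tOp(\estApprox{\lambda}{k}-\estTrue{\lambda}{k})}\,e_k\le D_k$, and \cref{lem:localization_prop_with_nwt_decrement} together with $\estApprox{\lambda}{k}\in\DD{\lambda}{k-1}{\cc}$ gives $\phibot^{-1}\p{\tOp(\estApprox{\lambda}{k}-\estTrue{\lambda}{k})}\le\kappa_\cc$. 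Combining, $D_k\le\bar\epsilon_k+\kappa_\cc D_{k-1}$ with $D_1\le\bar\epsilon_1$.

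Unrolling gives $D_t\le\sum_{j=1}^t\kappa_\cc^{\,t-j}\bar\epsilon_j$, and substituting $\bar\epsilon_j=\epsilon\,\kappa_\cc^{\,j-t}/t$ makes every summand equal to $\epsilon/t$, so $\NwtdecTrue{\lambda}{t-1}{\estApprox{\lambda}{t}}=D_t\le\epsilon$, which is the claim; the same computation gives $D_k\le(k/t)\,\epsilon\,\kappa_\cc^{\,k-t}\le\epsilon$ for all $k\le t$, using $\kappa_\cc\ge1$. For the last sentence I would drop the standing Dikin hypothesis and run the two recursions as one induction on $k$: the bound $D_k\le\epsilon$, obtained using only the Dikin conditions at steps $<k$, combined with $\rOp{\lambda}{\estApprox{\lambda}{k}}\ge\sqrt\lambda/R$, yields $\NwtdecTrueNorm{\lambda}{k-1}{\estApprox{\lambda}{k}}=D_k/\rOp{\lambda}{\estApprox{\lambda}{k}}\le\epsilon R/\sqrt\lambda\le\cc$ whenever $\epsilon\le\cc\sqrt\lambda/R$, so $\estApprox{\lambda}{k}\in\DD{\lambda}{k-1}{\cc}$ holds automatically and the induction closes.

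The main obstacle — why a naive chaining of Newton decrements fails — is that transporting the weighted norms between the consecutive base points $\estApprox{\lambda}{k-1},\estApprox{\lambda}{k},\estTrue{\lambda}{k}$ produces GSC factors $e^{\tOp(\cdot)/2}$ evaluated at \emph{proximal steps} $\estApprox{\lambda}{k}-\estApprox{\lambda}{k-1}$, which need not be small (the proximal point method takes full steps). The fix is exactly the pairing of the $\lambda$-absorption fact, which erases every base-point change in the $D_k\le\bar\epsilon_k+e_{k-1}$ step, with the choice $\xi=\theta=\estApprox{\lambda}{k}$ in \cref{lem:stacking_operator_gradient}, which leaves only $\tOp(\estApprox{\lambda}{k}-\estTrue{\lambda}{k})$ — the single quantity the Dikin-ellipsoid hypothesis controls — so the accumulated constant is precisely $\kappa_\cc$, matching the geometric decay imposed on the tolerances $\bar\epsilon_k$.
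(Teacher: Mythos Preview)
Your proof is correct and follows essentially the same route as the paper: split the gradient of the true subproblem into the computed-subproblem gradient plus $\lambda(\estApprox{\lambda}{k-1}-\estTrue{\lambda}{k-1})$, use the $\lambda$-floor of $\Hess{\lambda}{}{\cdot}$ to change base points at no cost, convert the distance $\norm{\estApprox{\lambda}{k-1}-\estTrue{\lambda}{k-1}}_{\Hess{\lambda}{}{\estApprox{\lambda}{k-1}}}$ into the previous Newton decrement via the GSC lower bound (your application of \cref{lem:stacking_operator_gradient} with $\xi=\theta$ is exactly the paper's \cref{eq:lower_bound_gradient_newton_decrement}), bound the resulting $\phibot^{-1}$ factor by $\kappa_\cc$ using the Dikin hypothesis, and unroll. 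Your organization with the auxiliary quantity $e_k$ makes the two sub-steps more explicit, and your induction for the last sentence (showing $D_k\le\epsilon$ needs Dikin only at steps $<k$, then recovering Dikin at step $k$ from $\epsilon\le\cc\sqrt\lambda/R$) is the same argument the paper gives.
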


\begin{proof}
    Let us track the error step by step. Denote by $\epsilon_k$ the Newton decrement of the exact function at each step: 
    \begin{equation*}
        \forall k, \quad \epsilon_k \eqdef \NwtdecTrue{\lambda}{k-1}{\estApprox{\lambda}{k}}.
    \end{equation*}
    Consider the following decomposition at step $k$:
    \begin{align*}
        \NwtdecTrue{\lambda}{k-1}{\estApprox{\lambda}{k}} 
        &= \norm{\nabla \lossL{}{}{\estApprox{\lambda}{k}} + \lambda\p{\estApprox{\lambda}{k} - \estTrue{\lambda}{k-1}}}_{\Hess{\lambda}{-1}{\estApprox{\lambda}{k}}} \\
        &\leq \norm{\nabla \lossL{}{}{\estApprox{\lambda}{k}} + \lambda\p{\estApprox{\lambda}{k} - \estApprox{\lambda}{k-1}}}_{\Hess{\lambda}{-1}{\estApprox{\lambda}{k}}} + \norm{\lambda \p{\estApprox{\lambda}{k-1} - \estTrue{\lambda}{k-1}}}_{\Hess{\lambda}{-1}{\estApprox{\lambda}{k}}} \\ 
        &\leq \NwtdecApprox{\lambda}{k-1}{\estApprox{\lambda}{k}} + \lambda \norm{\Hess{\lambda}{-1/2}{\estApprox{\lambda}{k}}\Hess{\lambda}{-1/2}{\estTrue{\lambda}{k}}}\norm{{\estApprox{\lambda}{k-1} - \estTrue{\lambda}{k-1}}}_{\Hess{\lambda}{}{\estApprox{\lambda}{k-1}}} \\
        &\leq \NwtdecApprox{\lambda}{k-1}{\estApprox{\lambda}{k}} + \phibot^{-1}\p{\tOp(\estApprox{\lambda}{k-1} - \estTrue{\lambda}{k-1})} \NwtdecTrue{\lambda}{k-2}{\estApprox{\lambda}{k-1}}
    \end{align*}
    In the last inequality we used that $\norm{\Hess{\lambda}{-1/2}{\estApprox{\lambda}{k}}\Hess{\lambda}{-1/2}{\estTrue{\lambda}{k}}} \leq 1/\lambda$ and the relation between the distance in Hessian's norm and the Newton decrement of \cref{eq:lower_bound_gradient_newton_decrement}. Introducing the notation with epsilon, the last line is by definition 
    \begin{equation}\label{eq:prop_error_propagation_epsilon_induction}
        \epsilon_k \leq \bar{\epsilon}_k + \phibot^{-1}\p{\tOp(\estApprox{\lambda}{k-1} - \estTrue{\lambda}{k-1})} \epsilon_{k-1}.
    \end{equation}
    The first term $\bar{\epsilon}_k$ is the error we can control at each step whereas $\epsilon_{k-1}$ is the error of interest which increases with $k$.
    Using the fact that $\estApprox{\lambda}{k-1} \in \DD{\lambda}{k-2}{\cc}$, we have 
    \begin{equation*}
        \phibot^{-1}\p{\tOp(\estApprox{\lambda}{k-1} - \estTrue{\lambda}{k-1})} \leq -\frac{1}{\cc} \log(1-\cc) \eqdef \kappa_\cc
    \end{equation*}
    thanks to \cref{lem:localization_prop_with_nwt_decrement}. Thus, \cref{eq:prop_error_propagation_epsilon_induction} becomes 
    \begin{equation}\label{eq:prop_error_propagation_epsilon_induction_kappac}
        \epsilon_k 
        \leq \bar{\epsilon}_k + \kappa_\cc \epsilon_{k-1}.
    \end{equation}
    This being valid for all $i \leq k$ and since $\NwtdecApprox{\lambda}{0}{\estApprox{\lambda}{1}} = \NwtdecTrue{\lambda}{0}{\estApprox{\lambda}{1}}$ we obtain that
    \begin{equation}\label{eq:prop_error_propagation_epsilon_kt}
        \epsilon_k \leq \sum_{i=1}^k \bar{\epsilon}_i \kappa_\cc^{k-i}.
    \end{equation}
    Now plug the assumption of the proposition, namely that each problem is solved with precision 
    \begin{equation*}
        \bar{\epsilon}_k = \epsilon \frac{\kappa_\cc^{k-t}}{t}
    \end{equation*}
    and use \cref{eq:prop_error_propagation_epsilon_kt} at step $t$ to obtain
    \begin{equation}\label{eq:prop_error_propagation}
        \epsilon_t \leq \sum_{i=1}^t \kappa_\cc^{i-t} \kappa_\cc^{t-i} \frac{\epsilon}{t} = \epsilon.
    \end{equation}
    
    \paragraph{Replacing $\estApprox{\lambda}{k} \in \DD{\lambda}{k-1}{\cc}$ with $\epsilon \leq \cc \sqrt{\lambda}/R$.} Let $k \geq 1$. Then, having 
    \begin{equation*}
        \estApprox{\lambda}{k} \in \DD{\lambda}{k-1}{\cc}
    \end{equation*}
    amounts by definition to have
    \begin{equation*}
        \NwtdecTrueNorm{\lambda}{k-1}{\estApprox{\lambda}{k}} \leq \cc,
    \end{equation*}
    which is also equivalent to
    \begin{equation*}
        \NwtdecTrue{\lambda}{k-1}{\estApprox{\lambda}{k}} \leq \cc \rOp{\lambda}{\estApprox{\lambda}{k}}.
    \end{equation*}
    We can use the crude lower bound $\rOp{\lambda}{\cdot} \geq \sqrt{\lambda}/R$. Thus, the following implication holds:
    \begin{equation}\label{eq:prop_error_propagation_necessary_cond}
        \epsilon_k \eqdef \NwtdecTrue{\lambda}{k-1}{\estApprox{\lambda}{k}} \leq \cc \frac{\sqrt{\lambda}}{R} \implies \estApprox{\lambda}{k} \in \DD{\lambda}{k-1}{\cc}.
    \end{equation}
    Now, assume $\epsilon \leq \cc \sqrt{\lambda}/R$. Then, we have that
    \begin{equation*}
        \epsilon_1 = \bar{\epsilon}_1 = \epsilon \frac{\kappa_\cc^{1-t}}{t} \leq \cc \sqrt{\lambda} / R \frac{\kappa_\cc^{1-t}}{t},
    \end{equation*}
    which gives
    \begin{equation*}
        \epsilon_1 \leq \cc \sqrt{\lambda}/R,
    \end{equation*}
    which implies $\estApprox{\lambda}{1} \in \DD{\lambda}{0}{\cc}$ following \cref{eq:prop_error_propagation_necessary_cond}. Then \cref{eq:prop_error_propagation_epsilon_induction_kappac} holds with $k=2$:
    \begin{equation*}
        \epsilon_2 \leq \bar{\epsilon}_2 + \kappa_\cc \epsilon_1.
    \end{equation*}
    For bigger $k$, proceed by induction. Let $k < t$ and assume for any $i < k$ that 
    \begin{equation*}
        \epsilon_{i+1} \leq \bar{\epsilon}_{i+1} + \kappa_\cc \epsilon_i.
    \end{equation*}
    Then, we have that 
    \begin{equation*}
        \epsilon_k \leq \sum_{i=1}^k \bar{\epsilon}_i \kappa_\cc^{k-i}
    \end{equation*}
    which gives the following bound, thanks to the assumption on $\epsilon$ and the $\bar{\epsilon}_i$:
    \begin{equation*}
        \epsilon_k \leq \sum_{i=1}^k \epsilon \frac{\kappa_\cc^{i-t}}{t} \kappa_c^{k-i} \leq \cc \sqrt{\lambda} / R.
    \end{equation*}
    This implies $\estApprox{\lambda}{k} \in \DD{\lambda}{k-1}{\cc}$ following \cref{eq:prop_error_propagation_necessary_cond}, and \cref{eq:prop_error_propagation_epsilon_induction_kappac} holds at step $k+1$. Thus the induction hypothesis holds for all $k$ and the conclusion of \cref{eq:prop_error_propagation} holds. 
    
    ~\paragraph{Proof of \cref{prop:error_propagation_proximal_sequence_main}.} This result is a direct application of the previous one, where we set $\cc = 1/2$. 
\end{proof}

We see that the requirement $\epsilon \leq \cc \sqrt{\lambda}/R$ is simply to ensures that a bound on the Newton decrement $\NwtdecTrue{\lambda}{k-1}{\estApprox{\lambda}{k}}$ translates to a bound on the \textit{normalized} Newton decrement $\NwtdecTrueNorm{\lambda}{k-1}{\estApprox{\lambda}{k}}$ \textit{via} the crude bound on the Dikin radius $\rOp{\lambda}{\estApprox{\lambda}{k}} \geq R/\sqrt{\lambda}$. Thus, the requirement on $\epsilon$ can be dropped if we assume $\estApprox{\lambda}{k} \in \DD{\lambda}{k-1}{\cc}$. Such condition is enforced in solver such as the one developed in \cite{NEURIPS2019_60495b4e}.

Finally, we put in application this result with next proposition, which gives a bound on the excess risk with inexact solver.
\begin{proposition}[Bound on the excess risk with inexact solver]\label{prop:bound_excess_risk_inexact_solver}
    Assume that:
    \begin{itemize}
        \item the requirement of \cref{prop:error_propagation_proximal_sequence} hold;
        \item the requirement of \cref{th:optimal_rates} hold, namely $\Hyp{1 - 5}$;
    \end{itemize}
    The first is an hypothesis on the \emph{optimization procedure}, while the second in an hypothesis on the \emph{statistics} of the learning task. Then, denoting $\estApprox{\lambda}{t}$ the approximation of $\estEmp{\lambda}{t}$ as defined in \cref{prop:error_propagation_proximal_sequence}, we have the following bound on the excess risk:
    \begin{equation*}
        \lossL{}{}{\estApprox{\lambda}{t}} - \lossL{}{}{\optimum} \leq \cstCBias \lambda^{2s}
        + \cstCVar \frac{\df_\lambda}{n} + \cstE{\cc} \, \epsilon, \quad s = (r+1/2) \land t,
    \end{equation*}
    with:
    \begin{equation*}
        \cstE{\cc} \eqdef 4 \Psi(4 - \log(1 - \cc)) \frac{e^4}{1-\cc} \kappa_\cc^2, \quad \text{e.g.} \quad \cstE{1/2} \leq 4.3 \cdot 10^3.
    \end{equation*}
\end{proposition}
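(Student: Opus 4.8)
The plan is to bound the excess risk of the inexact iterate $\estApprox{\lambda}{t}$ by the statistical error of the exact iterate $\estEmp{\lambda}{t}$ --- which is already controlled by \cref{th:optimal_rates} --- plus an extra optimization term of order $\epsilon$. I would anchor everything on the generalized self-concordance function-value bound \cref{eq:gsc_upper_bound_function_value} applied to $\estApprox{\lambda}{t}$ and $\optimum$,
\[
\lossL{}{}{\estApprox{\lambda}{t}} - \lossL{}{}{\optimum} \le \Psi\p{\tOp(\estApprox{\lambda}{t} - \optimum)}\, \norm{\estApprox{\lambda}{t} - \optimum}_{\Hess{}{}{\optimum}}^2 ,
\]
and then split $\estApprox{\lambda}{t} - \optimum = (\estApprox{\lambda}{t} - \estEmp{\lambda}{t}) + (\estEmp{\lambda}{t} - \estInt{\lambda}{t}) + (\estInt{\lambda}{t} - \optimum)$ in the $\Hess{}{}{\optimum}$-norm by the triangle inequality: optimization error, bias, variance. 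The bias and variance distances are bounded exactly as in the proof of \cref{th:optimal_rates}: under $\Hyp{1}$--$\Hyp{5}$, \cref{th:bound_on_bias} and \cref{th:bound_on_variance} bound $\norm{\estEmp{\lambda}{t} - \estInt{\lambda}{t}}_{\Hess{}{}{\optimum}}$ by $\sqrt2\,\cstT(r,t)\cstP{t}\lambda^{s}$ and $\norm{\estInt{\lambda}{t} - \optimum}_{\Hess{}{}{\optimum}}$ by $4\sqrt2\,t\,\cstR{t}\sqrt{\df_\lambda/n}\sqrt{\log 2/\delta}$, while \cref{prop:constant_prefactor_bias} and \cref{prop:constant_prefactor_variance} keep $\cstP{t}\le e^{t+2}$, $\cstR{t}\le e$ and, as a byproduct, keep each of $\tOp(\estInt{\lambda}{k}-\optimum)$ and $\tOp(\estEmp{\lambda}{k}-\estInt{\lambda}{k})$ below $1+1/t\le 2$.

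The genuinely new ingredient is the optimization distance $\norm{\estApprox{\lambda}{t} - \estEmp{\lambda}{t}}_{\Hess{}{}{\optimum}}$. I would first control it in the empirical Hessian norm at $\estApprox{\lambda}{t}$: since $\estEmp{\lambda}{t}$ exactly minimizes $\lossLEmp{\lambda}{t-1}{\cdot}$ and $\NwtdecTrue{\lambda}{t-1}{\estApprox{\lambda}{t}}\le\epsilon$ by \cref{prop:error_propagation_proximal_sequence}, the gradient lower bound \cref{eq:lower_bound_gradient_newton_decrement} together with \cref{lem:localization_prop_with_nwt_decrement} --- applicable because $\estApprox{\lambda}{t}\in\DD{\lambda}{t-1}{\cc}$, which is precisely what $\epsilon\le\cc\sqrt{\lambda}/R$ guarantees --- gives $\norm{\estApprox{\lambda}{t} - \estEmp{\lambda}{t}}_{\HessEmp{\lambda}{}{\estApprox{\lambda}{t}}}\le\kappa_\cc\,\epsilon$ with $\kappa_\cc=-\tfrac1\cc\log(1-\cc)$. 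I then transfer this to the $\Hess{}{}{\optimum}$-norm by chaining operator norms: $\Hess{}{}{\optimum}\preceq\Hess{\lambda}{}{\optimum}$, the concentration step behind \cref{eq:bound_bias_glue_1} (i.e. $\Hyp{1}$) gives $\norm{\Hess{\lambda}{1/2}{\optimum}\HessEmp{\lambda}{-1/2}{\optimum}}\le\sqrt2$, and \cref{eq:lower_bound_Hessian} gives $\norm{\HessEmp{\lambda}{1/2}{\optimum}\HessEmp{\lambda}{-1/2}{\estApprox{\lambda}{t}}}\le e^{\tOpEmp(\optimum-\estApprox{\lambda}{t})/2}$. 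Bounding $\tOpEmp(\optimum-\estApprox{\lambda}{t})\le\tOpEmp(\optimum-\estEmp{\lambda}{t})+\tOpEmp(\estEmp{\lambda}{t}-\estApprox{\lambda}{t})\le 4+(-\log(1-\cc))$ --- the first summand from the $\le 2+2$ localization of the previous paragraph, the second from the intermediate inequality $\tOp(\estApprox{\lambda}{t}-\estEmp{\lambda}{t})\le-\log(1-\cc)$ in the proof of \cref{lem:localization_prop_with_nwt_decrement} --- yields $\norm{\estApprox{\lambda}{t} - \estEmp{\lambda}{t}}_{\Hess{}{}{\optimum}}\le\sqrt2\,\tfrac{e^2}{\sqrt{1-\cc}}\,\kappa_\cc\,\epsilon$.

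To finish, the same triangle inequalities give $\tOp(\estApprox{\lambda}{t}-\optimum)\le(-\log(1-\cc))+2+2=4-\log(1-\cc)$, so the prefactor in the function-value bound is at most $\Psi(4-\log(1-\cc))$ by monotonicity of $\Psi$. Expanding the squared triangle inequality, the square of the optimization distance contributes $2\,e^4\kappa_\cc^2\epsilon^2/(1-\cc)$; using $\epsilon\le 1$ (which follows from $\epsilon\le\cc\sqrt{\lambda}/R$ with $\lambda$ small under $\Hyp{5}$) to replace $\epsilon^2$ by $\epsilon$, and bounding the two cross terms by the (bounded) statistical distances times $\epsilon$, everything other than the pure bias ($\sim\lambda^{2s}$) and pure variance ($\sim\df_\lambda/n$) contributions collapses into a single $\cstE{\cc}\,\epsilon$ with $\cstE{\cc}=4\,\Psi(4-\log(1-\cc))\,\tfrac{e^4}{1-\cc}\,\kappa_\cc^2$, the bias and variance coefficients being those of \cref{th:optimal_rates} up to a harmless numerical factor; taking $\cc=1/2$ gives $\kappa_{1/2}=2\log 2$ and $\cstE{1/2}\le 4.3\cdot 10^3$. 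The whole bound holds on the intersection of the high-probability events of \cref{th:optimal_rates}, hence with probability at least $1-2\delta$. I expect the delicate point to be exactly this norm change for the optimization error: passing from the Newton-decrement guarantee, naturally phrased through $\HessEmp{\lambda}{-1}{\estApprox{\lambda}{t}}$, to a bound in $\Hess{}{}{\optimum}$ requires chaining an empirical-vs-population concentration estimate with a self-concordance change of base point, and each step injects an exponential factor $e^{\tOpEmp(\cdot)/2}$ that must be kept $O(1)$ --- which is where $\Hyp{4}$, $\Hyp{5}$ and $\epsilon\le\cc\sqrt{\lambda}/R$ are all needed simultaneously, so that $\estApprox{\lambda}{t}$ stays in the Dikin ellipsoid and every $\tOp/\tOpEmp$ term remains bounded.
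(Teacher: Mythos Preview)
Your proposal is correct and follows the paper's proof almost step for step: the GSC function-value bound at $\optimum$, the norm change for the optimization error (concentration via $\Hyp{1}$ to pass from $\Hess{}{}{\optimum}$ to $\HessEmp{\lambda}{}{\optimum}$, then the self-concordance Hessian bound \cref{eq:lower_bound_Hessian} to pass to $\HessEmp{\lambda}{}{\estApprox{\lambda}{t}}$), the bound $\norm{\estApprox{\lambda}{t}-\estEmp{\lambda}{t}}_{\HessEmp{\lambda}{}{\estApprox{\lambda}{t}}}\le\kappa_\cc\epsilon$ from \cref{lem:localization_prop_with_nwt_decrement}, and the localization $\tOp(\estEmp{\lambda}{t}-\optimum)\le 4$, $\tOp(\estApprox{\lambda}{t}-\estEmp{\lambda}{t})\le-\log(1-\cc)$ to control the $\Psi$ prefactor --- all identical to the paper.

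The only substantive difference is that the paper uses a \emph{two-way} split $\estApprox{\lambda}{t}-\optimum=(\estApprox{\lambda}{t}-\estEmp{\lambda}{t})+(\estEmp{\lambda}{t}-\optimum)$ together with $(a+b)^2\le 2(a^2+b^2)$, and then invokes \cref{th:optimal_rates} wholesale for the second term. This sidesteps the cross-term bookkeeping you introduce with the three-way split: your absorption of cross terms like $2\norm{\estApprox{\lambda}{t}-\estEmp{\lambda}{t}}\cdot\norm{\estEmp{\lambda}{t}-\estInt{\lambda}{t}}$ into $\cstE{\cc}\epsilon$ works (the statistical distances are bounded by absolute constants under $\Hyp{4}$, $\Hyp{5}$), but the paper's route gives the displayed constant $4\Psi(4-\log(1-\cc))\,e^4\kappa_\cc^2/(1-\cc)$ directly from $2\cdot 2\cdot e^4/(1-\cc)\cdot\kappa_\cc^2$ without any cross-term detour. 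Your observation that $\epsilon^2\le\epsilon$ is needed to land on a term linear in $\epsilon$ is exactly what the paper does implicitly.
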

\begin{proof}
    The proof boils down to combining the statistical results held in \cref{th:optimal_rates} with the optimization result of \cref{prop:error_propagation_proximal_sequence}. Begin by writing
    \begin{align*}
        \lossL{}{}{\estEmp{\lambda}{t}} - \lossL{}{}{\optimum} 
        &\leq \Psi\p{\tOp(\estApprox{\lambda}{t} - \optimum)} \norm{\estApprox{\lambda}{t} - \optimum}_{\Hess{}{}{\optimum}}^2 \\
        &\leq 2 \Psi\p{\tOp(\estEmp{\lambda}{t} - \estApprox{\lambda}{t}) + \tOp(\estEmp{\lambda}{t} - \optimum)} \csb{\norm{\estEmp{\lambda}{t} - \estApprox{\lambda}{t}}_{\Hess{}{}{\optimum}}^2 + \norm{\estEmp{\lambda}{t} - \optimum}_{\Hess{}{}{\optimum}}^2}.
    \end{align*}
    We know how to handle the statistical term $\norm{\estEmp{\lambda}{t} - \optimum}_{\Hess{}{}{\optimum}}^2$.
    
    ~\paragraph{Bound on $\tOp(\estEmp{\lambda}{t} - \estApprox{\lambda}{t})$.}
    As in the beginning of the proof of \cref{prop:constant_prefactor_variance}, we write:
    \begin{align*}
        \tOp(\estEmp{\lambda}{t} - \estApprox{\lambda}{t}) 
        &\leq \frac{1}{\rOp{\lambda}{\estApprox{\lambda}{t}}} \norm{\estApprox{\lambda}{t} - \estEmp{\lambda}{t}}_{\Hess{\lambda}{}{\estApprox{\lambda}{t}}} \\
        &\leq \frac{1}{\rOp{\lambda}{\estApprox{\lambda}{t}}} \phibot^{-1}\p{\tOp(\estApprox{\lambda}{t} - \estEmp{\lambda}{t})} \norm{\nabla \lossLEmp{\lambda}{t-1}{\estApprox{\lambda}{t}}}_{\Hess{\lambda}{-1}{\estApprox{\lambda}{t}}} \\
        &= \phibot^{-1}\p{\tOp(\estApprox{\lambda}{t} - \estEmp{\lambda}{t})} \NwtdecTrueNorm{\lambda}{t-1}{\estApprox{\lambda}{t}} \\
        &\leq \phibot^{-1}\p{\tOp(\estApprox{\lambda}{t} - \estEmp{\lambda}{t})} \cc
    \end{align*}
    where we used the fact that $\estApprox{\lambda}{t} \in \DD{\lambda}{t-1}{\cc}$, an assumption of \cref{prop:error_propagation_proximal_sequence}. With the same reasoning of \cref{eq:basic_localization_trick}, we conclude:
    \begin{equation*}
        \tOp(\estEmp{\lambda}{t} - \estApprox{\lambda}{t}) \leq - \log (1- \cc).
    \end{equation*}
    
    ~\paragraph{Bound on $\norm{\estEmp{\lambda}{t} - \estApprox{\lambda}{t}}_{\Hess{}{}{\optimum}}^2$.}
    Use a similar reasoning as we used for the variance. Under $\Hyp{1}$, we have (Proof of \cref{th:bound_on_bias}, 1st point)
    \begin{equation*}
        \norm{\estEmp{\lambda}{t} - \estApprox{\lambda}{t}}_{\Hess{}{}{\optimum}}^2 \leq 2 \norm{\estEmp{\lambda}{t} - \estApprox{\lambda}{t}}_{\HessEmp{\lambda}{}{\optimum}}^2.
    \end{equation*}
    First write:
    \begin{align*}
        \norm{\estEmp{\lambda}{t} - \estApprox{\lambda}{t}}_{\HessEmp{\lambda}{}{\optimum}}
        &\leq e^{\tOp(\estEmp{\lambda}{t} - \optimum)/2} e^{\tOp(\estApprox{\lambda}{t} - \estEmp{\lambda}{t})/2} \norm{\estEmp{\lambda}{t} - \estApprox{\lambda}{t}}_{\HessEmp{\lambda}{}{\estApprox{\lambda}{t}}},
    \end{align*}
    then, for each term, use:
    \begin{itemize}
        \item $\tOp(\estEmp{\lambda}{t} - \optimum) \leq 4$ (end of \cref{th:optimal_rates}) so that $e^{\tOp(\estEmp{\lambda}{t} - \optimum)/2} \leq e^2$;
        \item $\tOp(\estApprox{\lambda}{t} - \estEmp{\lambda}{t}) \leq -\log(1-\cc)$ so that $e^{\tOp(\estApprox{\lambda}{t} - \estEmp{\lambda}{t})/2} \leq (1 - \cc)^{-1/2}$;
        \item and finally:
        \begin{align*}
            \norm{\estEmp{\lambda}{t} - \estApprox{\lambda}{t}}_{\HessEmp{\lambda}{}{\estApprox{\lambda}{t}}}
            &\leq \phibot^{-1}\p{\tOp(\estApprox{\lambda}{t} - \estEmp{\lambda}{t})} \norm{\nabla \lossLEmp{\lambda}{t-1}{\estApprox{\lambda}{t}}}_{\Hess{\lambda}{-1}{\estApprox{\lambda}{t}}} \\
            &\leq \phibot^{-1}\p{\tOp(\estApprox{\lambda}{t} - \estEmp{\lambda}{t})} \NwtdecTrue{\lambda}{t-1}{\estApprox{\lambda}{t}} \\
            &\leq \csb{- \frac{1}{\cc} \log (1-\cc)} \epsilon \eqdef \kappa_\cc \epsilon.
        \end{align*}
    \end{itemize}
    
    ~\paragraph{Putting it all together.} 
    Thus, using the upper bound on the excess risk, we have with probability greater than $1-2\delta$
    \begin{equation*}
        \lossL{}{}{\estApprox{\lambda}{t}} - \lossL{}{}{\optimum} \leq \cstCBias \lambda^{2s}
        + \cstCVar \frac{\df_\lambda}{n} + 4 \Psi(4 - \log(1 - \cc)) \frac{e^4}{1-\cc} \kappa_\cc^2 \epsilon, \quad s = (r+1/2) \land t.
    \end{equation*}
    Taking $\cc = 1/2$, we have $\Psi(4 - \log(1 - \cc)) \leq 5$ and $\kappa_\cc \leq 1.4$, which allows bounding the quantity in front of $\epsilon$.
\end{proof}

\section{Technical lemmas}\label{sec:technical_lemmas_appendix}

\subsection{Concentration of Hermitian operators}

In this section, we import results from \cite{regularized-erm-gsc} and \cite{blanchard}. The former provides a bound on $\norm{\HessEmp{\lambda}{-1/2}{\theta} \Hess{\lambda}{1/2}{\theta}}$. The latter provides a bound on $\norm{\HessEmp{\lambda}{-1}{\theta} \Hess{\lambda}{}{\theta}}$, which is more difficult to obtain. They use the fact that $\df_\lambda = \Tr \Hess{\lambda}{}{\theta} \Hess{}{}{\theta}$ for least square, but we can't use this very convenient relation here. Thus, we only use their result in the case $1/2 < r <1$, which makes optimal rate still possible. 

We will only use
\begin{equation*}
    \Tr \Hess{\lambda}{-1}{\theta} \HessEmp{}{}{\theta} \leq \frac{\cstBTwo(\theta)}{\lambda}.
\end{equation*}

\begin{proposition}[Concentration bound]\label{prop:main_concentration_bound_operators}
    Let $\delta \in (0, 1]$ and $\lambda > 0$. The following holds:
    \begin{align}
        \label{eq:prop_concentration_sqrt}
        n \geq 24 \frac{\cstBTwo(\theta)}{\lambda} \log \frac{8 \cstBTwo(\theta)}{\lambda \delta} \quad &\implies \quad \norm{\HessEmp{\lambda}{-1/2}{\theta} \Hess{\lambda}{1/2}{\theta}} \leq \sqrt{2}, \\
        \label{eq:prop_concentration_inv}
        n \geq 8 \frac{\cstBTwo(\theta)^2}{\lambda^2} \log^2 \frac{2}{\delta} \quad &\implies \quad \norm{\HessEmp{\lambda}{-1}{\theta} \Hess{\lambda}{}{\theta}} \leq 2, \\
        \label{eq:prop_concentration_HS}
        n \geq 2 \p{1 \lor \frac{4 \cstBTwo(\theta)^2}{\lambda^{2s}}} \log \frac{2}{\delta} \quad &\implies \quad \norm{\Hess{}{}{\theta} - \HessEmp{}{}{\theta}}_{HS} \leq \lambda^s,
    \end{align}
    where each bound hold with confidence $1 - \delta$. 
\end{proposition}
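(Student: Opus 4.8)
The plan is to prove each of the three implications separately, in every case by applying a Bernstein-type concentration inequality for sums of i.i.d.\ operators to the empirical average $\HessEmp{}{}{\theta} = \frac1n\sum_{i=1}^n \nabla^2\ell_{z_i}(\theta)$, whose mean is $\Hess{}{}{\theta}$; this is exactly how \cite{regularized-erm-gsc} and \cite{blanchard} obtain these bounds. Throughout I would use the elementary inequalities $\norm{A}\le\Tr A$ and $\norm{A}_{HS}\le\Tr A$ for positive semi-definite $A$, together with \cref{hyp:technical_assumptions}, which guarantees $\Tr\nabla^2\ell_z(\theta)\le\cstBTwo(\theta)$ almost surely (and in particular $\Tr\Hess{\lambda}{-1}{\theta}\HessEmp{}{}{\theta}\le\cstBTwo(\theta)/\lambda$, the effective-dimension proxy that replaces the ambient dimension).

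For \cref{eq:prop_concentration_sqrt} I would reproduce the argument of \cite{regularized-erm-gsc}: set $\zeta_i = \Hess{\lambda}{-1/2}{\theta}\nabla^2\ell_{z_i}(\theta)\Hess{\lambda}{-1/2}{\theta}$, which are i.i.d., positive semi-definite, satisfy $\norm{\zeta_i}\le\Tr\zeta_i\le\cstBTwo(\theta)/\lambda$ almost surely, and have mean $Q\eqdef\Hess{\lambda}{-1/2}{\theta}\Hess{}{}{\theta}\Hess{\lambda}{-1/2}{\theta}\preceq\II$ with $\Tr Q\le\cstBTwo(\theta)/\lambda$. An intrinsic-dimension operator Bernstein inequality then gives $\norm{\frac1n\sum_i\zeta_i - Q}\le\tfrac12$ with probability at least $1-\delta$ once $n\ge 24\frac{\cstBTwo(\theta)}{\lambda}\log\frac{8\cstBTwo(\theta)}{\lambda\delta}$. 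Since $\frac1n\sum_i\zeta_i - Q = \Hess{\lambda}{-1/2}{\theta}\HessEmp{\lambda}{}{\theta}\Hess{\lambda}{-1/2}{\theta} - \II$, this yields $\HessEmp{\lambda}{}{\theta}\succeq\tfrac12\Hess{\lambda}{}{\theta}$, hence (conjugating by $\Hess{\lambda}{-1/2}{\theta}$ and inverting) $\Hess{\lambda}{1/2}{\theta}\HessEmp{\lambda}{-1}{\theta}\Hess{\lambda}{1/2}{\theta}\preceq 2\II$; taking operator norms and using $\norm{\HessEmp{\lambda}{-1/2}{\theta}\Hess{\lambda}{1/2}{\theta}}^2 = \norm{\Hess{\lambda}{1/2}{\theta}\HessEmp{\lambda}{-1}{\theta}\Hess{\lambda}{1/2}{\theta}}$ finishes this part.

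For \cref{eq:prop_concentration_inv} I would use the identity $\HessEmp{\lambda}{-1}{\theta}\Hess{\lambda}{}{\theta} = \II + \HessEmp{\lambda}{-1}{\theta}\bigl(\Hess{}{}{\theta}-\HessEmp{}{}{\theta}\bigr)$, so that $\norm{\HessEmp{\lambda}{-1}{\theta}\Hess{\lambda}{}{\theta}}\le 1 + \tfrac1\lambda\norm{\HessEmp{}{}{\theta}-\Hess{}{}{\theta}}$ (using $\norm{\HessEmp{\lambda}{-1}{\theta}}\le\tfrac1\lambda$), and then invoke a self-adjoint operator Bernstein bound on $\HessEmp{}{}{\theta}-\Hess{}{}{\theta} = \frac1n\sum_i(\nabla^2\ell_{z_i}(\theta)-\Hess{}{}{\theta})$, whose summands are bounded in operator norm by $\cstBTwo(\theta)$ with variance proxy $\cstBTwo(\theta)^2$, to obtain $\norm{\HessEmp{}{}{\theta}-\Hess{}{}{\theta}}\le\lambda$ with probability at least $1-\delta$ under the stated condition; this is the bound imported from \cite{blanchard}. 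Finally, for \cref{eq:prop_concentration_HS} the operators $\xi_i = \nabla^2\ell_{z_i}(\theta)-\Hess{}{}{\theta}$ are i.i.d., centred, Hilbert--Schmidt, with $\norm{\xi_i}_{HS}\le 2\cstBTwo(\theta)$ almost surely and hence $\EE\norm{\xi_i}_{HS}^2\le 4\cstBTwo(\theta)^2$; a Hilbert-space Bernstein inequality applied to $\frac1n\sum_i\xi_i$ gives $\norm{\HessEmp{}{}{\theta}-\Hess{}{}{\theta}}_{HS}\le\lambda^s$ with probability at least $1-\delta$ once $n\ge 2\bigl(1\lor\frac{4\cstBTwo(\theta)^2}{\lambda^{2s}}\bigr)\log\frac2\delta$.

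The hard part will be \cref{eq:prop_concentration_inv}: as the authors stress, controlling the full inverse $\norm{\HessEmp{\lambda}{-1}{\theta}\Hess{\lambda}{}{\theta}}$ is genuinely stronger than squaring \cref{eq:prop_concentration_sqrt} (the norm of $\HessEmp{\lambda}{-1}{\theta}\Hess{\lambda}{}{\theta}$ is not the square of the norm of $\HessEmp{\lambda}{-1/2}{\theta}\Hess{\lambda}{1/2}{\theta}$, since $\HessEmp{\lambda}{-1}{\theta}\Hess{\lambda}{}{\theta}$ is not self-adjoint), and matching the stated constants and the $\log^2\frac2\delta$ dependence requires the careful Bernstein bookkeeping of \cite{blanchard}; the reduction above isolates the one inequality that actually needs it. The other two parts are routine once the almost-sure bounds and variance proxies listed above are in place.
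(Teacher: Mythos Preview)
Your proposal is essentially correct and mirrors the paper's own proof, which also defers \cref{eq:prop_concentration_sqrt} to \cite{regularized-erm-gsc}, \cref{eq:prop_concentration_inv} to \cite{blanchard}, and handles \cref{eq:prop_concentration_HS} by a direct Hilbert-space Bernstein computation. Two small points are worth noting.

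First, for \cref{eq:prop_concentration_HS} your moment bounds are slightly too loose to recover the stated constant: the paper uses $M=\cstBTwo(\theta)$ and $\sigma^2=\cstBTwo(\theta)^2$ in the Bernstein bound (via $\norm{\nabla^2\ell_z(\theta)}_{HS}\le\Tr\nabla^2\ell_z(\theta)\le\cstBTwo(\theta)$ and $\EE\norm{\xi_i}_{HS}^2\le\EE\norm{\nabla^2\ell_z(\theta)}_{HS}^2\le\cstBTwo(\theta)^2$), giving $\norm{\Hess{}{}{\theta}-\HessEmp{}{}{\theta}}_{HS}\le\frac{2\cstBTwo(\theta)\log(2/\delta)}{n}+\cstBTwo(\theta)\sqrt{\frac{2\log(2/\delta)}{n}}$; with your $4\cstBTwo(\theta)^2$ you would end up needing $n\ge 32\cstBTwo(\theta)^2\lambda^{-2s}\log(2/\delta)$ rather than the claimed $8\cstBTwo(\theta)^2\lambda^{-2s}\log(2/\delta)$.

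Second, for \cref{eq:prop_concentration_inv} your resolvent identity $\HessEmp{\lambda}{-1}{\theta}\Hess{\lambda}{}{\theta}=\II+\HessEmp{\lambda}{-1}{\theta}(\Hess{}{}{\theta}-\HessEmp{}{}{\theta})$ together with $\norm{\HessEmp{\lambda}{-1}{\theta}}\le 1/\lambda$ is a somewhat more elementary route than the paper's direct import of Proposition~5.4 in \cite{blanchard}. It works, and in fact (if you bound $\norm{\cdot}\le\norm{\cdot}_{HS}$ and reuse the third part to avoid the intrinsic-dimension factor that a raw operator Bernstein would introduce in infinite dimensions) it naturally yields a $\log(2/\delta)$ rather than $\log^2(2/\delta)$ dependence, so the stated hypothesis is \emph{a fortiori} sufficient. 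The paper's approach, by contrast, simply transplants Blanchard's argument after replacing the effective dimension $\df_\lambda$ by the cruder proxy $\cstBTwo(\theta)/\lambda$.
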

\begin{proof}
    The first equation is Lemma 6 of \cite{regularized-erm-gsc}. The second equation can be adapted from Proposition 5.4 of \cite{blanchard}, except that we use
    \begin{equation*}
        \Tr \Hess{\lambda}{}{\theta} \Hess{}{}{\theta} \leq \frac{\cstBTwo(\theta)}{\lambda}
    \end{equation*}
    instead of $\df_\lambda$. For the last inequality, use Bernstein inequality for random vectors. With probability $1 - \delta$:
    \begin{equation*}
        \norm{\Hess{}{}{\theta} - \HessEmp{}{}{\theta}}_{HS} \leq \frac{2 \cstBTwo(\theta) \log 2/\delta}{n} + \cstBTwo(\theta) \sqrt{\frac{2 \log 2/\delta}{n}}.
    \end{equation*}
    Assuming $n \geq 2 \log 2/\delta$, this bound becomes
    \begin{equation*}
        \norm{\Hess{}{}{\theta} - \HessEmp{}{}{\theta}}_{HS} \leq 2 \cstBTwo(\theta) \sqrt{\frac{2 \log 2/\delta}{n}}.
    \end{equation*}
    Let $s > 0$. Further requiring $n \geq 8 \cstBTwo(\theta) \lambda^{-2s} \log 2/\delta$ gives: 
    \begin{equation*}
        \norm{\Hess{}{}{\theta} - \HessEmp{}{}{\theta}}_{HS} \leq \lambda^{s}
    \end{equation*}
    which completes the proof.
\end{proof}

\subsection{Inequalities on Hermitian operators}

The following results are given in \cite{blanchard}. We redo the proof to track down and upper bound the constants which are discarded in the original paper. 

\begin{lemma}[Hermitian operator inequalities]\label{lem:hermitian_inequalities}
    Let $A, B$ be two non-negative self-adjoint operators on $\HH$. Assume $\norm{A}, \norm{B} \leq \kappa$, where $\norm{\cdot}$ denotes the operator norm. Then:
    \begin{align}
        \label{eq:lem_hermitian_ineq_diff_rleq1}
        \forall r \leq 1, \quad \norm{A^r - B^r} &\leq \norm{A - B}^r \\
        \label{eq:lem_hermitian_ineq_diff_rg1}
        \forall r > 1, \quad \norm{A^r - B^r} &\leq w(r) \norm{A - B} \\
        \label{eq:lem_hermitian_ineq_prod}
        \forall r \leq 1, \quad \norm{A^r B^r} &\leq \norm{AB}^r
    \end{align}
    with $r 2^{\intpart{r}+1} \kappa^r$. 
\end{lemma}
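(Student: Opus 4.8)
The plan is to prove the three inequalities separately, in order of increasing difficulty. For \eqref{eq:lem_hermitian_ineq_diff_rleq1} I would first reduce to $\norm{A-B}=1$ by homogeneity (if $A=B$ there is nothing to show): replacing $A,B$ by $A/\norm{A-B},B/\norm{A-B}$ rescales $A^r-B^r$ by $\norm{A-B}^{-r}$. With $\norm{A-B}=1$ we have $-\II\preceq A-B\preceq\II$, hence $A\preceq B+\II$ and $B\preceq A+\II$. Operator monotonicity of $x\mapsto x^r$ on $[0,\infty)$ for $r\in(0,1]$ (Löwner--Heinz) gives $A^r\preceq(B+\II)^r$ and $B^r\preceq(A+\II)^r$; since $B$ commutes with $\II$, ordinary functional calculus plus the scalar subadditivity $(\beta+1)^r\le\beta^r+1$ (valid for $\beta\ge0$, $r\le1$) yields $(B+\II)^r\preceq B^r+\II$, and symmetrically $(A+\II)^r\preceq A^r+\II$. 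Combining, $-\II\preceq A^r-B^r\preceq\II$, i.e. $\norm{A^r-B^r}\le1=\norm{A-B}^r$; rescaling restores the general statement.

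For \eqref{eq:lem_hermitian_ineq_prod} I would use the classical Cordes argument. Rescaling so that $\norm{AB}=1$, we get $\norm{(AB)^*(AB)}=\norm{BA^2B}=1$, hence $BA^2B\preceq\II$. Assuming first that $B$ is invertible and conjugating by the p.d. operator $B^{-1}$ gives $A^2\preceq B^{-2}$, and Löwner--Heinz with exponent $r\in(0,1]$ gives $A^{2r}=(A^2)^r\preceq(B^{-2})^r=B^{-2r}$, so $B^rA^{2r}B^r\preceq\II$, i.e. $\norm{A^rB^r}^2=\norm{B^rA^{2r}B^r}\le1$. The non-invertible case follows by replacing $B$ with $B+\varepsilon\II$, using continuity of the functional calculus, and letting $\varepsilon\to0$. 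Rescaling gives $\norm{A^rB^r}\le\norm{AB}^r$.

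For \eqref{eq:lem_hermitian_ineq_diff_rg1} write $r=m+\beta$ with $m=\intpart{r}\ge1$ and $\beta\in[0,1)$. When $\beta=0$ the telescoping identity $X^m-Y^m=\sum_{i=0}^{m-1}X^i(X-Y)Y^{m-1-i}$ gives $\norm{A^m-B^m}\le m\kappa^{m-1}\norm{A-B}$ directly. When $\beta\in(0,1)$ one cannot just apply \eqref{eq:lem_hermitian_ineq_diff_rleq1} to $A^\beta-B^\beta$, since that produces $\norm{A-B}^\beta$ rather than a term linear in $\norm{A-B}$; instead I would use the Balakrishnan--Kato integral representation $t^r=\frac{\sin\pi\beta}{\pi}\int_0^\infty s^{\beta-1}\frac{t^{m+1}}{s+t}\,\dd s$, so that $A^r-B^r=\frac{\sin\pi\beta}{\pi}\int_0^\infty s^{\beta-1}(g_s(A)-g_s(B))\,\dd s$ with $g_s(t)=t^{m+1}/(s+t)$, and estimate $\norm{g_s(A)-g_s(B)}$ by two complementary bounds. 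For large $s$, write $g_s(A)-g_s(B)=(A^{m+1}-B^{m+1})(s\II+A)^{-1}+B^{m+1}\big((s\II+A)^{-1}-(s\II+B)^{-1}\big)$ and combine the telescoping bound with the resolvent identity and $\norm{(s\II+A)^{-1}}\le1/s$ to get an $O(\kappa^m\norm{A-B}/s)$ estimate, which is integrable against $s^{\beta-1}$ near $+\infty$. For small $s$, use the polynomial division $g_s(t)=p_s(t)+(-s)^{m+1}/(s+t)$ with $p_s(t)=\sum_{j=0}^m(-s)^jt^{m-j}$, bound $\norm{p_s(A)-p_s(B)}$ by telescoping term-by-term and the remaining resolvent term by the resolvent identity, then integrate $s^{\beta-1}$ against the resulting polynomial in $s$ on $(0,\kappa]$. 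Splitting the integral at $s=\kappa$, adding the two contributions, and using $\sin(\pi\beta)/\pi\le\min(\beta,1-\beta)$ to absorb the $1/\beta$ and $1/(1-\beta)$ factors, careful bookkeeping of the numerical constants yields the claimed $w(r)=r\,2^{\intpart{r}+1}\kappa^r$.

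The main obstacle is the fractional case of \eqref{eq:lem_hermitian_ineq_diff_rg1}: every naive reduction (splitting off integer powers, or invoking \eqref{eq:lem_hermitian_ineq_diff_rleq1}) leaks a factor $\norm{A-B}^{\,r-\intpart{r}}$, which is strictly worse than $\norm{A-B}$ precisely in the regime $\norm{A-B}<1$ that the application in \cref{th:bound_on_bias} needs. Extracting a genuinely linear-in-$\norm{A-B}$ estimate forces one to exploit that $t\mapsto t^r$ is Lipschitz (indeed operator-Lipschitz) on the bounded interval $[0,\kappa]$ for $r>1$ even though it is not operator monotone there; the integral representation above is what makes this quantitative, and the rest of the work is purely keeping the constants under control.
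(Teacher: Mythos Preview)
Your treatments of \eqref{eq:lem_hermitian_ineq_diff_rleq1} and \eqref{eq:lem_hermitian_ineq_prod} are the standard proofs (L\"owner--Heinz plus scalar subadditivity for the first, the Cordes conjugation argument for the second); the paper simply cites Bhatia for both, so there is nothing to compare there.

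For \eqref{eq:lem_hermitian_ineq_diff_rg1} your route is genuinely different. The paper does not use an integral representation at all: after normalizing to $\kappa=1$ it expands the binomial series $(1-x)^r=\sum_{n\ge0}a_n x^n$, writes $A^r-B^r=f(\II-A)-f(\II-B)$, bounds each term by the telescoping estimate $\norm{(\II-A)^n-(\II-B)^n}\le n\norm{A-B}$, and then uses $n|a_n|=r|b_n|$ with $b_n$ the coefficients of $(1-x)^{r-1}$ to reduce to bounding $\sum_n|b_n|$; the sign pattern of the $b_n$ stabilizes for $n>\intpart{r}$, which lets one control the tail by $\lim_{x\to1}g(x)$ and the first $\intpart{r}$ terms by binomial coefficients, giving $\sum|b_n|\le2^{\intpart{r}+1}$. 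This is entirely elementary and the constant $r\,2^{\intpart{r}+1}$ drops out with essentially no work. Your Balakrishnan--Kato approach is correct in principle and has the conceptual advantage of making the operator-Lipschitz nature of $t\mapsto t^r$ on $[0,\kappa]$ transparent, but the phrase ``careful bookkeeping \ldots\ yields the claimed $w(r)$'' hides a computation you have not actually done; the integral splitting with resolvent and polynomial-division estimates will produce a constant of the right order in $r$ and $\kappa$, but there is no reason to expect it to be exactly $r\,2^{\intpart{r}+1}\kappa^r$. If matching the paper's constant matters for the downstream use in \cref{th:bound_on_bias}, the power-series argument is both shorter and sharper.
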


\begin{proof}
    For the first point, refer to \cite{bhatia2013matrix} Theorem X.1.I, Eq. (X.2). For the third point, refer to Theorem IX.2.1 of the same book. It is also known as Cordes inequality \cite{Fujii1993NormIE}. The proofs involve positive semidefinite matrices but are directly applicable to non-negative self-adjoint Hermitian operators.

    For the second point, assume $\norm{A}, \norm{B} \leq 1$. Consider the function $f(x) = (1-x)^r$, defined for $\absv{x} \leq 1$. Its Taylor expansion reads:
    \begin{equation*}
        f(x) = \sum_{n \geq 0} a_n x^n, \quad a_n = \frac{(-1)^n}{n!}\prod_{k=1}^n (r - k + 1)
    \end{equation*}
    We have:
    \begin{equation*}
        \absv{\frac{a_{n+1} x^{n+1}}{a_n x^n}} = \absv{\frac{r - n}{n+1} \cdot x} \underset{n \to \infty}{\to} \absv{x}
    \end{equation*}
    so applying d'Alembert's rule, we have that the radius of the serie is $1$. Now, we have that:
    \begin{align*}
        A^r - B^r = f(\II - A) - f(\II - B) = \sum_{n\geq 0} a_n \csb{(\II - A)^n - (\II - B)^n} \\
        \implies \norm{A^r - B^r} \leq \sum_{n\geq 0} \absv{a_n} \norm{(\II - A)^n - (\II - B)^n}
    \end{align*}
    Using that $(\II - A)^n - (\II - B)^n = (\II - A) (\II - A)^{n-1} - (\II - B)^{n-1} - (B- A) (\II - B)^{n-1}$, we obtain:
    \begin{align*}
        \norm{(\II - A)^n - (\II - B)^n} 
        &\leq \norm{(\II - A) (\II - A)^{n-1} - (\II - B)^{n-1}} + \norm{(B- A) (\II - B)^{n-1}} \\
        &\leq \norm{(\II - A)^{n-1} - (\II - B)^{n-1}} + 1 \\
        &\leq n \norm{A - B}
    \end{align*}
    Denoting $g(x) = (1-x)^{r-1} = \sum b_n x^n$, we have $f'(x) = -r g(x)$ which gives $n \absv{a_n} = r \absv{b_n}$. Then:
    \begin{align*}
        \norm{A^r - B^r} 
        &\leq \norm{A - B} \sum_{n\geq 0} n \absv{a_n} \\
        &\leq r \norm{A - B} \sum_{n\geq 0} \absv{b_n}
    \end{align*}
    We can somewhat painfully upper bound this last term. Notice that for $n > r$, all the $b_n$ have the same sign $s = (-1)^{\intpart{r}}$. Thus, for $N>r$:
    \begin{align*}
        \sum_{n=0}^N \absv{b_n} 
        &= \sum_{n=0}^\intpart{r} \absv{b_n} + s \sum_{n=\intpart{r}}^N b_n \\
        &= \sum_{n=0}^\intpart{r} \absv{b_n} + s \lim_{x \to 1} \sum_{n=\intpart{r}}^N b_n x^n \\
        &\leq 2 \sum_{n=0}^\intpart{r} \absv{b_n} + \lim_{x \to 1} g(x) \\
        &\leq 2 \sum_{n=0}^\intpart{r} \frac{1}{n!}\prod_{k=1}^n (r - k + 1) \\
        &\leq 2 \sum_{n=0}^\intpart{r} \p{\begin{matrix}
            \intpart{r} \\ n
        \end{matrix}} = 2^{\intpart{r}+1}
    \end{align*}

    Finally, apply these properties to $A/\kappa, B/\kappa$ to obtain in general:
    \begin{equation*}
        \norm{A^r - B^r} \leq r 2^{\intpart{r}+1} \kappa^r \norm{A - B}
    \end{equation*}
\end{proof}

\subsection{Basic calculus}

This is a few line of computation, but useful in multiple places.
\begin{lemma}[Bound on residual of IT's spectral function]\label{lem:bound_residual_it_spectral_function}
    Let $r, t > 0$. Consider the following function defined on $\csb{0, \kappa}$: 
    \begin{equation*}
        h(\sigma) = \p{\frac{\lambda}{\lambda+\sigma}}^t \sigma^r.
    \end{equation*}
    Then:
    \begin{equation*}
        \sup_{0 \leq \sigma \leq \kappa} h(\sigma) \leq 
        \begin{cases}
            \p{r \cdot \frac{\lambda}{t}}^r \quad &\iftext \; r < t \\
            \p{\frac{\lambda}{\kappa + \lambda}}^t \kappa^r  \quad &\otwtext.
        \end{cases}
    \end{equation*}
\end{lemma}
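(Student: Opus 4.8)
The plan is to analyse $h$ through its logarithmic derivative on $(0,\kappa]$. Since $r>0$ we have $h(0)=0$, so the supremum is attained either at an interior critical point of $h$ or at the endpoint $\sigma=\kappa$. Writing $\log h(\sigma) = t\log\lambda - t\log(\lambda+\sigma) + r\log\sigma$, one gets
\[
\frac{h'(\sigma)}{h(\sigma)} \;=\; \frac{r}{\sigma} - \frac{t}{\lambda+\sigma} \;=\; \frac{r\lambda - (t-r)\sigma}{\sigma(\lambda+\sigma)},
\]
so on $(0,\kappa]$ the sign of $h'$ equals the sign of $r\lambda-(t-r)\sigma$.

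First I would dispose of the case $r\ge t$. Then $r\lambda-(t-r)\sigma = r\lambda+(r-t)\sigma \ge 0$ for every $\sigma\ge0$, hence $h$ is nondecreasing on $[0,\kappa]$ and $\sup_{0\le\sigma\le\kappa}h(\sigma)=h(\kappa)=\left(\tfrac{\lambda}{\kappa+\lambda}\right)^{t}\kappa^{r}$, which is exactly the claimed bound (indeed with equality).

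For the case $r<t$, the numerator $r\lambda-(t-r)\sigma$ is positive for $\sigma<\sigma^\star$ and negative for $\sigma>\sigma^\star$, where $\sigma^\star\eqdef \tfrac{r\lambda}{t-r}>0$; hence $\sigma^\star$ is the unique global maximiser of $h$ on $(0,\infty)$, and in particular $\sup_{0\le\sigma\le\kappa}h(\sigma)\le h(\sigma^\star)$ whether or not $\sigma^\star\le\kappa$. Using $\lambda+\sigma^\star=\tfrac{\lambda t}{t-r}$, i.e. $\tfrac{\lambda}{\lambda+\sigma^\star}=\tfrac{t-r}{t}$, I would compute
\[
h(\sigma^\star) \;=\; \left(\frac{t-r}{t}\right)^{t}\left(\frac{r\lambda}{t-r}\right)^{r} \;=\; \left(\frac{r\lambda}{t}\right)^{r}\left(\frac{t-r}{t}\right)^{t-r},
\]
and then invoke the elementary bound $\left(\tfrac{t-r}{t}\right)^{t-r}\le1$, valid because $t-r>0$ gives $c^{\,t-r}\le1$ for $c=\tfrac{t-r}{t}\in(0,1)$. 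This yields $\sup_{0\le\sigma\le\kappa}h(\sigma)\le\left(\tfrac{r\lambda}{t}\right)^{r}$, as required.

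The computation is routine; the only point requiring a little care — and the one that explains why the first branch of the bound is stated as $\left(\tfrac{r\lambda}{t}\right)^{r}$ rather than as an endpoint value — is to notice that $\sigma^\star$ need not lie in $[0,\kappa]$, in which case $h$ is still monotone increasing on the whole interval and $\sup_{[0,\kappa]}h=h(\kappa)\le h(\sigma^\star)$, so the stated (slightly loose) bound still holds.
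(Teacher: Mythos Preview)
Your proof is correct and follows essentially the same approach as the paper: compute the derivative, locate the unique critical point $\sigma^\star=\tfrac{r\lambda}{t-r}$ when $r<t$, and use monotonicity when $r\ge t$. Your final step is in fact cleaner than the paper's: after obtaining $h(\sigma^\star)=\big(\tfrac{r\lambda}{t}\big)^r\big(\tfrac{t-r}{t}\big)^{t-r}$, you simply observe $\big(\tfrac{t-r}{t}\big)^{t-r}\le1$ directly, whereas the paper routes this through the intermediate inequalities $\big(\tfrac{t-r}{t}\big)^t\le e^{-r}$ and $\big(\tfrac{rt}{e(t-r)}\big)^r\le r^r$ to reach the same conclusion. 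You also make explicit the edge case $\sigma^\star\notin[0,\kappa]$, which the paper leaves implicit.
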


\begin{proof}
    $h$ is differentiable and
    \begin{align*}
        h'(\sigma) &= \frac{\lambda^t \sigma^{r-1}}{\p{\sigma+\lambda}^{t+1}} \csb{\sigma(r-t) + r\lambda}.
    \end{align*}
    If $t \leq r$ , the regularization saturates and the maximum is in $\hat{\sigma} = \kappa$, which gives 
    \begin{equation*}
        \sup_\sigma h(\sigma) \leq \p{\frac{\lambda}{\kappa + \lambda}}^t \kappa^r \underset{\lambda \to 0}{\sim} \lambda^t \kappa^{r-t}.
    \end{equation*}
    Otherwise, if $t > r$, the maximum is in $\hat{\sigma} = \frac{r\lambda}{t - r}$ and it reads
    \begin{equation}\label{eq:basic_calculus_glue_1}
    \begin{aligned}
        \sup_\sigma h(\sigma) 
        &\leq \p{\frac{t-r}{t}}^t \p{\frac{r \lambda}{t-r}}^r \\
        &= \p{\frac{t-r}{t}}^{t-r} r^r \p{\frac{\lambda}{t}}^r.
    \end{aligned}
    \end{equation}
    We can rewrite the prefactor in front of $\p{\nicefrac{\lambda}{t}}^r$. First,
    \begin{equation*}
        \p{\frac{t-r}{t}}^{t-r} r^r 
        = \p{\frac{t-r}{t}}^t \p{\frac{rt}{t-r}}^r .
    \end{equation*}
    Then, use
    \begin{equation}\label{eq:basic_calculus_glue_2}
        \p{\frac{t-r}{t}}^t \leq e^{-r} \quad \text{when} \quad r < t.
    \end{equation}
    Also, 
    \begin{equation}\label{eq:basic_calculus_glue_3}
        \p{\frac{rt}{e(t-r)}}^r = \p{e\p{\frac{1}{r} - \frac{1}{t}}}^{-r} \leq (e/r)^{-r} \leq r^r.
    \end{equation}
    Use \cref{eq:basic_calculus_glue_2} and \cref{eq:basic_calculus_glue_3} on the upper bound of \cref{eq:basic_calculus_glue_1}, and the result is obtained.
\end{proof}

\section{Experiments}\label{sec:numerics_appendix}

\subsection{Technical details}

\paragraph{Splines.}
The spline kernel of order $q$ is defined on $\csb{0,1}^2$ as
\begin{equation*}
    \Lambda_q(x, z) = \sum_{k \in \ZZ} \frac{e^{2i\pi k(x-z)}}{\absv{k}^q}. \\
\end{equation*}
A closed form expression is available when $q$ is an even integer:
\begin{equation*}
    \Lambda_{q}(x, z) = 1 + \frac{(-1)^{q/2-1}}{q!} B_{q}(\absv{x - z}).
\end{equation*}
$B_{q}$ are Bernoulli polynomial of order $q$. They can be implemented easily. We also have the relation
\begin{equation*}
    \dotprod{\Lambda_q(x, \cdot)}{\Lambda_{q'}(x', \cdot)}_{\LLspace} = \Lambda_{q + q'}(x, x')
\end{equation*}

Our choice of $r, \alpha$ reflects the constraints on $\alpha$ and $(r+1/2)\alpha +1/2$ to be even integers. 

\paragraph{Regularization.}
For both least square and logistic regression, the regularization $\lambda$ is chosen among $50$ log spaced values between $10^{-4}$ and $1$. 

\paragraph{Resources.}
Computation was carried by a \texttt{Intel(R) Xeon(R) CPU E5-1620 v2 @ 3.70GHz}, with 32GB of RAM.

\subsection{Simulations with least square}\label{sec:numerics_ls_results}
\paragraph{Estimating $\estEmp{\lambda}{t}$.}
We leverage the very convenient filter interpretation with least square. We diagonalize the kernel matrix $K = U D U^\top$ once, then evaluate the estimator with
\begin{align*}
    \estEmp{\lambda}{t} &= \sum_{i=1}^n \alpha_i \phi(x_i), \\
    \alpha &= \frac{1}{n} U g_\lambda^t(D/n) D^\top y,
\end{align*}
where $g_\lambda^t$ is IT's filter, defined in \eqref{eq:it_spectral_function}. 

\paragraph{Simulations.}
The simulations are reported in \cref{fig:ls_results_sample_complexity,fig:ls_results_optimal_reg}. The same broad conclusion as for the classification task with the logistic loss apply. Surprisingly, $\IT{8}$ seems to suffer from higher constant than its counterpart with low $t$.

\begin{figure}[t]
    \centering
    \begin{subfigure}[t]{0.49\linewidth}
        \centering
        \includegraphics{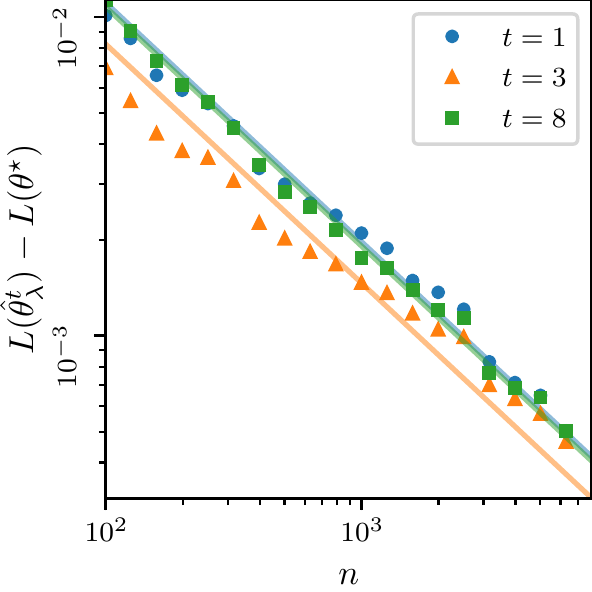}
    \end{subfigure}
    \hfill
    \begin{subfigure}[t]{0.49\linewidth}
        \centering
        \includegraphics{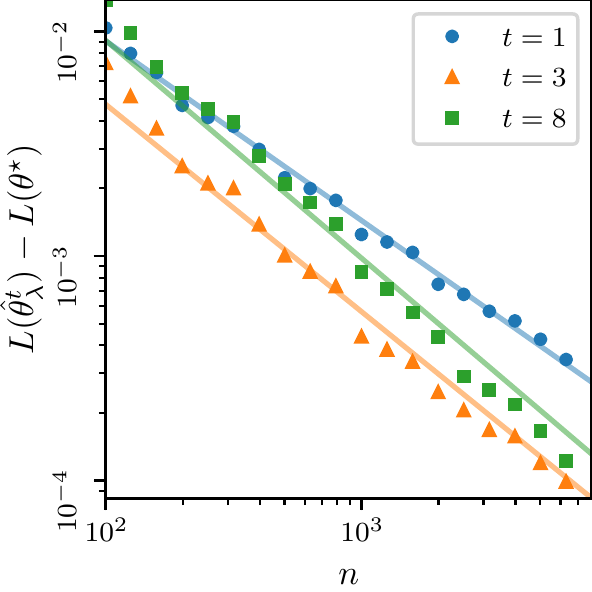}
    \end{subfigure}
    \caption{Excess risk with \textbf{least square} for various Iterated Tikhonov estimator, function of $n$. \textbf{Colors}: $t=1$ (Tikhonov) estimator is shown in orange; $t=2, 3$ in green, red. \textbf{Left}: from a difficult problem, $r=1/4, \alpha=2$. \textbf{Right}: easy problem, $r=41/4, \alpha=2$. Plain lines are predicted by theory, with slope $-\nicefrac{\alpha(1 + 2s)}{1 + \alpha(1 + 2s)}$, $s = \min\cb{r, t-1/2}$ (see main text). All plots are averaged over $100$ different initialization.}
    \label{fig:ls_results_sample_complexity}
\end{figure}

\begin{figure}[t]
    \begin{subfigure}[t]{0.49\linewidth}
        \centering
        \includegraphics{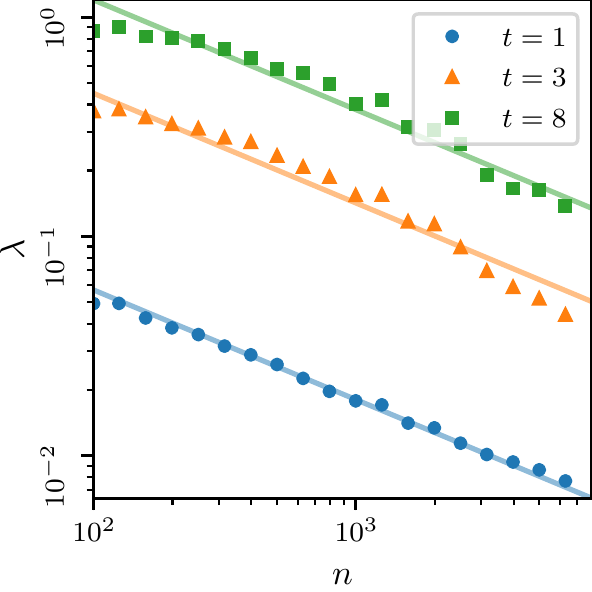}
    \end{subfigure}
    \hfill
    \begin{subfigure}[t]{0.49\linewidth}
        \centering
        \includegraphics{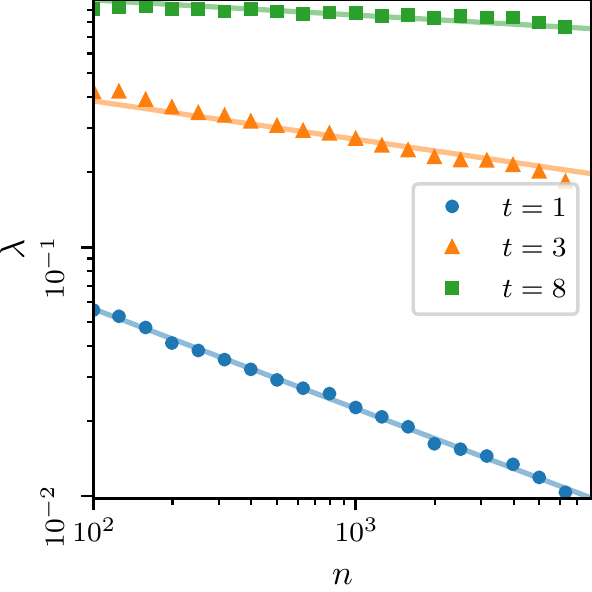}
    \end{subfigure}
    \caption{Chosen regularization $\lambda$ with \textbf{least square} for various Iterated Tikhonov estimator, function of $n$. \textbf{Colors}: $t=1$ (Tikhonov) estimator is shown in orange; $t=3, 8$ in green, red. \textbf{Left}: from a difficult problem, $r=1/4, \alpha=2$. \textbf{Right}: easy problem, $r=41/4, \alpha=2$. Plain lines are predicted by theory, with slope $-\nicefrac{\alpha}{1 + \alpha(1 + 2s)}$, $s = \min\cb{r, t-1/2}$ (see main text). All plots are averaged over $100$ different initialization.}
    \label{fig:ls_results_optimal_reg}
\end{figure}

\subsection{Synthetic binary task}\label{sec:numerics_binary}

\paragraph{Derivation of the noise.}
We have $\optimum(x) = \Lambda_{(r+1/2)\alpha+\epsilon}(x, 0)$ a function of smoothness $r+1/2$ in $\LLspace$. We want to use logistic regression. Thus, we need to choose the noise $\rho(y \mid x)$ so that
\begin{equation*}
    \optimum(x) = \arg \min_z \int_{\YY} \ell(y, z) \dd \rho_{y \mid x}(y).
\end{equation*}
To keep things simple, we restrict the output space to $\YY = \cb{-1, 1}$. Denote $a(x) = \PP(y=1 \mid x)$. We will have $\PP(y=-1 \mid x) = 1-a(x)$. Now we need to choose $a$ s.t
\begin{equation*}
    a \in \csb{0, 1} \quad \andtext \quad 
    \optimum(x) = \arg \min_z h(z) \eqdef \log(1 + e^z)(1-a) + \log(1 + e^{-z})a.
\end{equation*}
Having $a >0, 1-a>0$ implies that $h$ has a unique minimizer $z^*$. Then
\begin{equation*}
    h'(z) = \frac{1}{1 + e^z}\p{(1 - a)e^z - a} \implies a = \frac{e^{z^*}}{1 + e^{z^*}}.
\end{equation*}
Having required that $\optimum(x) = \arg \min_z h(z) \eqdef z^*$, we can use the following output distribution:
\begin{align*}
    \YY &= \cb{-1, 1} \\
    \PP(y = 1 \mid x) &= \frac{1}{1 + e^{-\optimum(x)}} \\
    \PP(y = -1 \mid x) &= \frac{1}{1 + e^{+\optimum(x)}} 
\end{align*}
which, in turn, ensures that $a(x) \in \csb{0, 1}$. 

\paragraph{Newton or first-order methods.}
In practice, the proximal operator is evaluated with a Newton method, or we use the toolbox Cyanure for big $n$ \cite{mairal2019cyanure}. Both are used with tolerance $10^{-10}$, that is machine precision for single precision.
Generally speaking, first-order methods are considered more performant than Newton methods. However, both practical and theoretical considerations motivate the use of second-order scheme in our statement of \cref{prop:error_propagation_proximal_sequence_main}. Firstly, preconditionated iterative solver such as the one used in \cite{NEURIPS2019_60495b4e} provide very efficient results for ill-conditioned problems. Secondly, the analysis of GSC loss functions is well-suited to second-order scheme, as the Newton decrement is a natural quantity to keep track of the optimization error. Measuring the error differently would require additional assumption on the loss function. 

\paragraph{Estimating the excess risk.}
The excess risk is estimated with Monte Carlo sampling, with $10^4$ points:
\begin{equation*}
    ER(\theta) - ER(\optimum) \approx \frac{1}{n_{MC}}\sum_{i=1}^{n_{MC}} \frac{1}{1 + e^{-\optimum(x_i)}} \log\p{\frac{1 + e^{-\theta(x_i)}}{1 + e^{-\optimum(x_i)}}} + \frac{1}{1 + e^{\optimum(x_i)}} \log \p{ \frac{1 + e^{\theta(x_i)}}{1 + e^{\optimum(x_i)}}}
\end{equation*}

\paragraph{Additional results.}
We report here the regularization $\lambda$ chosen function of $n$ and $t$ for various IT regularized estimators. We confirm that the penalty used for IT is larger than of Tikhonov, to compensate for the fitting induced by the additional proximal steps. We also compare the excess risk achieved by IT with the excess risk of Tikhonov, and observe consistent improvement for easy task with a sufficiently high number of samples. 

\begin{figure}[p]
    \centering
    \begin{subfigure}[t]{.49\linewidth}
        \centering
        \includegraphics{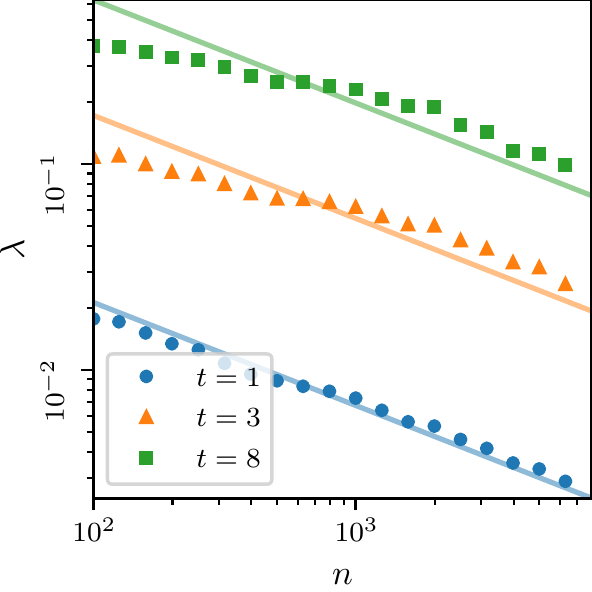}
    \end{subfigure}
    \hfill
    \begin{subfigure}[t]{.49\linewidth}
        \centering
        \includegraphics{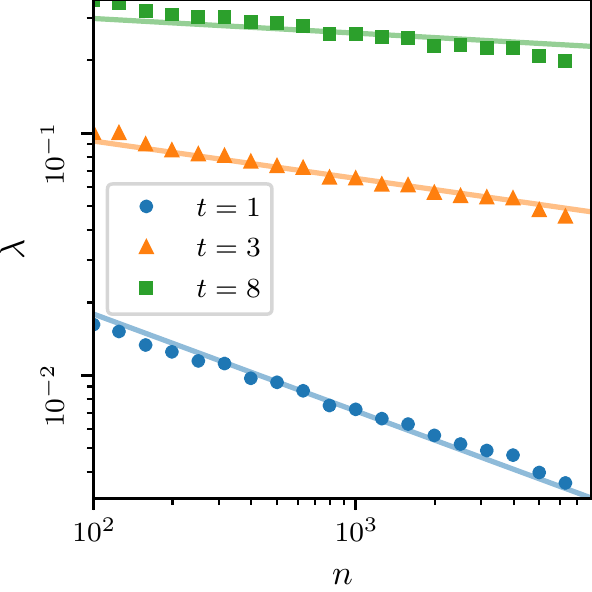}
    \end{subfigure}
    \caption{Chosen regularization $\lambda$ for various Iterated Tikhonov estimator, function of $n$. \textbf{Colors}: $t=1$ (Tikhonov) estimator is shown in orange; $t=3, 8$ in green, red. \textbf{Left}: from a difficult problem, $r=1/4, \alpha=2$. \textbf{Right}: easy problem, $r=41/4, \alpha=2$. Plain lines are predicted by theory, with slope $-\nicefrac{\alpha}{1 + \alpha(1 + 2s)}$, $s = \min\cb{r, t-1/2}$ (see main text). All plots are averaged over $100$ different initialization.}
    \label{fig:simulations_optimal_reg}
\end{figure}

\begin{figure}[p]
    \centering
    \begin{subfigure}[t]{.49\linewidth}
        \centering
        \includegraphics{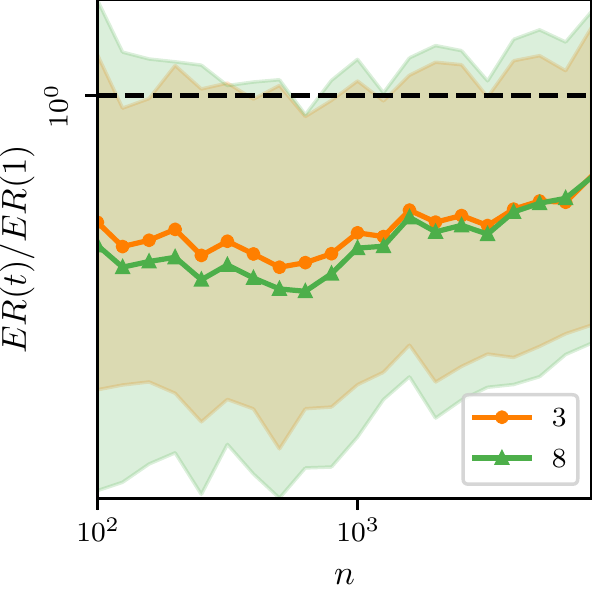}
    \end{subfigure}
    \hfill
    \begin{subfigure}[t]{.49\linewidth}
        \centering
        \includegraphics{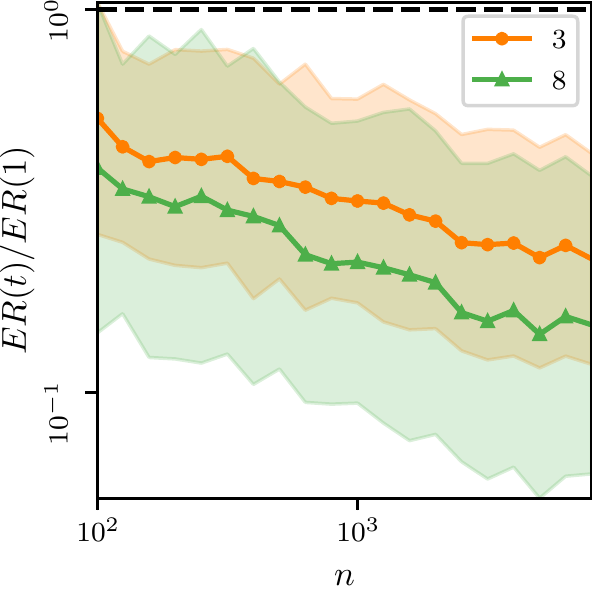}
    \end{subfigure}
    \caption{Ratio of IT's excess risk over Tikhonov's excess risk, function of $n$. \textbf{Left}: from a difficult problem, $r=1/4, \alpha=2$. \textbf{Right}: easy problem, $r=41/4, \alpha=2$. Whereas we expect the ratio to be consistently lower than 1, IT performs worse than Tikhonov in isolated cases, probably due to the optimization process and the chosen regularization path. Yet, it provides lower excess risk than Tikhonov overall, with up to an order of magnitude of improvement with as few as 1000 samples. All plots are averaged over $100$ different initialization.}
    \label{fig:it_beats_tikhonov}
\end{figure}

\end{document}